\documentclass{article}

\usepackage{preferences} 

\usepackage{CustomCommands}
\Crefname{appendix}{Appendix}{Appendices}

\usepackage[
    backend=biber,
    style=ieee,
    citestyle=numeric-comp,
    giveninits=true,
    url=false,
    eprint=false
]{biblatex}

\DeclareFieldFormat{sentencecase}{#1} 

\newcommand{\papertitle}{Learning Adaptive SED for heterogeneous load balancing}

\title{\papertitle}

\author{Sanne van Kempen, Jaron Sanders, Fiona Sloothaak, Maarten G. Wolf}

\hypersetup{
    pdfauthor={Sanne van Kempen, Jaron Sanders, Fiona Sloothaak, Maarten G.\ Wolf},
    pdftitle={\papertitle},
    pdfsubject={\papertitle},
    pdfkeywords={queueing, load balancing, machine learning},
}

\date{}

\begin{document}
\maketitle

\addcontentsline{toc}{section}{Abstract}
\begin{abstract}
	We study a two-server load balancing system with heterogeneous service rates that are \emph{a priori} unknown to the dispatcher. 
The goal is to route customers according to the Shortest--Expected--Delay (SED) policy, but this requires knowledge of the service rates. 
Empirical policies that route based on estimates perform poorly: due to estimation error, the empirical policy disagrees with the oracle on an infinite region of the state space. 
We propose an online learning algorithm that converges to SED while learning the service rates. 
The algorithm carefully balances empirical SED routing with forced exploration phases that guarantee sufficient sampling of both servers.
We prove that our algorithm achieves finite regret; this differs from classical Multi-Armed Bandit settings where regret typically grows logarithmically in time. 
Finally, numerical experiments demonstrate the performance of our algorithm and highlight the regimes in which forced exploration is especially beneficial.

\end{abstract}

\section{Introduction}%
\label{sec:introduction}

Real-world service systems such as data centers, cloud computing networks, and customer contact centers operate with heterogeneous servers.
A critical question in such a system is how to route arriving customers to servers in order to minimize customer delays.
To this end, load balancing policies have been studied in the literature~\citep{HarcholBalter2013,vanderBoor2017,Cardinaels2024}.
In practice, however, service rates may (initially) be unknown to the dispatcher, making it hard to route perfectly.
For example, service may be outsourced to third-party providers who do not share their exact service capacities~\citep{Armbrust2010}, or servers have auxiliary tasks that are subject to privacy constraints or communication delays~\citep{Choudhury2021}.
These factors lead to uncertainty in the available service rates.

This paper studies the load balancing system with two heterogeneous servers shown in \Cref{fig:loadbalancing_2_queues}.
Each server has an infinite buffer.
Arriving customers must be routed immediately upon arrival to one of the servers, but the service rates $\mu_1 > 0$ and $\mu_2 > 0$ are \emph{a priori} unknown to the dispatcher.
The objective is to minimize the delay by routing customers as efficiently as possible, despite uncertainty about the service rates.

\paragraph{The SED policy.}
Even if the service rates are known, finding a policy that minimizes the average delay in heterogeneous systems is still an open problem~\citep{Gardner2021,Bhambay2022,Hyyti2017}.
This has motivated the use of heuristic policies that are easy to implement and perform well empirically.
One such policy is \gls{SED}~\citep{Sassen1997}.
This policy routes each customer to the server with the smallest expected delay (the time between arrival and service completion).

In Markovian service systems, \gls{SED} routes customers to the server that minimizes the ratio $(q_i+1)/\mu_i$, where $q_i \geq 0$ is the number of customers at server $i$ at the time of a customer arrival.
If two ratios are equal, both servers are optimal choices.
Such ties may be resolved using a fixed tie-breaking rule or with randomized routing.
Observe that implementing \gls{SED} requires knowledge of the service rates. 

In the two-server system that we study, the decision rule is specified by the ratio $r^* = \mu_2/\mu_1$.
Indeed, we route to server one if $r^* (q_1+1)$ is less than $q_2+1$ and server two otherwise.
The line $q_2 = r^*(q_1+1)-1$ partitions the state space into two parts as shown in \Cref{fig:grid_SED}.
In states above the line, \gls{SED} routes customers to server one, and otherwise to server two.

\begin{figure}[h]
	\centering
	\begin{subfigure}[t]{.25\linewidth}
		\centering
		\begin{tikzpicture}[
				scale = 1,
				transform shape,
				dispatcher/.style={rectangle, minimum size = .5cm, thick,draw},
				server/.style={circle, minimum size = .8cm, thick,draw}
			]
			\node[dispatcher] at (0,0) (dis) {};
			\draw[thick,<-,] (dis.north) --++ (0,.4) node[pos=1, yshift=2mm] {$\lambda$};
			\foreach \x/\y in {1/-.8,2/.8}{ 
					\node[server, below of = dis, yshift = -1.5cm, xshift=\y cm] (server\x) {};
					\node at (server\x) {$\mu_{\x}$};
					\node[draw = none, above of = server\x, yshift=-.6cm] (queue\x) {};
					\draw[thick] (queue\x.center) --++(-.4,0) --++(0,1.2); 
					\draw[thick] (queue\x.center) --++(.4,0) --++(0,1.2);
				}
			\foreach \x in {.1,.2,.3,.4}{ 
					\draw[thick] (queue1.center) ++(-.4,\x) --++(.8,0);
				}
			\foreach \x in {.1,.2,.3,.4,.5,.6,.7}{ 
					\draw[thick] (queue2.center) ++(-.4,\x) --++(.8,0);
				}

			\draw[thick,->,dashed] (dis.south) --++ (.6,-.5);
			\draw[thick,->,dashed] (dis.south) --++ (-.6,-.5);

		\end{tikzpicture}
		\caption{A load balancing system with two servers, Poisson arrivals with rate $\lambda$, and exponentially distributed service times with rates $\mu_1$ and $\mu_2$. }%
		\label{fig:loadbalancing_2_queues}
	\end{subfigure}
	\hfill
	\begin{subfigure}[t]{.35\linewidth}
		\centering
		\begin{tikzpicture}[scale=0.57, line cap=round, line join=round]
			\def\mone{7}
			\def\mtwo{11}
			\def\r{2.5} 

			\draw[->] (0,0) node[below left] {0} -- (\mone+0.6,0) node[below right] {$q_1$};
			\draw[->] (0,0) -- (0,\mtwo+0.6) node[above left] {$q_2$};

			\draw[dashed,thick] (\mone,0) -- (\mone,\mtwo);
			\draw[dashed,thick] (0,\mtwo) -- (\mone,\mtwo);

			\begin{scope}
				\clip (0,0) rectangle (\mone,\mtwo);
				\draw[very thick,black]
				plot[domain=0:\mone] (\x, {\r*(\x+1)-1});
			\end{scope}

			\draw[step=1,gray!30] (0,0) grid (\mone,\mtwo);

			\foreach \x in {0,...,\mone}{
					\foreach \y in {0,...,\mtwo}{
							\pgfmathsetmacro{\yline}{\r*(\x+1)-1}
							\pgfmathsetmacro{\eps}{1e-2}
							\pgfmathsetmacro{\diff}{\y - \yline}
							\ifdim \diff pt < -\eps pt
								\draw[blue, line width=1pt] (\x,\y) circle (2pt);
							\else\ifdim \diff pt > \eps pt
									\fill[red] (\x,\y) circle (2pt);
								\else
									\node[
										star,
										star points=5,
										draw=black,
										line width=0.8pt,
										fill=white,
										inner sep=1.5pt
									] at (\x,\y) {};
								\fi\fi
						}
				}

			\coordinate (P1) at (5,4);
			\draw[thick,->] (P1) -- ++(0,0.9) node[midway, right] {$\lambda$};   
			\draw[thick,->] (P1) -- ++(0,-0.9) node[midway, right] {$\mu_2$};  
			\draw[thick,->] (P1) -- ++(-0.9,0) node[midway, below] {$\mu_1$};  

			\coordinate (P2) at (1,8);
			\draw[thick,->] (P2) -- ++(0.9,0) node[midway, above] {$\lambda$};   
			\draw[thick,->] (P2) -- ++(0,-0.9) node[midway, right] {$\mu_2$};  
			\draw[thick,->] (P2) -- ++(-0.9,0) node[midway, above] {$\mu_1$};  

			\node[draw,rounded corners,fill=blue!10,anchor=north west,inner sep=3pt]
			at (3.8,1.6) {$\frac{q_2+1}{q_1+1}<r^*$}; 
			\node[draw,rounded corners,fill=red!10,anchor=south east,inner sep=3pt]
			at (3.2,9.4) {$\frac{q_2+1}{q_1+1}> r^*$}; 

			\pgfmathsetmacro{\angle}{atan(\r)}
			\node[
				rotate=\angle,
				anchor=west,
				rounded corners=1pt,
				inner sep=2pt
			]  at (2,{\r*(1.4+1)-1}) {\large $r^*(q_1+1)-1$}; 

		\end{tikzpicture}
		\caption{State space and decision boundary of \gls{SED} for $r^*=2.5$.  Transition rates are shown for two states. For states on the line, shown as stars, any decision is optimal. }%
		\label{fig:grid_SED}
	\end{subfigure}
	\hfill
	\begin{subfigure}[t]{.35\linewidth}
		\centering
		\begin{tikzpicture}[scale=0.57, line cap=round, line join=round]
			\def\mone{7}
			\def\mtwo{11}
			\def\r{2.5} 
			\def\rn{3.5} 

			\draw[->] (0,0) node[below left] {0} -- (\mone+0.6,0) node[below right] {$q_1$};
			\draw[->] (0,0) -- (0,\mtwo+0.6) node[above left] {$q_2$};

			\draw[dashed,thick] (\mone,0) -- (\mone,\mtwo);
			\draw[dashed,thick] (0,\mtwo) -- (\mone,\mtwo);

			\begin{scope}
				\clip (0,0) rectangle (\mone,\mtwo);
				\draw[very thick,black]
				plot[domain=0:\mone] (\x, {\r*(\x+1)-1});
			\end{scope}
			\begin{scope}
				\clip (0,0) rectangle (\mone,\mtwo);
				\draw[very thick,black]
				plot[domain=0:\mone] (\x, {\rn*(\x+1)-1});
			\end{scope}

			\draw[step=1,gray!30] (0,0) grid (\mone,\mtwo);

			\pgfmathsetmacro{\eps}{1e-2}

			\foreach \x in {0,...,\mone}{
					\foreach \y in {0,...,\mtwo}{
							\pgfmathsetmacro{\ylineA}{\r*(\x+1)-1}
							\pgfmathsetmacro{\ylineB}{\rn*(\x+1)-1}

							\pgfmathsetmacro{\ylow}{min(\ylineA,\ylineB)}
							\pgfmathsetmacro{\yhigh}{max(\ylineA,\ylineB)}

							\pgfmathsetmacro{\dA}{abs(\y-\ylineA)}
							\pgfmathsetmacro{\dB}{abs(\y-\ylineB)}

							\ifdim \dA pt < \eps pt
								\node[
									star,
									star points=5,
									draw=black,
									line width=0.8pt,
									fill=white,
									inner sep=1.5pt
								] at (\x,\y) {};
							\else\ifdim \dB pt < \eps pt
									\node[
										star,
										star points=5,
										draw=black,
										line width=0.8pt,
										fill=white,
										inner sep=1.5pt
									] at (\x,\y) {};

								\else\ifdim \y pt > \ylow pt
										\ifdim \y pt < \yhigh pt
											\node[text=darkgreen,fill=darkgreen,scale=0.25] at (\x,\y) {$\square$};

										\else
											\ifdim \y pt < \ylineA pt
												\draw[blue, line width=1pt] (\x,\y) circle (2pt);
											\else
												\fill[red] (\x,\y) circle (2pt);
											\fi
										\fi

									\else
										\ifdim \y pt < \ylineA pt
											\draw[blue, line width=1pt] (\x,\y) circle (2pt);
										\else
											\fill[red] (\x,\y) circle (2pt);
										\fi
									\fi\fi\fi
						}
				}

			\pgfmathsetmacro{\angle}{atan(\r)}
			\node[
				rotate=\angle,
				anchor=west,
				rounded corners=1pt,
				inner sep=2pt
			]  at (2,{\r*(1.4+1)-1}) {\large $r^*(q_1+1)-1$};

			\pgfmathsetmacro{\angle}{atan(\rn)}
			\node[
				rotate=\angle,
				anchor=west,
				rounded corners=1pt,
				inner sep=2pt
			]  at (.5,{\rn*2 - 1}) {\large $\hat{r}(q_1+1)-1$};

		\end{tikzpicture}
		\caption{Decision boundaries for the \gls{SED} policy with $r^*=2.5$ and $\hat{r}=3.5$. The policies disagree in the points between the two lines (shown as green squares).}%
		\label{fig:SED_disagree}
	\end{subfigure}
	\caption{A load balancing system with two heterogeneous servers and the \gls{SED} routing policy.}
\end{figure}

\gls{SED} achieves low average delay at high and low traffic intensities~\citep{Banawan1992} and outperforms other policies including \gls{JSQ} and \gls{JIQ}.
At medium traffic intensity, its performance is slightly worse than other  heterogeneity-aware benchmark policies since slow servers are underutilized.
Reference~\citep{Foschini1977} proves that \gls{SED} is asymptotically optimal, and that it achieves complete resource pooling in the heavy-traffic limit.
These results motivate the use of \gls{SED} as a benchmark, given that an exact delay-minimizing policy is not available.

\paragraph{Main objective and challenges.}
Our goal is to route customers according to \gls{SED} while the service rates are unknown \emph{a priori}.
We measure the performance in terms of a \emph{regret}---the cumulative number of routing decisions that differ from those prescribed by \gls{SED}.

A natural first approach is to estimate the service rates offline and use these estimates as fixed proxies for the true parameters.
However, even when the estimation error is small, such an empirical policy suffers linear regret.
To see this, consider the decision boundaries in \Cref{fig:SED_disagree}.
The \gls{SED} and empirical \gls{SED} policies disagree in the points between the two lines.
Since both decision lines are linear with different slopes $r^*$ and $\hat{r}$, the number of states contained in the wedge between the lines is infinite for any $r^*\neq \hat{r}$.
Each visit to a state in the disagreement region accumulates regret.
Hence, since the queue length process visits the disagreement wedge repeatedly, the regret accumulation is linear.
We show this effect in a numerical experiment: even with a routing error of only $1\%$, the regret of the empirical policy grows linearly, and the growth rate increases with the system load; see \Cref{fig:regret_LASED_ESED}.

This motivates the use of online learning, where parameter estimates are continuously updated as service completions are observed.
Because routing decisions must be taken while the parameters are being learned, an exploration--exploitation trade-off arises.
In the \gls{MAB} literature, a canonical setting for this trade-off, suitable exploration schemes are known to reduce regret from linear to logarithmic growth~\citep{Lai1985,Auer2002}.

In the present queueing setting, explicit exploration may appear unnecessary. 
Consider a greedy learning policy that always routes according to the empirical \gls{SED} policy using the current online estimators, without enforcing additional exploration.
If the greedy policy routes customers to both servers on a regular basis, it will obtain samples from both service time distributions, and the estimators will converge to the true service rates.
We show in the numerical section, however, that this does not hold for all regimes. 
In some settings, when the load is low and the estimation error is large, tfhe greedy policy uses only one server, while the other one is faster. 
For as long as the initial estimation error is not corrected, this greedy policy accumulates regret suboptimally.

\paragraph{LASED.}
We develop an online learning algorithm called \gls{LASED} (\Cref{alg:LASED}) that routes customers while simultaneously learning the service rates.
\Cref{thm:regret_ub} establishes that the regret incurred by this algorithm is asymptotically bounded.
This implies that the algorithm converges to \gls{SED}.

Allowing the routing policy to change after every departure would make the queueing dynamics highly non-stationary and difficult to analyze. 
We therefore use an episodic structure, and only update estimators between episodes. 
Each episode consists of an optional exploration phase followed by an exploitation phase.
The exploration phase is only triggered when the number of samples is below a certain threshold. 
In the exploitation phase, customers are routed according to the empirical \gls{SED} policy, i.e., using $\hat{r} = \hat{\mu}_2 / \hat{\mu}_1$ as an approximation to $r^*$, where $\hat{\mu}_i$ is the empirical estimator of $\mu_i$ for $i \in \{1, 2\}$.
The approximation is updated after each episode using observed service completions.
By construction, episodes start and end when the system is empty.
This ensures that departures between episodes are independent, but also implies that episode lengths are random. 

We show that, with high probability, \gls{LASED} and \gls{SED} agree in a growing region around the origin using concentration inequalities. 
We then show that the probability that the queue length process leaves this agreement region decreases over time.
To this end we use drift analysis: we show that under the stability condition $\lambda < \mu_1+\mu_2$, the process has negative drift towards the origin, hence the probability of visiting a state with large queue lengths is sufficiently small.

\subsection{Contributions}
We propose an adaptive learning algorithm that learns \gls{SED} under unknown service rates and achieves bounded regret.
The regret analysis overcomes the challenges related to the infinite-sized disagreement region and the non-stationary routing policy, where the decision boundary changes as estimators are updated.

The regret analysis relies on a decomposition.
In particular, we identify three events that can lead to suboptimal routing decisions:
(i) forced exploration events introduced by the algorithm,
(ii) estimation errors in the service rate estimators,
and
(iii) overflow events where queue lengths grow large.
To upper bound the regret, we bound the probability of each of these events.
The estimation error events are controlled via concentration inequalities of the observed service completions.
The overflow events are analyzed using a Foster--Lyapunov approach, where we show that the queue length process has a linear negative drift outside a compact set.
This allows us to control the probability of large queues.
Finally, the forced exploration events are analyzed using a coupling argument with an $M/M/1$ queue.
This coupling yields explicit bounds on both the frequency of exploration phases and their cumulative contribution to the regret.
Analyzing the episode lengths requires understanding of the busy period for which we did not find the answers in the literature outright.

As part of the regret analysis, we define a class of policies \SED{r} for real-valued $r > 0$, which can be considered as $r$-perturbations of \gls{SED}.
We prove the stability condition and analyze the busy period for \SED{r} in \Cref{sec:SED_policy}.

\subsection{Outline}

In \Cref{sec:model}, we discuss the model and introduce the \gls{SED} and \SED{r} policies.
Next, in \Cref{sec:learning_alg} we introduce our learning algorithm and formulate the regret upper bound.
In \Cref{sec:SED_policy}, we analyze the class of parameterized \gls{SED} policies, which is used in the regret analysis in \Cref{sec:proof_outline}.
\Cref{sec:literature} provides an overview of related literature.
Lastly, in \Cref{sec:num_res} we present a numerical implementation of the algorithm, and \Cref{sec:conclusion} concludes the paper. 

\section{System description}%
\label{sec:model}

\subsection{Load balancing between two queues}

We consider a load balancing queueing system with two heterogeneous servers as shown in \Cref{fig:loadbalancing_2_queues}.
Customers arrive to a dispatcher according to a Poisson process with rate $\lambda > 0$.
Upon arrival, the dispatcher immediately routes a customer to one of the two servers.
Each server has a dedicated queue with infinite capacity where assigned customers wait for service.
After routing, the customer instantaneously joins the queue, or starts service if the server is idle.
We assume no jockeying, i.e., once a customer is assigned to a queue it cannot switch to the other queue.
Service times at server $i\in\{1,2\}$ are independent and exponentially distributed with rate $\mu_i > 0$.

We study the queue length process $\mathbf{Q}^{\pi}(t)=(Q_1^{\pi}(t), Q_2^{\pi}(t)) \in \mathbb{N}^2_{\geq 0}$.
Here, $Q_i^{\pi}(t)$ denotes the number of customers at queue $i$ at time $t \geq 0$ under routing policy $\pi$; including the customer in service at server $i$, if any.
A routing policy $\pi:\N_{\geq 0}^2 \to \{1,2\}$ is characterized by a mapping from the set of queue length vectors $\mathbf{q}=(q_1,q_2)\in\N_{\geq 0}^2$ to a routing decision $D_\pi(\mathbf{q})\in \{1,2\}$.
That is, if a customer finds the system in state $\bfq$ upon arrival, the customer is routed to server $D_\pi(\bfq)$ under policy $\pi$.

\subsection{Routing using SED policies}

We introduce a set of policies $\SED{r}$ where $r\in\R_{>0}$ is used in the decision rule.
For $\bfq=(q_1,q_2)$, the delay estimates of servers~1 and~2 are $r(q_1+1)$ and $q_2+1$, respectively. 
Let $\calD_{\SED{r}}(\bfq)\subseteq \{1,2\}$ denote the set of servers attaining the minimum. 
We assume that ties are broken by routing to server~1 for simplicity, although our analysis also holds for other tie-breaking rules and randomized routing. 
With this assumption, we define the \SED{r} policy as
\begin{align}
	\label{eq:SEDr_dec}
	D_{\SED r}(\bfq)
	 & =
	\begin{cases}
	1 &\textrm{if} \ \frac{q_2+1}{q_1+1} \geq r, \\
	2 &\textrm{else}
	.
	\end{cases}
\end{align}
Note that the \gls{SED} policy is equivalent to $\SED{r^*}$ with $r^* := \mu_2/\mu_1$.

Next, we analyze the region where \SED{r} and \SED{\hat{r}}, $r\neq \hat{r}$, disagree.
In this region, the approximate policy is suboptimal (except for coincidentally lucky tie breaks).
The policies disagree in state $\mathbf{q}$ if and only if $\bfq$ lies between the lines $r(q_1+1)-1$ and $\hat{r}(q_1+1)-1$.
This is depicted in~\Cref{fig:SED_disagree}.
In other words,
\begin{align}
	\label{eq:disagree_ql_ratio}
	D_{\SED{\hat{r}}}(\bfq) \notin \calD_{\SED{r}}(\bfq)
	\
	\iff
	\
	\Bigl( \frac{q_2 + 1}{q_1+1} - r \Bigr) \Bigl( \frac{q_2 + 1}{q_1+1} - \hat{r} \Bigr)< 0
	.
\end{align}

In particular, we observe from~\eqref{eq:disagree_ql_ratio} that the closer $r$ and $\hat{r}$, the smaller the disagreement region between the two policies.
We formalize this idea in \Cref{lem:SED_agrees}, where we give a guarantee that the two policies agree in a finite region ${\{0,\dots,m\}}^2$.
To this end, we introduce $\uval{r}{m}$, which quantifies the minimal separation between the decision boundary of \SED{r} and any state in ${\{0,\dots,m\}}^2$ that does not lie on the boundary.
$\uval{r}{m}$ characterizes the smallest error on $r$ that can change the routing decisions of the \SED{r} policy within ${\{0,\dots,m\}}^2$.
In particular,
for $r\in\R_{>0}$ and $m\in\N_{\geq 0}$ let
\begin{align}
	\label{eq:def_u_M}
	\uval{r}{m}
	 & =
	\min_{\substack{q_1,q_2\in \{0,\dots,m\} \\ q_2+1 \neq r (q_1+1)}}
	\Bigl| \frac{q_2+1}{q_1+1} - r \Bigr|
	.
\end{align}
As $m$ increases, the admissible set of pairs $q_1,q_2$ over which the minimum is taken expands.
(The square ${[m+1]}^2$ contains the square ${[m]}^2$.)
This means that $\uval{r}{m}$ is non-increasing in $m$, and $\uval{r}{m}\to 0$ as $m\to\infty$.

\begin{lemma}%
	\label{lem:SED_agrees}
	Let $r,\hat{r} \in \R_{>0}$.
	If $|r-\hat{r}| < \uval{r}{m}$, then $D_{\SED{\hat{r}}}(\mathbf{q}) \in \calD_{\SED{r}}(\mathbf{q})$ for all $\mathbf{q}\in  {\{0,\dots,m\}}^2$.
\end{lemma}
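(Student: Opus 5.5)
Fix $\mathbf{q}=(q_1,q_2)\in{\{0,\dots,m\}}^2$ and write $x=(q_2+1)/(q_1+1)$. By \eqref{eq:SEDr_dec}, $\calD_{\SED{r}}(\mathbf{q})$ equals $\{1\}$ if $x>r$, $\{2\}$ if $x<r$, and $\{1,2\}$ if $x=r$. So membership of $D_{\SED{\hat r}}(\mathbf{q})$ can fail only off the boundary. The plan is to split into these three cases and to use the definition \eqref{eq:def_u_M} of $\uval{r}{m}$ to show that $x$ lies on the same side of $\hat r$ as of $r$.

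\emph{Case $x=r$.} Here $\mathbf{q}$ lies on the boundary of \SED{r}, so $\calD_{\SED{r}}(\mathbf{q})=\{1,2\}$. Any decision of \SED{\hat r} then belongs to it trivially.

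\emph{Case $x\neq r$.} Then $\mathbf{q}$ is one of the admissible pairs in the minimum \eqref{eq:def_u_M}. Hence
\begin{align*}
|x-r|\ge \uval{r}{m}>|r-\hat r|.
\end{align*}
If $x>r$, then $x-\hat r = (x-r)-(\hat r-r) \ge |x-r|-|r-\hat r|>0$. Thus $x> \hat r$, so $D_{\SED{\hat r}}(\mathbf{q})=1\in\{1\}=\calD_{\SED{r}}(\mathbf{q})$. If $x<r$, the symmetric estimate $\hat r-x \ge (r-x)-|r-\hat r|>0$ gives $x<\hat r$ strictly. Hence $D_{\SED{\hat r}}(\mathbf{q})=2\in\calD_{\SED{r}}(\mathbf{q})$. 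Equivalently, in both subcases the product in \eqref{eq:disagree_ql_ratio} is positive, so the policies do not disagree.

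The only points needing care are the use of strict inequalities, so that ties of \SED{\hat r} (where it routes to server~1) cannot occur on the wrong side, and the degenerate situation in which the minimum in \eqref{eq:def_u_M} is over an empty set. That situation arises only if every $\mathbf{q}$ in the square lies on the boundary, for example $m=0$ and $r=1$. There the convention $\uval{r}{m}=+\infty$ makes the hypothesis vacuous, and the conclusion still holds by the case $x=r$. This tie-breaking and strictness bookkeeping is the main obstacle; the rest is a one-line triangle inequality.
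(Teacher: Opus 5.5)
Your proof is correct and rests on the same inequality as the paper's: for $\mathbf{q}$ in the square with $x\neq r$, definition \eqref{eq:def_u_M} gives $|x-r|\ge\uval{r}{m}>|r-\hat r|$, so $x$ cannot sit between $r$ and $\hat r$. The paper argues by contradiction that no grid ratio lies strictly between $r$ and $\hat r$, and then appeals to the characterization \eqref{eq:disagree_ql_ratio}. You instead read off $D_{\SED{\hat r}}(\mathbf{q})$ and $\calD_{\SED{r}}(\mathbf{q})$ directly in each case.

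Your direct route is actually the more careful one. With ties broken toward server~1, the direction ``disagreement implies product $<0$'' in \eqref{eq:disagree_ql_ratio} fails at $x=\hat r<r$. There $\SED{\hat r}$ routes to server~1 although $\calD_{\SED{r}}(\mathbf{q})=\{2\}$, yet the product is $0$. The paper's argument is repaired at once by using the non-strict bound $|x-r|\le|r-\hat r|$, which still contradicts the hypothesis. Your strict conclusion $x<\hat r$ handles this tie case without any repair, exactly as you anticipated.

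A wording nit: with $\uval{r}{m}=+\infty$ the hypothesis is automatically satisfied rather than vacuous. The conclusion holds anyway by your case $x=r$.
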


\begin{proof}
	Let $r,\hat{r} \in \R_{>0}$ with $|r-\hat{r}| < \uval{r}{m}$.
	Suppose that there exists $(x,y)\in {\{0,\dots,m\}}^2$ such that $((y+1)/(x+1)-r)((y+1)/(x+1)-\hat{r})<0$.
	Then $(y+1)/(x+1)$ lies between $r$ and $\hat{r}$, and hence $|(y+1)/(x+1)-r| < |r-\hat{r}|$.
	However, by definition of $\uval{r}{m}$, we have $|(y+1)/(x+1)-r| \geq \uval{r}{m}$, which contradicts $|r-\hat{r}| < \uval{r}{m}$.
	Therefore, no such $(x,y)$ exists. The proof is concluded by~\eqref{eq:disagree_ql_ratio}.
\end{proof}

\section{Learning Adaptive SED}
\label{sec:learning_alg}

This section discusses our learning adaptive algorithm and its performance guarantees.

\subsection{Algorithm description}
\label{sec:als_description}

The algorithm is given in \Cref{alg:LASED}.

\begin{algorithm}[h]
	\caption{Learning Adaptive SED (LASED)}
	\label{alg:LASED}
	\begin{algorithmic}[1]
		\State {\bf Input:} $0 <\mumin <\mumax$ and exploration threshold sequence $\{\alpha(k)\}_{k\in \N_{\geq 1}}$.
		\State Let $\rmin = \mumin/\mumax$ and $\rmax = \mumax/\mumin$.
		\State For $i\in\{1,2\}$, set $N_i^{(1)} = 0$, $S_i^{(1)} = 0$, and $\hat\mu_i^{(1)} = 1$. Set $\bar{r}_1=(\rmax-\rmin)/2$. \Comment{Initialization.}
		\For{episode $k=1,2,\dots$}
		\If{$N_1^{(k)} \wedge N_2^{(k)} < \alpha(k)$} \Comment{Exploration phase.} \label{line:check_dep}
		\For{$i\in\{1,2\}$}
		\State Wait for $\bigl(\lceil \alpha(k)\rceil - N_i^{(k)}\bigr) \vee 1$ customer arrivals, route each customer to server $i$.  \label{line:forced_exploration}
		\EndFor \label{line:end_exploration}
		\EndIf
		\While{the system is non-empty} \Comment{Exploitation phase.}  \label{line:begin_exploitation}
		\State Route customers according to $\SED{\bar r_k}$.
		\EndWhile  \label{line:end_exploitation}
		\For{$i\in\{1,2\}$} \Comment{Update estimators.}
		\State Record the $Z_i^{(k)}\geq 0$ departures during episode~$k$ at server~$i$, with service durations $Y_{i,1}^{(k)},\ldots,Y_{i,Z_i^{(k)}}^{(k)}$. 
		\State Update
		\begin{align}
			\label{eq:alg_updates}
			N_i^{(k+1)} & = N_i^{(k)} + Z_i^{(k)}, \qquad
			S_i^{(k+1)} = S_i^{(k)} + \sum_{j=1}^{Z_i^{(k)}} Y_{i,j}^{(k)}, \qquad
			\hat\mu_i^{(k+1)} = \frac{N_i^{(k+1)}}{S_i^{(k+1)}}
			.
		\end{align}
		\EndFor
		\State Update
		$
			\hat{r}_{k+1} = \hat\mu_2^{(k+1)}/\hat\mu_1^{(k+1)}
			.
		$
		\Comment{Empirical estimator.}
		\label{line:def_hat_r}
		\State Update
		$
			\bar{r}_{k+1}
			=
			\rmax \wedge (\rmin \vee \hat r_{k+1})
			.
		$
		\Comment{Clipped estimator.}
		\label{line:clip}
		\EndFor
	\end{algorithmic}
\end{algorithm}

Note that in \Cref{alg:LASED}, episodes start and end when the system is empty.
This implies that all customers who arrive during an episode also depart within the same episode.
$Z_i^{(k)}$ denotes the number of departures during episode~$k$, and $N_i^{(k)}= \sum_{m=1}^{k-1} Z_i^{(m)}$ denotes the cumulative number of departures observed up to the start of episode~$k$.

An episode consists of an optional exploration phase (\Crefrange{line:check_dep}{line:end_exploration}) followed by an exploitation phase (\Crefrange{line:begin_exploitation}{line:end_exploitation}).
This depends on $N_i^{(k)}$ and $\alpha(k)$; see \Cref{line:check_dep}.
An exploration phase is designed to obtain sufficient samples for both servers, where ``sufficient'' is measured by the exploration threshold~$\alpha(k)$.
In the exploitation phase of episode~$k$, customers are routed according to the \SED{\bar{r}_k} policy.
Here, $\bar{r}_k$ is the empirical estimator $\hat{r}_k$ of the true ratio $\mu_2/\mu_1$, projected on $[\rmin,\rmax]$.
At the end of an episode, estimators are updated using the observed service completions.

Note by design of the exploration phases, we have a deterministic guarantee on the number of samples at the start of each episode. 
Specifically, for $k\in\N_{\geq 2}$, with probability one,
\begin{align}
	\label{eq:dep_geq_alpha}
	N_1^{(k)} \wedge N_2^{(k)} \geq \alpha(k-1)
	.
\end{align}
To see this, note that $N_i^{(k)} \geq N_i^{(k-1)}$ since $N_i^{(k)}$ is a cumulative count.
If $N_i^{(k-1)} \geq \alpha(k-1)$, then the claim follows trivially.
On the other hand, if $N_i^{(k-1)} < \alpha(k-1)$, then exploration is triggered in episode~$k-1$ and at least $\lceil \alpha(k-1) \rceil - N_i^{(k-1)}$ customers are routed to server~$i$; see \Cref{line:forced_exploration}.
Since every arriving customer departs within the same episode, we have $N_i^{(k)} \geq N_i^{(k-1)} +\lceil \alpha(k-1) \rceil - N_i^{(k-1)} \geq \alpha(k-1)$.

\subsection{Regret definition}

We next introduce a notion of regret of \gls{LASED} with respect to the \SED{r} policy.
The regret in~\eqref{eq:def_regret_episode} is the cumulative number of routing decisions made by \gls{LASED} that differ from \SED{r}.
Since the objective of \gls{LASED} is to recover the decision structure of \SED{r}, counting decision mismatches measures how quickly the algorithm learns the correct routing behavior.

Let $H_k \in \R_{\geq 0}$ denote the starting time of episode~$k\in\N_{\geq 1}$.
Note that $H_k$ is random since each episode ends at the completion of a busy period.
Let $\tilde{A}_1,\tilde{A}_2,\dots$ denote the customers' arrival times.
The routing decision of the $j$th arriving customer is based on the state of the system just before time $\tilde{A}_j$, which is denoted by $\tilde{A}_j^-$.
Let $\bfQ(t)$ denote the queue length vector at time $t\geq 0$ under \gls{LASED}.

Let $D_\algname(t,\bfq)$ denote the routing decision of \gls{LASED} at time $t$ when the system is in state $\bfq$.
Since \gls{LASED} uses policy \SED{\bar{r}_k} in episode~$k$, routing depends on time and state.
On the other hand, the \SED{r} policy is time-homogeneous so the set of optimal decisions $\calD_{\SED{r}}$ only depends on the state.

We consider the regret of \gls{LASED} with respect to \SED{r} over the first $n\in\N_{\geq 1}$ episodes to be
\begin{align}
	\label{eq:def_regret_episode}
	R_r^{\algname}(n)
	 & =
	\sum_{k=1}^n
	\sum_{j: \tilde{A}_j^- \in [H_k,H_{k+1})}
	\ind{
		D_{\algname}\bigl(\tilde{A}_j^-,\bfQ(\tilde{A}_j^-)\bigr)
		\notin
		\calD_{\SED{r}}\bigl(\bfQ(\tilde{A}_j^-)\bigr)
	}
	.
\end{align}

\subsection{Regret upper bound}

We show that \gls{LASED} achieves bounded regret with respect to the \gls{SED} policy in \Cref{thm:regret_ub}.
Recall that \gls{SED} is equivalent to \SED{r^*} with $r^* := \mu_2/\mu_1$.
While the proof outline is discussed below, the full proof is in \Cref{app:proof_thm_regret_ub}.

\begin{theorem}
	\label{thm:regret_ub}
	Let $0 < \mumin < \mumax < \infty$.
	Assume that $\mu_1,\mu_2 \in \mathbb{Q}_{>0} \cap (\mumin,\mumax)$
	and $\lambda\in\R_{> 0}$  satisfy
	\begin{align}
		\lambda < \mu_1+\mu_2
		.
		\label{eq:stab_cond}
	\end{align}
	Let
	\begin{align}
		\label{eq:def_alpha_k}
		\alpha(k)
		 & =
		\lceil \ln(k+1)^4 \rceil
		.
	\end{align}
	Then,
	$
		\lim_{n\to\infty}
		\E(R_{r^*}^\algname(n))
		<
		\infty
		.
	$
\end{theorem}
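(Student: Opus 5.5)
We bound $\E(R_{r^*}^{\algname}(n))$ episode by episode and show the per-episode expected regret is summable in $k$. For episode $k$, write $B_k$ for the number of arrivals in episode~$k$ and $M_k$ for the maximal queue length $\max_t \max_i Q_i(t)$ during the exploitation phase of episode~$k$. Fix a growing truncation level $m_k$, say $m_k = \lceil \ln(k+1)^{3/2}\rceil$, so that $m_k\to\infty$ but slowly relative to $\alpha(k-1)$. A customer in episode~$k$ can be misrouted only in one of three situations. (i) It arrives during a forced exploration phase. (ii) It arrives during exploitation and $|\bar r_k - r^*| \ge \uval{r^*}{m_k}$. (iii) It arrives during exploitation, the estimate is good, and the state lies outside ${\{0,\dots,m_k\}}^2$. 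By \Cref{lem:SED_agrees}, a good estimate together with a state inside the box gives no regret. Hence
\begin{align*}
\E(\text{regret in episode } k) \le \E\bigl(B_k \ind{\text{explore in } k}\bigr) + \E\bigl(B_k \ind{|\bar r_k - r^*| \ge \uval{r^*}{m_k}}\bigr) + \E\bigl(B_k \ind{M_k > m_k}\bigr).
\end{align*}

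\textbf{Estimation term.} Because $\mu_1,\mu_2$ are rational, $r^*=p/q$ and every state with $(q_2+1)\neq r^*(q_1+1)$ satisfies $|(q_2+1)/(q_1+1)-r^*| \ge 1/(q(m+1))$. So $\uval{r^*}{m}\ge c/(m+1)$, which decays only polynomially in $m_k$; this is where rationality enters. By \eqref{eq:dep_geq_alpha}, each $\hat\mu_i^{(k)}$ is built from at least $\alpha(k-1)\approx \ln(k)^4$ i.i.d.\ exponential samples. Departures in distinct episodes are independent of the routing decisions, and within an episode every customer departs, so a Chernoff bound for sums of exponentials, uniform over the random sample size via a union bound over $N\ge\alpha(k-1)$, applies. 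Clipping only moves $\bar r_k$ toward $r^*\in(\rmin,\rmax)$. Together these give
\begin{align*}
\P\bigl(|\bar r_k - r^*| \ge c/(m_k+1)\bigr) \le C\exp\bigl(-c'\alpha(k-1)/m_k^2\bigr) = C\exp\bigl(-c'\ln(k)^{4-3}\bigr)\cdot(\text{poly}).
\end{align*}
The exponents must be tuned, for instance by choosing $m_k=\ln(k+1)^{1/2}$, so that $\alpha/m_k^2 \gtrsim \ln(k)^3$, which is superpolynomially small in $k$. Cauchy--Schwarz then separates $B_k$ from this indicator.

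\textbf{Overflow term.} We use the results of \Cref{sec:SED_policy} on \SED{r}: uniformly over $r\in[\rmin,\rmax]$, the Lyapunov function $\|\bfq\|_1$ (or a weighted version) has linear negative drift outside a compact set under \eqref{eq:stab_cond}. This yields geometric tails for the busy period and exponential tails $\P(M_k>m)\le Ce^{-\theta m}$, uniformly in $r$. Conditioning on $\bar r_k$, which is independent of the current episode's dynamics, and noting that the initial state after exploration is bounded by the exploration load, we get that $\E(B_k\ind{M_k>m_k})$ is at most $C e^{-\theta m_k/2}$ times a moment factor. Since $m_k$ grows only like $\ln(k)^{1/2}$, $e^{-\theta m_k}$ is not summable on its own. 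The fix is to note that regret is zero unless both exploitation happens and the event holds, and that, given a good estimate, the misrouted states outside the box at distance $m$ require the chain to reach level of order $1/|\bar r_k-r^*|$. Alternatively, let the box grow polynomially in $k$, i.e.\ take $m_k = k^{\delta}$, and compensate with a sharper concentration estimate showing $|\bar r_k-r^*|$ is of order $N^{-1/2}$ with the cumulative $N_i^{(k)}$ growing linearly in $k$. That linear growth holds because each episode contains at least one arrival and SED-type policies send customers to both servers with probability bounded below. \textbf{I expect this balancing between $m_k$, the concentration rate, and the overflow tails to be the main obstacle.} I would first try to prove $N_i^{(k)}\ge ck$ with exponentially high probability via a renewal argument.

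\textbf{Exploration term.} Couple with an $M/M/1$ queue of rate $\min(\mu_1,\mu_2)$ fed by all arrivals routed to one server. This bounds the expected length of an exploration episode by $O(\alpha(k))$ with finite moments. Exploration is triggered only when $N_1^{(k)}\wedge N_2^{(k)}<\alpha(k)$, which under the linear growth above happens with probability decaying exponentially in $k$, or deterministically only finitely often. Hence $\sum_k \E(B_k\ind{\text{explore}})<\infty$. Summing the three terms over $k$ gives $\lim_n \E(R^{\algname}_{r^*}(n))<\infty$.
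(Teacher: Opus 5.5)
Your decomposition into exploration, misestimation and overflow terms matches the paper's proof, as do the Cauchy--Schwarz step, the use of rationality to get $u_{r^*}(m)\ge 1/(b(m+1))$, and the concentration step based on the guaranteed $\alpha(k-1)$ samples. \textbf{The gap} is that you never fix a truncation level $m_k$ that makes the misestimation and overflow terms summable at the same time. The balancing you leave open as the main obstacle is resolved simply by choosing the right scale.

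After Cauchy--Schwarz, each event needs probability $O(k^{-2-\eta})$. Write $r^*=a/b$ in lowest terms. Then \Cref{lem:rhat_subexp} with $\varepsilon_k=1/(b(m_k+1))$ bounds the misestimation probability by $4\exp\bigl(-\ln(k)^4/(128a^2(m_k+1)^2)\bigr)$. \Cref{lem:overflow} bounds the overflow probability by $p_*^{-1}e^{-\theta_*(m_k-\xi_*)}$.
\begin{itemize}
\item Your first choice $m_k\approx\ln(k)^{3/2}$ turns the misestimation bound into roughly $k^{-1/(128a^2)}$. This is not summable, since $a\ge 1$.
\item Your second choice $m_k\approx\ln(k)^{1/2}$ turns the overflow bound into $e^{-\theta_*\sqrt{\ln k}}$, as you observe.
\item The paper's choice $m(k)=\lfloor c_1\ln k\rfloor$ sits in between and handles both. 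The misestimation bound becomes $4k^{-\ln(k)/(512a^2c_1^2)}$, superpolynomially small for every fixed $c_1$. The overflow bound becomes $O(k^{-\theta_*c_1})$, summable after taking square roots once $c_1>2/\theta_*$. Any $m_k=\ln(k)^{\gamma}$ with $1<\gamma<3/2$ would also work.
\end{itemize}
With this choice, the estimation term needs neither the $k^{\delta}$ box, nor the sharper $N^{-1/2}$ concentration, nor linear growth of $N_i^{(k)}$. Your remark that misrouting requires the chain to reach a level of order $1/|\bar r_k-r^*|$ is just \Cref{lem:number_approximation} restated, so it does not close the argument on its own.

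Two further steps need repair.
\begin{itemize}
\item \emph{Episode length.} Your $M/M/1$ coupling with service rate $\mu_1\wedge\mu_2$ cannot bound the episode length in general. The stability condition only requires $\lambda<\mu_1+\mu_2$, so this dominating queue is unstable whenever $\lambda\ge\mu_1\wedge\mu_2$. The uniform bound on $\mathbb{E}(B_k^2)$ has to come from the Lyapunov busy-period estimates (\Cref{lem:E_B_bnd,lem:bnd_arrivals_episode}). These combine with the fact that exploration adds at most $14$ arrivals, because $N_i^{(k)}\ge\alpha(k-1)$ and consecutive thresholds differ by at most $6$.
\item \emph{Lyapunov function.} Relatedly, $\|\mathbf{q}\|_1$ is not a valid Lyapunov function. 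Its drift at $(0,q_2)$ for large $q_2$ is $\lambda-\mu_2$, which is positive when $\lambda>\mu_2$. This is why the paper uses the quadratic $\phi_r$; citing \Cref{lem:overflow_bp} directly is enough for your purposes.
\item \emph{Exploration probability.} Exploration is not deterministically finite, and a renewal argument does not apply because episodes are not i.i.d. What works is a lower bound on $\mathbb{P}(Z_1^{(k)}\wedge Z_2^{(k)}\ge 1\mid\mathcal{F}_{k-1})$ followed by Azuma--Hoeffding, as in \Cref{lem:exploration,lem:at_least_one_departure}.
\end{itemize}
Your claim that both servers are used with probability bounded below is in fact true uniformly over the clipped range $[r_{\min},r_{\max}]$. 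That would give exponentially small exploration probabilities, but with constants depending on $\mu_{\min},\mu_{\max}$. The paper instead uses concentration of $\bar r_k$ around $r^*$ to obtain a constant $v^*$ that does not depend on these a priori bounds.
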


To prove~\Cref{thm:regret_ub}, we next identify three events that may cause \gls{LASED} to make different decisions than \gls{SED}, so that the total regret can be bounded by analyzing these events individually.
Let
\begin{align}
	\label{eq:def_M_k}
	m(k) := \lfloor c_1 \ln(k)\rfloor
	,
\end{align}
which will represent the radius of a `safety' region where the learning policy agrees with \gls{SED}.
Here, $c_1 > 0$ is a constant to be chosen later.
For episode~$k$, consider the following events: 
\begin{itemize}[labelwidth=2.4cm, leftmargin=!, align=left]
	\item[(exploration)]%
	    \begin{minipage}[t]{\linewidth}
			The event that episode~$k$ starts with exploration, i.e.,
			\begin{align}
				\label{eq:event_exploration}
				\Eexplr{k} &
				=
				\bigl\{ N_1^{(k)} \wedge N_2^{(k)} < \alpha(k) \bigr\}
				.
			\end{align}
			\end{minipage}
	\item[(misestimation)]%
		\begin{minipage}[t]{\linewidth}
			The event that $\hat{r}_k$ is not in $[\rmin,\rmax]$ or deviates from $r^*$ by more than $\uval{r^*}{m(k)}$, i.e.,
	      \begin{align}
		      \label{eq:event_est_error}
		      \Eest{k} & = \{\hat{r}_k \notin [\rmin,\rmax]\} \cup \{|\hat{r}_k - r^*| \geq \uval{r^*}{m(k)}\}
		      .
	      \end{align}
		\end{minipage}
	\item[(overflow)]%
		\begin{minipage}[t]{\linewidth}
			The event that the queue of a server exceeds $m(k)$ during episode~$k$, i.e.,
	      \begin{align}
		      \label{eq:event_overflow}
		      \Eovf{k} & = \bigl\{\exists t\in[H_k,H_{k+1}]
		      :
		      \
		      Q_1(t) \vee Q_2(t) > m(k)\bigr\}
		      .
	      \end{align}
		\end{minipage}
\end{itemize}

We claim that on $(\Eest{k})^c \cap (\Eovf{k})^c \cap (\Eexplr{k})^c$, all decisions of \gls{LASED} agree with those of $\SED{r^*}$ in episode~$k$.
Indeed, observe first that in episodes without exploration, customers are routed according to \SED{\bar{r}_k} throughout the entire episode.
Observe second that without misestimation, $\SED{\bar{r}_k}$ and $\SED{r^*}$ agree for states with $0\leq q_1,q_2\leq m(k)$ by \Cref{lem:SED_agrees}.
This is because $|\bar{r}_k - r^*| < \uval{r^*}{m(k)}$.
Finally, without overflow, the queues are guaranteed to have stayed below $m(k)$---the region on which we just argued that the two policies agree.

So, on $(\Eest{k})^c \cap (\Eovf{k})^c \cap (\Eexplr{k})^c$, no regret is incurred in episode~$k$.
This implies that
\begin{equation}
	\label{eq:regret_split}
	R_{r^*}(n)
	\leq
	\sum_{k=1}^{n} \Arrepi{k}
	\Bigl(
	\ind{\Eexplr{k}} + \ind{\Eest{k}} + \ind{\Eovf{k}}
	\Bigr)
	.
\end{equation}
Here, $\Arrepi{k}$ denotes the number of arrivals in episode~$k$.
Taking expectation, using the Cauchy--Schwarz inequality, linearity, and $\sqrt{a}+\sqrt{b}+\sqrt{c} \leq \sqrt{3(a+b+c)}$ for $a,b,c\geq 0$, we find that
\begin{align}
	\label{eq:E_regret_split}
	\E(R_{r^*}(n))
	\leq
	\sum_{k=1}^{n}
	\sqrt{
		3 \E(\Arrepi{k}^2)
		\bigl(
		\P(\Eexplr{k}) + \P(\Eest{k}) + \P(\Eovf{k})
		\bigr)
	}
	.
\end{align}

The sequence $m(k)$ balances two competing requirements: it grows
(i) slow enough for sufficient estimation accuracy, 
and 
(ii) fast enough such that queue length excursions outside the safety region are sufficiently rare. 
The result relies crucially on this balance.

The terms in \eqref{eq:E_regret_split} are bounded separately in \Cref{sec:proof_outline}, completing the proof of \Cref{thm:regret_ub}.
These bounds rely on structural properties of the \SED{r} policy, which are derived first in \Cref{sec:SED_policy}.
This section provides structural insight into the dynamics of the $\SED{r}$ routing policy, but readers who are willing to take these properties as given may proceed directly to \Cref{sec:proof_outline}.

\subsection{Discussion}
Below we discuss several aspects of \Cref{thm:regret_ub}.\\

{\bf Conditions.}
We rely on two conditions.
First, \eqref{eq:stab_cond} ensures that the $\SED{r}$ system is positive recurrent for $r\in\R_{>0}$, see \Cref{cor:stab_cond}.
This guarantees that episodes of \gls{LASED} have finite expected length.
Second, the specific choice of $\alpha(k)$ ensures that exploration phases occur sufficiently often to guarantee consistent estimation of $r^*$, while remaining sparse enough so that the exploration-induced regret is summable. \\

{\bf Parameter range.}
\Cref{thm:regret_ub} assumes that the service rates $\mu_1$ and $\mu_2$ lie in a bounded interval $(\mumin,\mumax)$.
This ensures that the service rates are bounded away from zero and infinity.
These bounds may be loose, as the regret upper bound is asymptotic and does not depend on $\mumin$ and $\mumax$.\\

{\bf Restriction to rational values.}
\Cref{thm:regret_ub} is restricted to rational service rates.
This assumption ensures that  there exists a strictly positive distance between $r^*$ and any other fraction with bounded denominator.
I.e., each suboptimal value $r\neq r^*$ has a strictly positive suboptimality gap.
Such a condition is standard in the \gls{MAB} literature, where positive separation between optimal and suboptimal actions is often required~\citep{Lai1985,Auer2002}.

The argument breaks down for $r^*\in\R\setminus\mathbb{Q}$, since irrationals can be approximated arbitrarily well by rationals, see Dirichlet's approximation theorem \citep[Thm.\ 36]{Hardy1975}.
Thus, although \gls{LASED} can still be applied to systems with irrational service rates, a regret upper bound would require a different approach. \\

{\bf Restriction to two servers.}
In the two-server system, \gls{SED} is fully determined by the ratio of service rates $r^* = \mu_2 / \mu_1$.
This ratio induces a one-dimensional decision boundary in the state space; recall \Cref{fig:grid_SED}.
Any estimation error creates a two-dimensional wedge and the regret bound follows by analyzing how often the process visits the disagreement wedge. 
Consider now a model with three servers. 
The queue length process lives in a three-dimensional space, and three ratios $(q_i+1)/\mu_i$, $i\in\{1,2,3\}$, must be compared to implement the \gls{SED} policy. 
Equivalently, the state space is partitioned by three hyperplanes determined by the fractions $\mu_2/\mu_1$, $\mu_3/\mu_2$, and $\mu_3/\mu_1$, and the disagreement region is formed by the union of three-dimensional wedges. 
An error in one of the service rate estimators thus changes both decision boundaries involving that server, and can therefore affect multiple decision regions at once, see \Cref{fig:3dim_error}. 
Similarly, for a model with $n$ servers, the $n$-dimensional state space is partitioned by $\binom{n}{2}$ hyperplanes, and estimation error in one service rate affects all $n-1$ hyperplanes at once. 
Hence, the difficulty of the learning problem scales with the dimension. 
Extending the regret analysis to systems with more than two servers is left for future work.

\tdplotsetmaincoords{68}{118}
\begin{figure}[h]
	\centering
	\begin{tikzpicture}[
		tdplot_main_coords,
		scale=1.05,
		line cap=round,
		line join=round,
		axis/.style={-{Latex[length=2.2mm]}, semithick},
		box edge/.style={gray!45, thin},
		true edge/.style={thick},
		estimated edge/.style={estimate, thick, dashed},
		panel title/.style={font=\small\bfseries},
		plane label/.style={
			font=\normalsize,
			inner sep=1.3pt,
			fill=white,
			fill opacity=.82,
			text opacity=1,
			rounded corners=1pt
		}
	]

	\def\SEDqmax{4}

	\begin{scope}[xshift=-3.8cm]
		\draw[box edge] (0,0,0) -- (\SEDqmax,0,0) -- (\SEDqmax,\SEDqmax,0) -- (0,\SEDqmax,0) -- cycle;
		\draw[box edge] (\SEDqmax,0,0) -- (\SEDqmax,0,\SEDqmax);
		\draw[box edge] (\SEDqmax,\SEDqmax,0) -- (\SEDqmax,\SEDqmax,\SEDqmax);
		\draw[box edge] (0,\SEDqmax,0) -- (0,\SEDqmax,\SEDqmax);
		\draw[box edge] (0,0,\SEDqmax) -- (\SEDqmax,0,\SEDqmax) -- (\SEDqmax,\SEDqmax,\SEDqmax)
			-- (0,\SEDqmax,\SEDqmax) -- cycle;

		\filldraw[fill=pairone, fill opacity=.15, draw=pairone, true edge]
			(0,0,0) -- (2.857,4,0) -- (2.857,4,4) -- (0,0,4) -- cycle;

		\filldraw[fill=pairtwo, fill opacity=.15, draw=pairtwo, true edge]
			(0,0,0) -- (4,0,0) -- (4,2.8,4) -- (0,2.8,4) -- cycle;

		\filldraw[fill=pairthree, fill opacity=.15, draw=pairthree, true edge]
			(0,0,0) -- (2,0,4) -- (2,4,4) -- (0,4,0) -- cycle;

		\node[plane label, text=pairone!80!black] at (1.9,2.66,.45) {$P_{12}$};
		\node[plane label, text=pairtwo!85!black] at (5,1.75,2.5) {$P_{23}$};
		\node[plane label, text=pairthree!75!black] at (.65,3.25,1.3) {$P_{13}$};

		\draw[axis] (0,0,0) -- (4.65,0,0)
			node[below left=-1pt, font=\small] {$q_1$};
		\draw[axis] (0,0,0) -- (0,4.65,0)
			node[right=1pt, font=\small] {$q_2$};
		\draw[axis] (0,0,0) -- (0,0,4.65)
			node[above, font=\small] {$q_3$};
	\end{scope}

	\begin{scope}[xshift=4.2cm]
		\draw[box edge] (0,0,0) -- (\SEDqmax,0,0) -- (\SEDqmax,\SEDqmax,0) -- (0,\SEDqmax,0) -- cycle;
		\draw[box edge] (\SEDqmax,0,0) -- (\SEDqmax,0,\SEDqmax);
		\draw[box edge] (\SEDqmax,\SEDqmax,0) -- (\SEDqmax,\SEDqmax,\SEDqmax);
		\draw[box edge] (0,\SEDqmax,0) -- (0,\SEDqmax,\SEDqmax);
		\draw[box edge] (0,0,\SEDqmax) -- (\SEDqmax,0,\SEDqmax) -- (\SEDqmax,\SEDqmax,\SEDqmax)
			-- (0,\SEDqmax,\SEDqmax) -- cycle;

		\filldraw[fill=pairone, fill opacity=.08, draw=pairone!65, true edge]
			(0,0,0) -- (2.857,4,0) -- (2.857,4,4) -- (0,0,4) -- cycle;
		\filldraw[fill=pairtwo, fill opacity=.08, draw=pairtwo!70, true edge]
			(0,0,0) -- (4,0,0) -- (4,2.8,4) -- (0,2.8,4) -- cycle;

		\filldraw[fill=pairthree, fill opacity=.15, draw=pairthree, true edge]
			(0,0,0) -- (2,0,4) -- (2,4,4) -- (0,4,0) -- cycle;

		\filldraw[fill=estimate, fill opacity=.07, estimated edge]
			(0,0,0) -- (3.636,4,0) -- (3.636,4,4) -- (0,0,4) -- cycle;
		\filldraw[fill=estimate, fill opacity=.07, estimated edge]
			(0,0,0) -- (4,0,0) -- (4,2.2,4) -- (0,2.2,4) -- cycle;

		\draw[black, very thick] (0,0,0) -- (2,2.8,4);
		\draw[estimate, very thick, dashed] (0,0,0) -- (2,2.2,4);

		\node[plane label, text=pairthree!75!black] at (.65,3.25,1.3)
			{$P_{13}=\widehat P_{13}$};
		\node[plane label, text=estimate] at (3.35,3.72,.55)
			{$\widehat P_{12}$};
		\node[plane label, text=estimate] at (3.35,1.38,2.55)
			{$\widehat P_{23}$};

		\draw[axis] (0,0,0) -- (4.65,0,0)
			node[below left=-1pt, font=\small] {$q_1$};
		\draw[axis] (0,0,0) -- (0,4.65,0)
			node[right=1pt, font=\small] {$q_2$};
		\draw[axis] (0,0,0) -- (0,0,4.65)
			node[above, font=\small] {$q_3$};
	\end{scope}

	\end{tikzpicture}
	\caption{Three pairwise decision boundaries in the three-dimensional case (left). An error in the service rate estimator $\hat{\mu}_2$ simultaneously shifts the two boundaries involving that server (right). }
	\label{fig:3dim_error}
\end{figure}
\section{The SED($r$) policy}
\label{sec:SED_policy}

In this section we analyze the stationary \SED{r} policy for arbitrary $r>0$.
Note that the \SED{1} policy is actually the \gls{JSQ} policy.
Stability conditions for the \gls{JSQ} policy are well-known~\citep{Haight1958,Moyal2017,Foley2001,Bramson2011}.
The two-server system under \gls{JSQ} is stable if and only if~\eqref{eq:stab_cond} holds, as shown in, e.g.,~\citep{Flatto1977}.
\citep[Theorem 4]{CoombsReyes2003} shows that~\eqref{eq:stab_cond} is a sufficient stability condition for the \gls{SED} policy in the two-server system.
Their proof relies on a fluid model.

We establish that~\eqref{eq:stab_cond} is a sufficient stability condition for \SED{r} for arbitrary $r>0$.
This is a result that, to the best of our knowledge, has not been established in the literature up to now.
To prove the stability condition we construct a Lyapunov function of the queue length process; see~\eqref{eq:def_lyap_func}.
We show in \Cref{lem:lyap_subgeom_drift} that the drift of this function is negative outside a finite region.
Stability then follows from the Foster--Lyapunov criterion~\citep{Meyn2009}.
Our approach is directly applicable to the queue length process, omitting the need for a fluid model.
Moreover, the Lyapunov function in \eqref{eq:def_lyap_func} can be used to derive other results such as expected return times, or moment bounds for the stationary distribution.
We later use the Lyapunov function to obtain bounds for first-entrance probabilities in \Cref{lem:hitting_probability}, bounds on the first and second moment of the busy period in \Cref{lem:E_B_bnd}, and tails on the maximum queue length during a busy period in \Cref{lem:overflow}.

We consider $\QSED{r}(t)$, the queue length process in the two-server system under the \SED{r} policy.
$\QSED{r}(t)$ is a two-dimensional \gls{BD} process with state-dependent birth and death rates.
We next introduce the infinitesimal generator $\calL_r$ of $\QSED{r}$.
If a customer arrives in state $\bfq = (q_1,q_2)$, \SED{r} routes it to server one if $q_2+1 \geq r(q_1 + 1)$, or to server two otherwise.
Hence, for any function $\psi:\N_{\geq 0}^2\to \R$,
\begin{align}
	\label{eq:def_drift_operator}
	(\calL_r \psi)(\bfq)
	 & =
	\lambda \ind{q_2+1 \geq r(q_1 + 1)} (\psi(q_1+1,q_2)- \psi(q_1,q_2))         \\
	 & \ 
	+
	\lambda \ind{q_2+1 < r(q_1 + 1)} (\psi(q_1,q_2+1) - \psi(q_1,q_2)) \nonumber \\
	 & \ 
	+
	\mu_1 \ind{q_1 > 0} (\psi(q_1-1,q_2)- \psi(q_1,q_2)) \\
	& \ +
	\mu_2 \ind{q_2 > 0} (\psi(q_1,q_2-1)- \psi(q_1,q_2)) \nonumber
	.
\end{align}

We are going to consider the function $\phi_r:\N_{\geq 0}^2\to \N$ defined by
\begin{align}
	\label{eq:def_lyap_func}
	\phi_r(q_1,q_2)
	 & =
	r \frac{q_1(q_1+1)}{2} + \frac{q_2(q_2+1)}{2}
	.
\end{align}
For $x\in\R$, we write $x^+ = \max\{x,0\}$.
Let
\begin{align}
	\delta_1  & = \mu_2 - (\lambda - \mu_1)^+ \label{eq:def_delta1},                                       \\
	\delta_2  & = \mu_1 - (\lambda - \mu_2)^+, \label{eq:def_delta2}                                       \\
	\deltamin & = \delta_1 \wedge \delta_2 = \min(\mu_1,\mu_2,\mu_1+\mu_2-\lambda) \label{eq:def_deltamin}
	.
\end{align}

Note that if~\eqref{eq:stab_cond} holds, then $\deltamin >0$.
Moreover, let
\begin{align}
	a_r & = \frac 14 \sqrt{(r \wedge 1)} \deltamin,
	\label{eq:def_a_const}
	\\
	b_r
	    & =
	2(r \vee 1)((\mu_1 \vee \mu_2)+\lambda)
	\label{eq:def_b_const}
	\\
	\scrC_r
	    & =
	\{\bfq \in\N_{\geq 0}^2: rq_1 + q_2 \leq \kappa_r\}, \qquad \kappa_r := \frac{4 b_r}{\deltamin}
	.
	\label{eq:def_C_set}
\end{align}
Note that $a_r,b_r$ and $\scrC_r$ also depend on $\lambda,\mu_1$, and $\mu_2$.
However, we omit this dependence in notation.
In the plane, the set $\scrC_r$ contains the points $(q_1,q_2)$ satisfying $q_2 \leq \kappa_r - rq_1$.
This means that $\scrC_r$ contains all points in the triangle defined by the points $(0,0)$, $(0,\kappa_r)$, and $(\kappa_r/r,0)$, as shown in \Cref{fig:plane_sets}.
If $r<\infty$, then $\scrC_r$ is finite.

We show in \Cref{lem:lyap_subgeom_drift} that $\QSED{r}(t)$ has negative drift with respect to~$\phi_r$ outside~$\scrC_r$.
The proof is in~\Cref{app:proof_lem_lyap_subgeom_drift}.

\begin{lemma}
	\label{lem:lyap_subgeom_drift}
	Let $r\in\R_{>0}$ and assume~\eqref{eq:stab_cond}.
	For $\bfq \in\N_{\geq 0}^2$,
	$
		(\calL_r\phi_r)(\bfq)
		\leq
		- a_r \sqrt{\phi_r(\bfq)} + b_r \ind{\bfq \in \scrC_r}
		.
	$
\end{lemma}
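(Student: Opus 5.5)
The plan is to compute $\calL_r\phi_r$ exactly, bound it linearly in the weighted coordinates $x := rq_1$ and $y := q_2$, and then convert the linear bound into one in $\sqrt{\phi_r}$.

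\textbf{Step 1: exact generator.} The increments of $\phi_r$ are
\begin{align*}
\phi_r(q_1+1,q_2)-\phi_r(\bfq) &= r(q_1+1), & \phi_r(q_1,q_2+1)-\phi_r(\bfq) &= q_2+1,\\
\phi_r(q_1-1,q_2)-\phi_r(\bfq) &= -rq_1, & \phi_r(q_1,q_2-1)-\phi_r(\bfq) &= -q_2 .
\end{align*}
The death increments already vanish when $q_i=0$, so the indicators $\ind{q_i>0}$ in \eqref{eq:def_drift_operator} can be dropped. By \eqref{eq:SEDr_dec}, \SED{r} sends the arrival to the queue with the smaller of $r(q_1+1)$ and $q_2+1$; ties go to server one, which is also a minimizer. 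Hence
\begin{align*}
(\calL_r\phi_r)(\bfq) = \lambda \min\{r(q_1+1),\, q_2+1\} - \mu_1 r q_1 - \mu_2 q_2 .
\end{align*}
In the coordinates $x,y$ this reads
\begin{align*}
(\calL_r\phi_r)(\bfq) \le \lambda\min\{x,y\} - \mu_1 x - \mu_2 y + \lambda(r\vee 1).
\end{align*}
This identity is the key structural fact: the Lyapunov function is tailored to the policy.

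\textbf{Step 2: linear negative drift.} Split into two cases.
\begin{itemize}
\item If $x\le y$, then $\lambda x-\mu_1x \le (\lambda-\mu_1)^+x \le (\lambda-\mu_1)^+y$. The main part is therefore at most $-\delta_1 y \le -\delta_1(x+y)/2$.
\item If $x>y$, the symmetric argument gives at most $-\delta_2 x \le -\delta_2(x+y)/2$.
\end{itemize}
Both cases use \eqref{eq:def_delta1}--\eqref{eq:def_deltamin}, and $\deltamin>0$ by \eqref{eq:stab_cond}. Thus
\begin{align*}
(\calL_r\phi_r)(\bfq) \le -\tfrac{\deltamin}{2}(x+y) + \lambda(r\vee1).
\end{align*}

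\textbf{Step 3: compare with $\sqrt{\phi_r}$.} Since $q(q+1)/2\le q^2$ for integers $q\ge0$,
\begin{align*}
\phi_r \le rq_1^2+q_2^2 = \frac{x^2}{r}+y^2 \le \frac{(x+y)^2}{r\wedge1}.
\end{align*}
Hence $x+y \ge \sqrt{r\wedge1}\,\sqrt{\phi_r}$, and so $\tfrac{\deltamin}{4}(x+y) \ge a_r\sqrt{\phi_r}$. Split the drift bound as
\begin{align*}
(\calL_r\phi_r)(\bfq) \le -\tfrac{\deltamin}{4}(x+y) - \tfrac{\deltamin}{4}(x+y) + \lambda(r\vee1).
\end{align*}
The first term is at most $-a_r\sqrt{\phi_r}$.
\begin{itemize}
\item Outside $\scrC_r$ we have $x+y>\kappa_r = 4b_r/\deltamin$. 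The second term is then below $-b_r \le -\lambda(r\vee1)$, so the last two terms cancel out nonpositively.
\item Inside $\scrC_r$ we drop the nonpositive second term and use $\lambda(r\vee1)\le b_r$.
\end{itemize}
Together this gives the claimed bound $-a_r\sqrt{\phi_r(\bfq)}+b_r\ind{\bfq\in\scrC_r}$.

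The only delicate step is Step 2: choosing the case split by which weighted queue is smaller, so that $(\lambda-\mu_i)^+$ pairs with the correct coordinate. The rest is bookkeeping of constants, and the definitions of $a_r$, $b_r$ and $\kappa_r$ leave comfortable slack.
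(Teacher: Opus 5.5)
Your proof is correct and follows the same route as the paper: you compute the generator exactly, obtain a linear drift bound $-\tfrac{\deltamin}{2}(rq_1+q_2)$ plus a constant via a two-case split, write $\tfrac12=\tfrac14+\tfrac14$ to absorb the constant outside $\scrC_r$, and use $rq_1+q_2\geq\sqrt{r\wedge 1}\sqrt{\phi_r(\bfq)}$. The only difference is bookkeeping. You write the arrival term as $\lambda\min\{r(q_1+1),q_2+1\}$, pull out the offsets as $\lambda(r\vee 1)$, and then split on $rq_1\le q_2$ rather than on the routing regions; this gives a cleaner constant than the paper's region-by-region estimate, and it is still bounded by $b_r$.
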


For $\scrA \subseteq \N_{\geq 0}^2$, we denote the first entrance time of $\QSED{r}(t)$ in $\scrA$ as
\begin{align}
	T_{\scrA}:= \inf_{t\geq 0}\{\QSED{r}(t) \in \scrA\}
	.
	\label{eq:def_first_entrance_time}
\end{align}
Note that $T_{\scrA}$ is a stopping time for $\QSED{r}(t)$ for any $\scrA$.\footnote{Since each subset of $\N_{\geq 0}^2$ is open and $\QSED{r}(t)$ is right continuous, we may write $\{T_{\scrA} < t\} = \bigcup_{q\in (0,t)\cap \mathbb{Q}} \{\QSED{r}(q) \in \scrA\}$. Now $\{\QSED{r}(q) \in \scrA\} \in \calF(q) \subseteq \calF(t)$ for all $q\in (0,t)\cap \mathbb{Q}$, where $\calF(t)$ is the natural filtration of $\QSED{r}(t)$. Hence, $T_{\scrA}$ is $\calF(t)$-measurable.}
Given the negative drift outside $\scrC_r$ in \Cref{lem:lyap_subgeom_drift}, we can now bound the first and second moment of the $T_{\scrC_r}$ for the set $\scrC_r$ in~\eqref{eq:def_C_set}.
This is stated in \Cref{lem:hitting_time_finite_moments}, which is proved in \Cref{app:proof_lem_hitting_time_finite_moments}. 
To prove the lemma, we also establish a stopped version of Dynkin's formula (see \Cref{lem:dynkin} in \Cref{app:dynkin}) that holds for finite-valued, possibly unbounded functions. 
Other references often assume bounded functions, see e.g.\ \cite{Dynkin1965,Meyn1993,Kushner1967}, while we needed this extension for the unbounded functions $\sqrt{\phi_r}$ and $\phi_r$. 
We use the notation $\P_\bfa$ and $\E_\bfa$ to denote probability and expectation conditional on the process $\QSED{r}(t)$ starting in state $\bfa\in\N_{\geq 0}^2$ at time $0$, i.e., $\QSED{r}(0)=\bfa$.

\begin{lemma}
	\label{lem:hitting_time_finite_moments}
	Let $r\in\R_{>0}$ and assume~\eqref{eq:stab_cond}.
	For $\bfq \in\N_{\geq 0}^2$,
	$\E_{\bfq}(T_{\scrC_r}) \leq 2a_r^{-1} \sqrt{\phi_r(\bfq)}$ and $\E_\bfq(T_{\scrC_r}^2) \leq 4a_r^{-2} \phi_r(\bfq)$.
\end{lemma}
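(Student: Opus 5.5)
The plan is to apply the stopped Dynkin formula (\Cref{lem:dynkin}) to two test functions built from $\phi_r$. Outside $\scrC_r$, \Cref{lem:lyap_subgeom_drift} gives $(\calL_r\phi_r)(\bfq)\le -a_r\sqrt{\phi_r(\bfq)}$, so up to time $T_{\scrC_r}$ the process sees only the negative drift term. For $\bfq\in\scrC_r$ both bounds are trivial since $T_{\scrC_r}=0$, so we assume $\bfq\notin\scrC_r$.

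\emph{First moment.} Take $V=2a_r^{-1}\sqrt{\phi_r}$. Concavity of $x\mapsto\sqrt{x}$ gives $\sqrt{y}-\sqrt{x}\le (y-x)/(2\sqrt{x})$ for $x>0$. Applying this to each jump in $\calL_r$ yields
\begin{align*}
(\calL_r\sqrt{\phi_r})(\bfq)\le \frac{(\calL_r\phi_r)(\bfq)}{2\sqrt{\phi_r(\bfq)}}\le -\frac{a_r}{2}
\end{align*}
for $\bfq\notin\scrC_r$; note $\phi_r(\bfq)>0$ there because $\bfq\neq 0$. This gives only $\E_\bfq T\le 2a_r^{-1}\sqrt{\phi_r}$ if $\calL_r V\le -1$, and indeed $\calL_r V\le -1$. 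Next apply Dynkin to $V$ at the bounded stopping time $T\wedge t$ with $T=T_{\scrC_r}$:
\begin{align*}
0\le \E_\bfq V(\bfQ(T\wedge t))\le V(\bfq)-\E_\bfq(T\wedge t).
\end{align*}
Monotone convergence as $t\to\infty$ then gives $\E_\bfq T\le 2a_r^{-1}\sqrt{\phi_r(\bfq)}$. Localization, e.g.\ also stopping at the exit time from a large ball, handles the unboundedness if \Cref{lem:dynkin} requires it, since the process is non-explosive with bounded jump rates.

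\emph{Second moment.} Use the time-dependent function $W(s,\bfq)=(s+V(\bfq))^2$, or equivalently compare $T^2$ via $T^2=2\int_0^T (T-s)\,ds$. Concretely, set $U(\bfq)=V(\bfq)^2=4a_r^{-2}\phi_r(\bfq)$. Its drift satisfies $\calL_r U\le -4a_r^{-1}\sqrt{\phi_r}=-2V$ outside $\scrC_r$, by \Cref{lem:lyap_subgeom_drift} directly. Then apply Dynkin to $U$ up to $T\wedge t$:
\begin{align*}
\E_\bfq\int_0^{T\wedge t}2V(\bfQ(s))\,ds\le U(\bfq).
\end{align*}
By the strong Markov property and the first-moment bound, $V(\bfQ(s))\ge \E_{\bfQ(s)}T=\E_\bfq(T-s\mid\calF_s)$ on $\{s<T\}$. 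Hence
\begin{align*}
\E_\bfq\int_0^{T}2V(\bfQ(s))\,ds\ge \E_\bfq\int_0^T 2(T-s)\,ds=\E_\bfq T^2,
\end{align*}
where exchanging the conditional expectation and the integral uses Fubini with $\ind{s<T}$, which is $\calF_s$-measurable. Letting $t\to\infty$ gives $\E_\bfq T^2\le U(\bfq)=4a_r^{-2}\phi_r(\bfq)$.

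The main obstacle is justifying the Dynkin steps for the unbounded functions $\sqrt{\phi_r}$ and $\phi_r$ and passing $t\to\infty$. The paper's extension \Cref{lem:dynkin} is designed for this. Beyond that, care is needed with the conditional-expectation and Fubini step, and with checking that $\phi_r>0$ outside $\scrC_r$ so that the concavity bound is valid.
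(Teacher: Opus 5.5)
Your proposal is correct and follows the paper's proof essentially step for step. For the first moment, both apply Dynkin's formula (\Cref{lem:dynkin}) to $\sqrt{\phi_r}$ together with the concavity bound. For the second moment, both use $T^2=2\int_0^T(T-s)\,\mathrm{d}s$, the Markov property with Fubini, and Dynkin applied to $\phi_r$.

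The one detail you leave implicit is the integrability condition of \Cref{lem:dynkin} for $\phi_r$, since its increments are unbounded. The paper verifies it using $|\phi_r(\bfq')-\phi_r(\bfq)|\le (1\vee r)(1+q_1+q_2)$ and the boundedness of $\E_\bfq(Q_1(s)+Q_2(s))$ for $s\in[0,t]$.
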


It follows from \Cref{lem:lyap_subgeom_drift,lem:hitting_time_finite_moments} that $\QSED{r}(t)$ is stable for any $0 < r < \infty$.
This is stated in \Cref{cor:stab_cond}.

\begin{corollary}
	\label{cor:stab_cond}
	Let $r\in\R_{>0}$.
	If~\eqref{eq:stab_cond} holds, then $\QSED{r}(\cdot)$ is positive recurrent.
\end{corollary}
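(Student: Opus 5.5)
The plan is to apply the Foster--Lyapunov criterion for continuous-time Markov chains, as in \citep{Meyn2009}, using the ingredients already established.

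First, note that $\QSED{r}(t)$ is a continuous-time Markov chain on $\N_{\geq 0}^2$ with bounded total jump rate $\lambda+\mu_1+\mu_2$, so it is non-explosive. It is also irreducible. From any state the origin is reachable through a sequence of departures. From the origin, any state $(q_1,q_2)$ is reachable as follows. An arrival in state $\bfq$ goes to server one or two according to \eqref{eq:SEDr_dec}, and a departure from the other server can be used to reach any target. Concretely, to reach $(q_1,q_2)$ we first build up queue one to $q_1$: if an arrival would be routed to server two, we let that customer depart before the next arrival, and repeat. This keeps $q_2$ at $0$, and eventually $(q_2+1)/(q_1+1)\ge r$ fails, so arrivals are sent to server two. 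Symmetric steps give the rest.

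A cleaner way to handle this is to observe that every state communicates with $\mathbf{0}$. Reaching $\mathbf{0}$ is clear. For the converse it suffices to restrict attention to the communicating class containing $\mathbf{0}$ and to argue positive recurrence of $\mathbf{0}$ within it. If needed, I would prove the reachability claim carefully by induction on $q_1+q_2$, using that from $(q_1,q_2)$ one can move to $(q_1+1,q_2)$ or $(q_1,q_2+1)$ after first adjusting the other coordinate via departures.

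Next, I would invoke \Cref{lem:hitting_time_finite_moments}. For every $\bfq$ it gives $\E_\bfq(T_{\scrC_r})\le 2a_r^{-1}\sqrt{\phi_r(\bfq)}<\infty$, and $a_r>0$ because \eqref{eq:stab_cond} gives $\deltamin>0$. The set $\scrC_r$ is finite since $r<\infty$. Starting from a state in $\scrC_r$, I would bound the expected return time to $\scrC_r$ by conditioning on the first jump. The holding time has mean at most $1/\lambda$, after which the chain sits in a neighbour $\bfq'$ with $\phi_r(\bfq')$ bounded over the finitely many neighbours of $\scrC_r$. The expected return time to $\scrC_r$ is therefore uniformly bounded over $\scrC_r$.

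A finite set with uniformly bounded expected return times implies positive recurrence for an irreducible non-explosive chain. The standard argument is that the embedded chain observed on $\scrC_r$ is a finite irreducible chain, hence positive recurrent, and the bounded expected excursion lengths then transfer this to the original chain. Alternatively, I would cite the Foster--Lyapunov theorem directly with \Cref{lem:lyap_subgeom_drift}. Since $-a_r\sqrt{\phi_r}\le -1$ outside a finite set, and enlarging $\scrC_r$ to absorb the remaining states, the drift condition $\calL_r\phi_r\le -1+b\ind{\scrC}$ holds with $\scrC$ finite and $\phi_r$ norm-like.

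The main obstacle is the irreducibility bookkeeping, which is routine but fiddly. The drift part is already supplied by the earlier lemmas.
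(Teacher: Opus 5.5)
Your main line matches the paper's proof. Both use that $\scrC_r$ is finite, get a finite expected return time to $\scrC_r$ from \Cref{lem:hitting_time_finite_moments}, and finish with a return-time criterion for positive recurrence. Your first-jump conditioning is a clean way to get the return-time bound, and it even gives a uniform bound: the mean holding time is at most $1/\lambda$, and $\sqrt{\phi_r}$ is bounded on the finitely many neighbours of $\scrC_r$. The paper closes by noting that finite sets are petite and citing Meyn--Tweedie's Theorem~10.4.10. Your trace-chain/regeneration argument is an equally valid, more elementary substitute. So is the direct Foster drift from \Cref{lem:lyap_subgeom_drift}, after enlarging $\scrC_r$ to contain the finite set where $a_r\sqrt{\phi_r}<1$.

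The one step that fails as written is irreducibility, which your trace-chain argument relies on.
\begin{itemize}
\item Your construction does not work. With $q_2$ held at $0$, an arrival goes to server one only while $1\geq r(q_1+1)$. So queue one can never exceed $\lfloor 1/r\rfloor$ this way, and when $r>1$ it cannot grow at all from $\mathbf{0}$.
\item Your induction step, ``adjust the other coordinate via departures,'' goes the wrong way. To add a customer to queue one you must first \emph{raise} $q_2$ above $r(q_1+1)-1$, not lower it.
\item The fallback of restricting to the communicating class of $\mathbf{0}$ only shows that this class is positive recurrent, not the whole chain.
\end{itemize}

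The correct argument is short. With arrivals only, the state climbs the staircase along $q_2+1=r(q_1+1)$. Because $0<r<\infty$, both coordinates diverge. If $q_1$ stayed at some $x$, only finitely many arrivals could go to server two before $q_2+1\geq r(x+1)$, after which an arrival goes to server one; the same holds with the roles of $q_1$ and $q_2$ swapped. Hence from $\mathbf{0}$ you reach some $(x,y)$ with $x\geq q_1$ and $y\geq q_2$ by arrivals alone. Departures at each server do not depend on the routing, so they can then trim the state down to $(q_1,q_2)$. Every such finite path has positive probability, which gives irreducibility.
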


\begin{proof}
	$\QSED{r}(\cdot)$ is a Markov process on $\N_{\geq 0}^2$.
	$\QSED{r}(\cdot)$ is irreducible, since with positive probability there exists a path between any pair of states.
	Since $\QSED{r}(\cdot)$ is irreducible, it is $\psi$-irreducible where $\psi$ is the counting measure~\citep[Sec.~1.3.1]{Meyn2009}.

	Under condition~\eqref{eq:stab_cond}, $\deltamin > 0$ in~\eqref{eq:def_deltamin}, and therefore $a_r>0$ in~\eqref{eq:def_a_const}.
	Since $\scrC_r$ in~\eqref{eq:def_C_set} is finite and $\QSED{r}(\cdot)$ is an irreducible \gls{BD} process with uniformly bounded jump rates, the expected exit time of $\scrC_r$ is finite. 
	Moreover, the expected first entrance time to $\scrC_r$ from any state neighboring $\scrC_r$ is finite by \Cref{lem:hitting_time_finite_moments}.
	It thus follows that the first return time of $\scrC_r$ has finite expectation. 
	Every finite subset of a countable state space is petite; see, e.g.,~\citep[Sec.~8.4.3]{Meyn2009}.
	Therefore, positive recurrence of $\QSED{r}(\cdot)$ follows from~\citep[Thm.~10.4.10~(ii)]{Meyn2009}.
\end{proof}

\subsection{Hitting probability analysis}

In this section we bound the probability that $\QSED{r}(t)$ reaches a large queue length before entering the set $\scrC_r$ in~\eqref{eq:def_C_set}.
For $m\in\N_{\geq 0}$ we denote
\begin{align}
	\scrM_m := \{\bfx \in \N_{\geq 0}^2: \ x_1 \vee x_2 \geq m+1\}
	.
	\label{eq:def_set_M}
\end{align}
$\scrM_m$ is shown in \Cref{fig:plane_sets}.

We aim to bound the probability that $\QSED{r}(t)$ enters $\scrM_m$ before $\scrC_r$ when the process starts from within the region between $\scrC_r$ and $\scrM_m$.
We do so by applying the \gls{OST} to a suitably constructed supermartingale, which we define next.

Let $\QSEDdiscrete{r}{n}$, $n\in\N_{\geq 0}$, be the embedded discrete-time Markov jump chain obtained by uniformization of $\QSED{r}(t)$ at jump times.
In particular, $\QSEDdiscrete{r}{n}$ satisfies for any $\psi:\N_{\geq 0}^2 \to\R$,
\begin{align}
	\E(\psi(\QSEDdiscrete{r}{n+1}) - \psi(\QSEDdiscrete{r}{n}) \mid \QSEDdiscrete{r}{n} = \bfq) & = \frac{(\calL_r \psi)(\bfq)}{\nu_r(\bfq)}
	,
	\label{eq:Q_discrete_drift_op}
\end{align}
with $\calL_r$ in~\eqref{eq:def_drift_operator} and $\nu_r(\bfq) = \lambda + \mu_1 \ind{q_1 > 0}  + \mu_2 \ind{q_2>0}$ the total jump rate of $\QSED{r}(t)$ out of state $\bfq$.

Define
\begin{align}
	\label{eq:exp_martingale_theta}
	\theta_r
	 & :=
	\min
	\Bigl(
	\frac{\ln(2)}{\sqrt{2(r \vee 1)}}, \frac{a_r}{4(r \vee 1)(\lambda+\mu_1+\mu_2)}
	\Bigr)
	.
\end{align}
Note that $\theta_r > 0$ if $r\in\R_{>0}$ and~\eqref{eq:stab_cond} holds, since the latter implies $\deltamin > 0$ in~\eqref{eq:def_deltamin} and therefore $a_r>0$ in~\eqref{eq:def_a_const}.
We consider
\begin{align}
	\label{eq:def_W_martingale}
	W_r^n & = \exp(\theta_r \sqrt{\phi_r(\QSEDdiscrete{r}{n})})
\end{align}
with $\phi_r$ in~\eqref{eq:def_lyap_func},
and show that the stopped process $\{W_r^{n \wedge T_{\scrC_r \cup \scrM_m}}\}$ is a supermartingale in \Cref{lem:martingale}.
The proof in \Cref{app:proof_lem_martingale} uses that $\phi_r$ has negative drift outside $\scrC_r$, as shown in \Cref{lem:lyap_subgeom_drift}.
Let $\calF^n = \sigma(\QSEDdiscrete{r}{0},\dots,\QSEDdiscrete{r}{n})$ denote the natural filtration of $\QSEDdiscrete{r}{n}$.
\begin{lemma}
	\label{lem:martingale}
	Let $r\in\R_{>0}$, $m\in\N_{\geq 1}$, and assume~\eqref{eq:stab_cond}.
	For $n \in\N_{\geq 0}$,
	$
		\E(W_r^{(n+1) \wedge T_{\scrC_r \cup \scrM_m} } \mid \calF^n)
		\leq
		W_r^{n\wedge T_{\scrC_r \cup \scrM_m} }
		.
	$
\end{lemma}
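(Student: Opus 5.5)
On the event $\{T_{\scrC_r\cup\scrM_m}\le n\}$ the stopped process is frozen, so $W_r^{(n+1)\wedge T}=W_r^{n\wedge T}$ and there is nothing to show. This event is $\calF^n$-measurable. It therefore suffices to work on $\{T>n\}$, where $\bfq:=\QSEDdiscrete{r}{n}\notin\scrC_r\cup\scrM_m$. By the Markov property, the claim reduces to the one-step inequality
\[
\E\bigl(e^{\theta_r\sqrt{\phi_r(\QSEDdiscrete{r}{n+1})}}\mid \QSEDdiscrete{r}{n}=\bfq\bigr)\le e^{\theta_r\sqrt{\phi_r(\bfq)}},\qquad \bfq\notin\scrC_r .
\]
In fact I will not need $\bfq\notin\scrM_m$; stopping at $\scrM_m$ only matters later in the optional stopping argument.

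Write $s=\sqrt{\phi_r(\bfq)}$ and $\Delta=\sqrt{\phi_r(\bfq')}-s$ for the random next state $\bfq'$. There are two ingredients.

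\textbf{Step 1: the increments are bounded.} A jump changes $\phi_r$ by at most $(r\vee1)(q_i+1)$. Since $\phi_r(\bfq)\ge \tfrac{(r\wedge 1)}{2}q_i^2$, and $\sqrt{x+h}-\sqrt{x}\le h/(2\sqrt{x})$, we should get $|\Delta|\le c$ for a constant $c\le\sqrt{2(r\vee1)}$. Small $q_i$ is handled by $\sqrt{h}$ directly. With this bound, $\theta_r|\Delta|\le\ln 2$ by the first term in the minimum defining $\theta_r$. I will use the elementary inequality $e^x\le 1+x+x^2$ for $|x|\le\ln 2$ (valid since $\ln 2<1.79$).

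\textbf{Step 2: the drift of $\sqrt{\phi_r}$ is negative and of order one.} By concavity, $\sqrt{\phi'}-\sqrt{\phi}\le(\phi'-\phi)/(2\sqrt\phi)$. Taking expectations via~\eqref{eq:Q_discrete_drift_op} and applying \Cref{lem:lyap_subgeom_drift} with $\bfq\notin\scrC_r$ gives
\[
\E(\Delta)\le \frac{(\calL_r\phi_r)(\bfq)}{2s\,\nu_r(\bfq)}\le -\frac{a_r}{2\nu_r(\bfq)}\le-\frac{a_r}{2(\lambda+\mu_1+\mu_2)} .
\]
Combining the two steps,
\[
\E e^{\theta_r\Delta}\le 1+\theta_r\E\Delta+\theta_r^2\E\Delta^2\le 1-\theta_r\Bigl(\frac{a_r}{2(\lambda+\mu_1+\mu_2)}-\theta_r c^2\Bigr).
\]
The bracket is nonnegative provided $\theta_r\le a_r/\bigl(2c^2(\lambda+\mu_1+\mu_2)\bigr)$. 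With $c^2\le 2(r\vee1)$ this is exactly the second term in the definition of $\theta_r$. Multiplying through by $e^{\theta_r s}$ then gives the one-step inequality.

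\textbf{Main obstacle.} The delicate step is making the increment bound in Step 1 precise with constants matching $\theta_r$. The natural bound carries a factor like $\sqrt{(r\vee1)/(r\wedge1)}$, which is larger than $\sqrt{2(r\vee1)}$. I would first try to bound $|\Delta|$ more carefully. Increases satisfy $\sqrt{\phi+h}-\sqrt\phi\le\sqrt h$, with $h\le(r\vee1)(q_i+1)$. For decreases I would compare $\phi_r(\bfq)-\phi_r(\bfq-e_i)$ with $\sqrt{\phi_r}$ coordinate-wise; for instance $r q_1\le\sqrt{2r\phi_r}$, so the $1/s$ factor cancels. This should give $|\Delta|\le\sqrt{2(r\vee1)}$. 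If the constant comes out slightly off, I would adjust the quadratic remainder bound (for example using $e^x\le1+x+\tfrac{x^2}{2}e^{|x|}\le 1+x+x^2$) so that the constants in $\theta_r$ suffice.
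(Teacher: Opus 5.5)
Your proposal is correct and follows essentially the paper's proof. Both arguments freeze the process on $\{T\le n\}$ and bound the one-step drift of $\sqrt{\phi_r}$ by $-a_r/(2(\lambda+\mu_1+\mu_2))$, using the tangent-line inequality and \Cref{lem:lyap_subgeom_drift}. Both then bound the increments by $\sqrt{2(r\vee 1)}$ and close with $e^{x}\le 1+x+\tfrac12 x^2e^{|x|}\le 1+x+x^2$, using the two terms in the definition of $\theta_r$.

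The only difference is how the increment bound is obtained. The paper drops the other coordinate via $\sqrt{a+c}-\sqrt{b+c}\le\sqrt a-\sqrt b$ and then uses $\sqrt{(q+1)(q+2)}-\sqrt{q(q+1)}\le 2$. Your coordinate-wise comparison against $r q_1(q_1+1)/2$, instead of the lossy $(r\wedge 1)q_i^2/2$, gives the same constant. So your fallback of adjusting the remainder bound is unnecessary.
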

\Cref{lem:hitting_probability} bounds the probability that $\QSED{r}(t)$ enters $\scrM_m$ before $\scrC_r$.
Its proof is in \Cref{app:proof_lem_hitting_probability}.
\begin{lemma}
	\label{lem:hitting_probability}
	Let $r\in\R_{>0}$, $m\in\N_{\geq 1}$, and assume~\eqref{eq:stab_cond}.
	For $\bfy\in\N_{\geq 0}^2\setminus (\scrC_r\cup\scrM_m)$,
	$
		\P_{\bfy}(\QSED{r}(T_{\scrC_r \cup \scrM_m}) \in \scrM_m) \leq \allowbreak
		\exp\bigl(-\theta \bigl(m \sqrt{(r \wedge 1)/2} - \sqrt{\phi(\bfy)}\bigr)\bigr)
		.
	$
\end{lemma}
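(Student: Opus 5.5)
The plan is to apply the optional stopping theorem to the stopped supermartingale of \Cref{lem:martingale}, working with the embedded jump chain $\QSEDdiscrete{r}{n}$. The chain visits the same states in the same order as $\QSED{r}(t)$, so the event $\{\QSED{r}(T_{\scrC_r\cup\scrM_m})\in\scrM_m\}$ coincides with the corresponding event for the discrete chain. Write $\tau$ for the discrete entrance time of $\scrC_r\cup\scrM_m$, and let the chain start at $\QSEDdiscrete{r}{0}=\bfy\notin\scrC_r\cup\scrM_m$.

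\textbf{Step 1: finiteness of $\tau$.} The complement of $\scrM_m$ is the finite box $\{0,\dots,m\}^2$. The chain is irreducible, so from any state in the box it exits the box or enters $\scrC_r$ within a bounded number of steps with probability bounded below. Hence $\tau<\infty$ almost surely; alternatively, this follows from \Cref{lem:hitting_time_finite_moments}.

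\textbf{Step 2: optional stopping.} By \Cref{lem:martingale}, $\E(W_r^{n\wedge\tau})\le W_r^0=\exp(\theta_r\sqrt{\phi_r(\bfy)})$ for all $n$. Since $W_r^{n\wedge\tau}\ge0$ and $W_r^{n\wedge\tau}\to W_r^\tau$ almost surely, Fatou's lemma gives $\E(W_r^\tau)\le \exp(\theta_r\sqrt{\phi_r(\bfy)})$. Boundedness is not even needed here: the process is bounded before $\tau$, and overshoot by one step is bounded too.

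\textbf{Step 3: lower bound on $\scrM_m$.} If $\QSEDdiscrete{r}{\tau}\in\scrM_m$, then some coordinate satisfies $x_i\ge m+1$. In that case
\[
\phi_r(\bfx)\ge (r\wedge1)\frac{(m+1)(m+2)}{2}\ge (r\wedge1)\frac{m^2}{2},
\]
so $\sqrt{\phi_r(\bfx)}\ge m\sqrt{(r\wedge1)/2}$. Since $W\ge0$ everywhere, we get
\[
\E(W_r^\tau)\ge \exp\bigl(\theta_r m\sqrt{(r\wedge1)/2}\bigr)\,\P_\bfy(\QSEDdiscrete{r}{\tau}\in\scrM_m).
\]

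\textbf{Step 4: conclusion.} Combining Steps 2 and 3 yields the stated bound, with $\theta=\theta_r$ and $\phi=\phi_r$.

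The main point to check is Step 2, namely that Fatou's lemma (or bounded convergence) legitimately passes the supermartingale inequality to the limit at $\tau$. This works because $\tau<\infty$ almost surely and $W\ge0$. All the analytic difficulty is already contained in \Cref{lem:martingale}.
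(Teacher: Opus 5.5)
Your proof is correct and follows the paper's argument. You pass to the jump chain, use the stopped supermartingale of \Cref{lem:martingale} with optional stopping to get $\E_\bfy(W_r^{\tau})\le \exp(\theta_r\sqrt{\phi_r(\bfy)})$, and lower-bound $W_r^{\tau}$ on the event of entering $\scrM_m$.

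The only difference is technical. You pass to the limit via Fatou's lemma and $\tau<\infty$ almost surely, whereas the paper uses the bounded-stopped-process version of the optional stopping theorem. Your route also makes explicit the almost-sure finiteness that the paper uses implicitly when it asserts that the chain ends in $\scrC_r\cup\scrM_m$ with probability one.
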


\subsection{Tail bound for the maximum queue length in a busy period}
\label{sec:tail_bnd_QL}

\Cref{lem:overflow_bp} bounds the probability that $\QSED{r}(t)$ exceeds $m$ in a busy period.
While the proof is in \Cref{app:proof_lem_overflow_bp}, we outline the main arguments here.

$\QSEDsingle{r}{1}(t) \vee \QSEDsingle{r}{2}(t)$ exceeds $m$ if and only if $\QSED{r}(t)$ enters $\scrM_m$ in~\eqref{eq:def_set_M}.
A busy period starts from the origin $\mathbf{0} = (0,0)$ which lies in $\scrC_r$; recall~\eqref{eq:def_C_set}.
To reach $\scrM_m$ within a busy period, $\QSED{r}(t)$ must therefore exit $\scrC_r$.
The process can only exit $\scrC_r$ via its outer boundary
\begin{align}
	\scrB_r
	 & :=
	\{\bfq \in\N_{\geq 0}^2: \kappa_r < rq_1 + q_2  \leq \kappa_r + (r \vee 1) \}
	.
	\label{eq:def_set_B}
\end{align}
$\scrB_r$ is shown in \Cref{fig:plane_sets}.
If the process is in $\scrB_r$, the process can either (i) return to $\scrC_r$, or (ii) continue outward to hit $\scrM_m$.

The proof of \Cref{lem:overflow_bp} uses a gambler's ruin style argument between the regions $\scrC_r$, $\scrB_r$, and $\scrM_m$.
We use that there is a non-zero probability $p_r$ to hit $\mathbf{0}$ before entering $\scrB_r$ when starting from $\scrC_r$.
We also use that the probability to enter $\scrM_m$ when starting in $\bfx\notin\scrC_r \cup \scrM_m$ is related to $\phi_r(\bfx)$ via \Cref{lem:hitting_probability}.
Note that $\scrB_r$ is disjoint from $\scrC_r \cup \scrM_m$, so we may apply \Cref{lem:hitting_probability} for $\bfx\in\scrB_r$.
We therefore show that $\max_{\bfy\in\scrB_r} \phi_r(\bfy)$ is upper bounded by a constant~$\xi_r$.

We next specify~$p_r$ and~$\xi_r$.
The resulting bounds are not tight, but suffice for our purposes.

	{\it Emptying probability $p_r$.}
Suppose $\QSED{r}(0) = \bfx \in\scrC_r$.
$\QSED{r}(t)$ will hit $\bf0$ before entering $\scrB_r$ if all customers depart before the next arrival.
The arrival rate is $\lambda$ while the departure rate is $\mu_1\ind{x_1>0} + \mu_2\ind{x_2>0}$.
The probability that the next event is a departure is thus at least $\min_{i\in\{1,2\}} \mu_i / (\lambda+\mu_i) = (\mu_1 \wedge \mu_2) / (\lambda+(\mu_1 \wedge \mu_2))$.
The total number of customers in state $\bfx\in\scrC_r$ is $x_1+x_2\leq \kappa_r ((1/r) \vee 1)$ by~\eqref{eq:def_C_set}.
We therefore define
\begin{align}
	\label{eq:bnd_empty_before_arr}
	p_r :=
	\Bigl(
	\frac{\mu_1 \wedge \mu_2}{\lambda+(\mu_1 \wedge \mu_2)}
	\Bigr)
	^
	{\kappa_r ((1/r) \vee 1)}
	\in (0,1)
	.
\end{align}

{\it Boundary potential $\xi_r$.}
It can be verified that $\phi_r(\bfq) \leq ((1/r) \vee 1)(rq_1+q_2)^2 + (rq_1+q_2)$ for $\bfq\in\N_{\geq 0}^2$.
By definition of $\scrB_r$, in~\eqref{eq:def_set_B}, we have
\begin{align}
	\label{eq:bnd_phi_on_B}
	\max_{\bfy\in\scrB_r} \phi_r(\bfy)
	 & \leq
	((1/r) \vee 1) \bigl(\kappa_r + (r \vee 1)\bigr)^2 + \kappa_r + (r \vee 1) =: \xi_r
	.
\end{align}

We write $T_{\mathbf{0}} := T_{\{\mathbf{0}\}}$.
We can now state \Cref{lem:overflow_bp}.

\begin{lemma}
	\label{lem:overflow_bp}
	Let $r\in\R_{>0}$, $m\in\N_{\geq 1}$, and assume~\eqref{eq:stab_cond}.
	Then
	$
		\P_{\mathbf{0}}
		\bigl(
		\max_{0 \leq t\leq T_{\mathbf{0}}}
		\bigl(\QSEDsingle{r}{1}(t) \vee \QSEDsingle{r}{2}(t)\bigr)
		> m
		\bigr)
		\leq
		p_r^{-1} \exp\bigl(-\theta_r \bigl(m \sqrt{(r \wedge 1)/2} - \sqrt{\xi_r}\bigr)\bigr)
		.
	$
\end{lemma}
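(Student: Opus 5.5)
The plan is a regeneration (gambler's ruin) argument over the excursions of $\QSED{r}$ out of $\scrC_r$ during a busy period. Write $\eta := \exp\bigl(-\theta_r\bigl(m\sqrt{(r\wedge1)/2}-\sqrt{\xi_r}\bigr)\bigr)$. If $\eta \geq 1$ the bound is trivial, since $p_r^{-1}\eta\geq 1$; so assume $\eta<1$. This also forces $m\sqrt{(r\wedge 1)/2}>\sqrt{\xi_r}\ge\sqrt{\phi_r(\bfy)}$ for $\bfy\in\scrB_r$, which will be used to ensure $\scrB_r\cap\scrM_m=\emptyset$. Indeed, $\phi_r(\bfy)\ge (r\wedge1)(y_1\vee y_2)^2/2$, so if $\bfy\in\scrM_m$ then $\sqrt{\phi_r(\bfy)}\geq (m+1)\sqrt{(r\wedge1)/2}$, a contradiction. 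Similarly $\scrC_r\cap\scrM_m=\emptyset$ because $\phi_r$ on $\scrC_r$ is at most $\xi_r$.

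Starting from $\mathbf 0$, the path is decomposed by alternating stopping times. Let $\sigma_0=0$. Let $\tau_j$ be the first time after $\sigma_{j-1}$ that the process enters $\scrB_r$, and let $\sigma_j$ be the first time after $\tau_j$ that it enters $\scrC_r$. Since jumps change $rq_1+q_2$ by at most $r\vee1$, leaving $\scrC_r$ means landing in $\scrB_r$. Likewise, reaching $\scrM_m$ from $\scrC_r$ requires first passing through $\scrB_r$, and returning inward lands in $\scrC_r$.

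Overflow within the busy period requires some $j$ with two properties:
\begin{itemize}
	\item[(a)] the process did not hit $\mathbf 0$ during $[\sigma_{i-1},\tau_i]$ for all $i\le j$;
	\item[(b)] starting from $\QSED{r}(\tau_j)\in\scrB_r$, it enters $\scrM_m$ before $\scrC_r$.
\end{itemize}
By the strong Markov property at $\tau_j$, together with \Cref{lem:hitting_probability} and \eqref{eq:bnd_phi_on_B}, event (b) has conditional probability at most $\eta$. For (a), use the strong Markov property at each $\sigma_{i-1}$. The state there lies in $\scrC_r$; for $i=1$ it is $\mathbf 0$ itself, so that case is handled by noting the busy period must first leave $\mathbf 0$. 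From any state in $\scrC_r$, the probability of emptying before reaching $\scrB_r$ is at least $p_r$ by \eqref{eq:bnd_empty_before_arr}. Hence each excursion to $\scrB_r$ that is completed without emptying costs a factor $(1-p_r)$.

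Summing gives
\[
P\le\sum_{j\ge1}(1-p_r)^{j-1}\eta=p_r^{-1}\eta .
\]
The careful point is the indexing. The first excursion from $\mathbf 0$ after the initial arrival must not be charged a factor $p_r$, which is why the exponent is $j-1$. One should also check that ``hitting $\mathbf 0$'' inside $[\sigma_{i-1},\tau_i]$ correctly ends the busy period, with $T_{\mathbf 0}$ understood as the return time after leaving $\mathbf 0$.

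The main obstacle is making this counting rigorous. I would define events $G_j=\{\tau_j<T_{\mathbf 0}\}$ and bound $\P(G_{j+1})\le(1-p_r)\P(G_j)$ by conditioning on $\calF_{\sigma_j}$ on $\{\sigma_j<\infty\}$. Then overflow $\subseteq\bigcup_j\bigl(G_j\cap\{\text{(b) at }\tau_j\}\bigr)$. The discrete jump chain can be used throughout, since hitting events are identical for it and the continuous-time process.
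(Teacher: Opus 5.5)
Your argument is correct, and it reaches the bound by a somewhat different route from the paper. The paper never enumerates excursions. It sets $c:=\inf_{\bfx\in\scrC_r}\P_\bfx(\bfQ(T_{\mathbf{0}\cup\scrM_m})=\mathbf{0})$ and applies the strong Markov property once at $T_{\mathbf{0}\cup\scrB_r}$ and once at $T_{\scrC_r\cup\scrM_m}$. This gives the self-referential inequality $c\ge a+c\,b\,(1-a)$, where $a\ge p_r$ is the probability of emptying before reaching $\scrB_r$ and $b\ge 1-\eta$ (by \Cref{lem:hitting_probability}) is the probability of returning to $\scrC_r$ before $\scrM_m$. Solving yields $c\ge p_r/(1-(1-\eta)(1-p_r))\ge 1-\frac{1-p_r}{p_r}\eta$.

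You instead unroll this renewal into the explicit stopping times $\tau_j,\sigma_j$ and take a union bound with geometric weights $\P(G_j)\le(1-p_r)^{j-1}$. Both proofs rest on the same two ingredients: $p_r$, and \Cref{lem:hitting_probability} on $\scrB_r$ via $\xi_r$.

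The fixed-point version is shorter, avoids an infinite family of stopping times, and gives the slightly sharper constant $(1-p_r)p_r^{-1}\eta$. Your version is more elementary and needs only an upper bound on the failure probability of each excursion. It is also more careful at two points the paper passes over:
\begin{itemize}
\item Your reduction to $\eta<1$ supplies the disjointness $\scrC_r\cap\scrM_m=\scrB_r\cap\scrM_m=\emptyset$. The paper asserts this without proof, and it actually fails for small $m$, where the bound happens to be trivial anyway.
\item You make explicit that $T_{\mathbf{0}}$ must be read as the return time when the process starts at $\mathbf{0}$.
\end{itemize}

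One small remark: you do not need to spare the first excursion. Starting from $\mathbf{0}$, the first event is an arrival, which moves the process to $(1,0)$ or $(0,1)$. Both states lie in $\scrC_r\setminus\{\mathbf{0}\}$, since $\kappa_r=4b_r/\deltamin>8(r\vee 1)$. The same emptying argument therefore gives $\P(G_1)\le 1-p_r$, which recovers the paper's constant.
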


\begin{figure}[h]
	\begin{center}
		\includegraphics[width=.55\linewidth]{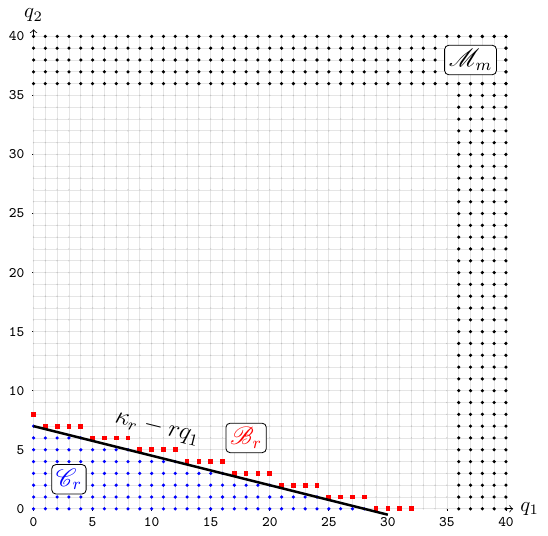}
	\end{center}
	\caption{
		Geometry of the sets $\scrC_r$, $\scrB_r$, and $\scrM_m$ in the plane $\N_{\geq 0}^2$ for $\lambda = 5$, $\mu_1 = 3$, $\mu_2 = 4$, $r=1/4$, and~$m=35$.
		In this case, the line $q_2 = \kappa_r - q_1$ intersects the $q_2$-axis at $q_2 = 7$ and the $q_1$-axis at $q_1 = 28$.
	}
	\label{fig:plane_sets}
\end{figure}

\subsection{Moment bounds for hitting times}
We next derive bounds on the first and second moment of $T_{\mathbf{0}}$.
We first control the hitting time of either the origin or the outer boundary $\scrB_r$ when $\QSED{r}(t)$ starts in $\scrC_r$ in \Cref{lem:bnd_hit_0B}.
Its proof is in \Cref{app:proof_lem_bnd_hit_0B}.

Let
\begin{align}
	\label{eq:def_ell_r}
	\ell_r
	 & :=
	\frac{\lceil ((1/r) \vee 1) \kappa_r \rceil }{\mu_1 \wedge \mu_2}
	.
\end{align}

\begin{lemma}
	\label{lem:bnd_hit_0B}
	Let $r\in\R_{>0}$ and assume~\eqref{eq:stab_cond}.
	Then
	$
		\max_{\bfy\in\scrC_r} \E_\bfy(T_{\mathbf{0}\cup\scrB_r})
		\allowbreak
		\leq
		\allowbreak
		2 \ell_r e^{\lambda \ell_r}
	$
	and \allowbreak
	$
		\max_{\bfy\in\scrC_r} \E_\bfy(T_{\mathbf{0}\cup\scrB_r}^2)
		\allowbreak
		\leq
		\allowbreak
		8 \ell_r^2 e^{2\lambda \ell_r}
		.
	$
\end{lemma}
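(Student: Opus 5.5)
We bound $T_{\mathbf{0}\cup\scrB_r}$ started in $\scrC_r$ by a geometric number of independent "trials" of deterministic length $\ell_r$. Each trial succeeds with probability at least $e^{-\lambda\ell_r}/2$.

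\textbf{Single trial.} Fix $\bfy\in\scrC_r$ and consider the time window $[0,\ell_r]$. We show that, with probability at least $q:=\tfrac12 e^{-\lambda \ell_r}$, the process hits $\mathbf{0}\cup\scrB_r$ during this window. Since $\scrC_r$ can only be exited through $\scrB_r$, it suffices to show that the process hits $\mathbf{0}$ by time $\ell_r$ on the event that it stays in $\scrC_r$.

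Consider the event $A$ that no arrival occurs in $[0,\ell_r]$; its probability is $e^{-\lambda\ell_r}$. On $A$, the number of customers $x_1+x_2\le ((1/r)\vee1)\kappa_r\le \lceil((1/r)\vee1)\kappa_r\rceil=:K$ can only decrease. While the system is nonempty, the total departure rate is at least $\mu_1\wedge\mu_2$. Coupling via uniformization, the time to empty is stochastically dominated by a sum of $K$ i.i.d.\ $\mathrm{Exp}(\mu_1\wedge\mu_2)$ variables. This sum has mean $K/(\mu_1\wedge\mu_2)=\ell_r$. By Markov's inequality, the process empties within time $2\ell_r$ with probability at least $1/2$.

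Arrivals are independent of the departure clocks. To make this rigorous, build the process from independent Poisson arrival and service clocks. The window should therefore be taken as $2\ell_r$, with no arrival in $[0,2\ell_r]$, giving success probability $q:=\tfrac12 e^{-2\lambda\ell_r}$.

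\textbf{Adjusting to the stated constants.} The extra factor of $2$ in the exponent would not give the stated bound. To obtain exactly $e^{\lambda\ell_r}$, I would instead use window length $\ell_r$ and the exact count of departures. I would check whether $\P(\mathrm{Poisson}((\mu_1\wedge\mu_2)\ell_r)\ge K)\ge 1/2$, i.e.\ that the median of a Poisson variable with mean $K$ is at least $K$. This is a known fact for integer means. Together with no arrivals (probability $e^{-\lambda\ell_r}$), it gives $q=\tfrac12 e^{-\lambda\ell_r}$.

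\textbf{Iteration.} By the strong Markov property at times $j\ell_r$, if the target has not been hit, the process is still in $\scrC_r$. Thus
\[
\P_\bfy(T_{\mathbf{0}\cup\scrB_r}>j\ell_r)\le(1-q)^j .
\]
Hence $T_{\mathbf{0}\cup\scrB_r}\le \ell_r G$ with $G$ geometric with success probability $q$, which gives $\E G=1/q$ and $\E G^2=(2-q)/q^2\le 2/q^2$. This yields
\[
\E_\bfy T\le \ell_r/q=2\ell_re^{\lambda\ell_r},\qquad \E_\bfy T^2\le 2\ell_r^2/q^2=8\ell_r^2e^{2\lambda\ell_r},
\]
uniformly over $\bfy\in\scrC_r$.

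The main obstacle is the single-trial bound with the sharp constant, in particular the Poisson-median step. If that fails, I would fall back on Markov's inequality with a window of length $2\ell_r$, which gives the same structure of bound with slightly worse constants.
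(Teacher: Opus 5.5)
Your proposal is correct and takes essentially the same route as the paper. The single trial is a window of length $\ell_r$ with no arrivals (probability $e^{-\lambda\ell_r}$) and at least $K=(\mu_1\wedge\mu_2)\ell_r$ departures, bounded below via a dominated $\mathrm{Poisson}(K)$ count and the integer-mean median fact (probability at least $1/2$). The Markov property at times $n\ell_r$ then gives tails $(1-\tfrac12 e^{-\lambda\ell_r})^n$, and summing them yields both moment bounds. The paper integrates $\mathbb{P}(T>t)$ and $2t\,\mathbb{P}(T>t)$ blockwise, where you use domination by $\ell_r G$ with $G$ geometric; this is the same computation.

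The Markov-inequality fallback with window $2\ell_r$ is unnecessary. The fact $\mathbb{P}(X\ge K)\ge 1/2$ for $X\sim\mathrm{Poisson}(K)$ with integer $K$ is true, and it is exactly what the paper invokes.
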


\Cref{lem:E_B_bnd} bounds the first and second moment of $T_{\mathbf{0}}$.
Its proof in \Cref{app:proof_lem_E_B_bnd} uses the tail bound in \Cref{lem:bnd_hit_0B}.
Recall $p_r$ and $\xi_r$ as defined in~\eqref{eq:bnd_empty_before_arr} and~\eqref{eq:bnd_phi_on_B}, respectively.

\begin{lemma}
	\label{lem:E_B_bnd}
	Let $r\in\R_{>0}$ and assume~\eqref{eq:stab_cond}.
	For $\bfq\in\N_{\geq 0}^2$,
	\begin{align}
		\label{eq:bnd_B_final}
		\E_\bfq(T_{\mathbf{0}})
		 & \leq
		\beta_r(\bfq)
		,
		\qquad
		\beta_r(\bfq)
		:=
		2a_r^{-1} \sqrt{\phi_r(\bfq)}
		+
		p_r^{-1} (2 \ell_r e^{\lambda \ell_r} + 2a_r^{-1} \sqrt{\xi_r})
		,
		\\
		\label{eq:bnd_B_squared_final}
		\E_\bfq(T_{\mathbf{0}}^2)
		 & \leq
		\gamma_r(\bfq),
		\qquad
		\gamma_r(\bfq)
		:=
		8 a_r^{-2} \phi_r(\bfq)
		+
		16 p_r^{-2} (8 \ell_r^2 e^{2 \lambda \ell_r} + 4a_r^{-2} \xi_r)
		.
	\end{align}
\end{lemma}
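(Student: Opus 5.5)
We split the path from $\bfq$ to $\mathbf{0}$ at the first entrance into $\scrC_r$ and then use a renewal (geometric-trials) argument between $\scrC_r$ and $\scrB_r$. By the strong Markov property at $T_{\scrC_r}$,
\[
T_{\mathbf{0}} = T_{\scrC_r} + T'_{\mathbf{0}},
\]
where $T'_{\mathbf{0}}$ is the time to reach $\mathbf{0}$ from $\QSED{r}(T_{\scrC_r})\in\scrC_r$. The first term is controlled by \Cref{lem:hitting_time_finite_moments}: $\E_\bfq(T_{\scrC_r})\le 2a_r^{-1}\sqrt{\phi_r(\bfq)}$ and $\E_\bfq(T_{\scrC_r}^2)\le 4a_r^{-2}\phi_r(\bfq)$. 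It therefore suffices to bound $\max_{\bfy\in\scrC_r}\E_\bfy(T_{\mathbf{0}})$ by $p_r^{-1}(2\ell_r e^{\lambda\ell_r}+2a_r^{-1}\sqrt{\xi_r})$ and its second moment by a constant of the displayed form. For the second moment we use $(x+y)^2\le 2x^2+2y^2$, which gives the factor $8a_r^{-2}\phi_r(\bfq)$ from the first term.

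\emph{Renewal cycles from $\scrC_r$.} Starting from $\bfy\in\scrC_r$, a cycle runs the process until it hits $\mathbf{0}\cup\scrB_r$, which takes at most time $T_{\mathbf{0}\cup\scrB_r}$, and, if it landed in $\scrB_r$, continues until it re-enters $\scrC_r$. By \Cref{lem:bnd_hit_0B} the first part has mean at most $2\ell_re^{\lambda\ell_r}$ and second moment at most $8\ell_r^2e^{2\lambda\ell_r}$, uniformly over starting points in $\scrC_r$. By \Cref{lem:hitting_time_finite_moments} and $\phi_r\le\xi_r$ on $\scrB_r$ (see~\eqref{eq:bnd_phi_on_B}), the second part has mean at most $2a_r^{-1}\sqrt{\xi_r}$ and second moment at most $4a_r^{-2}\xi_r$. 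Each cycle ends at $\mathbf{0}$ with probability at least $p_r$, by the definition~\eqref{eq:bnd_empty_before_arr} of $p_r$. Hence, conditionally on the past, and by the strong Markov property at cycle endpoints, the number of cycles $N$ is dominated by a geometric random variable with success probability $p_r$.

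\emph{Summing the cycles.} Write $T'_{\mathbf{0}}=\sum_{j=1}^N C_j$, where the $C_j$ are the cycle lengths and each satisfies $\E(C_j\mid\mathcal{F}_{j-1})\le c_1:=2\ell_re^{\lambda\ell_r}+2a_r^{-1}\sqrt{\xi_r}$. A Wald-type identity (using that $\{N\ge j\}$ is determined before cycle $j$) gives $\E T'_{\mathbf{0}}\le \sum_j \P(N\ge j)\,c_1\le p_r^{-1}c_1$, which is the stated first-moment term.

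For the second moment, use Minkowski's inequality in $L^2$:
\[
\|T'_{\mathbf{0}}\|_2\le\sum_j \|C_j\ind{N\ge j}\|_2\le \sum_j \sqrt{(1-p_r)^{j-1}c_2},
\]
where $c_2:=2(8\ell_r^2e^{2\lambda\ell_r}+4a_r^{-2}\xi_r)$ bounds the conditional second moment of a cycle. The geometric sum $\sum_j(1-p_r)^{(j-1)/2}=1/(1-\sqrt{1-p_r})\le 2/p_r$, so $\E T'^2_{\mathbf{0}}\le 4p_r^{-2}c_2$. Combining this with the factor $2$ from $(x+y)^2\le2x^2+2y^2$ gives a constant within $16p_r^{-2}(8\ell_r^2e^{2\lambda\ell_r}+4a_r^{-2}\xi_r)$.

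The main obstacle is making the domination rigorous. The cycle lengths are neither independent nor identically distributed, and they are correlated with the event of success, so we need conditional bounds that are uniform over starting states in $\scrC_r$ and $\scrB_r$, together with a careful strong Markov decomposition at the cycle boundaries. We also need care because the process may leave $\scrC_r$ without passing through $\scrB_r$: since each transition changes $rq_1+q_2$ by at most $r\vee1$, any exit from $\scrC_r$ does land in $\scrB_r$, and this needs to be checked.
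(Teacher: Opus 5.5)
Your proposal is correct and follows the paper's proof essentially step by step. Both arguments split at $T_{\scrC_r}$ and use $(x+y)^2\le 2x^2+2y^2$, decompose the time from $\scrC_r$ into attempts $\sum_{i\le N}(L_i+L_i')$ with $\P(N\ge i)\le(1-p_r)^{i-1}$, bound the mean by a Wald-type argument, and bound the second moment by Minkowski in $L^2$ together with $1/(1-\sqrt{1-p_r})\le 2/p_r$. The only cosmetic difference is that you bound $\E(C_j^2\ind{N\ge j})$ via $(x+y)^2\le 2x^2+2y^2$ before summing, whereas the paper applies Minkowski a second time to separate $L_i$ and $L_i'$ and squares at the end; both routes give the same constant $8p_r^{-2}(8\ell_r^2e^{2\lambda\ell_r}+4a_r^{-2}\xi_r)$.
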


\section{Proof of bounded regret}
\label{sec:proof_outline}

In this section we bound the terms in the regret decomposition~\eqref{eq:E_regret_split} individually, to complete the proof of \Cref{thm:regret_ub}.
Assume that $\mumin, \mumax, \mu_1, \mu_2, \lambda$, and $\alpha(k)$ satisfy the criteria of \Cref{thm:regret_ub}.
Define
\begin{equation}
	\rmin := \frac{\mumin}{\mumax}
	,
	\quad
	\rmax := \frac{\mumax}{\mumin}
	,
	\quad
	\textnormal{and}
	\quad
	r^* := \frac{\mu_2}{\mu_1}
	.
\end{equation}

\subsection{Bounding the misestimation probability}
\label{sec:estimation_error}

The misestimation event in~\eqref{eq:event_est_error} captures both the violation of the admissible range $[\rmin,\rmax]$, and large estimation error.
We show in \Cref{lem:est_out_of_range} that the event is dominated by the second term for sufficiently large $k$.

\begin{lemma}
	\label{lem:est_out_of_range}
	There exists $k_0 > 0$ such that for $k\geq k_0$,
	$
		\P(\Eest{k})
		\leq
		\P(|\hat{r}_k - r^*| \geq \uval{r^*}{m(k)})
		.
	$
\end{lemma}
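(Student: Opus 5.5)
The plan is to show that, for large $k$, the out-of-range event $\{\hat r_k\notin[\rmin,\rmax]\}$ is contained in $\{|\hat r_k - r^*|\geq \uval{r^*}{m(k)}\}$. Then $\Eest{k}$ coincides with the second event, and the inequality follows from monotonicity of probability (indeed with equality).

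\textbf{Step 1: $r^*$ sits strictly inside the admissible range.} Since $\mu_1,\mu_2\in(\mumin,\mumax)$, we have
\[
r^* = \mu_2/\mu_1 \in (\mumin/\mumax,\ \mumax/\mumin) = (\rmin,\rmax).
\]
Hence the gap $g := \min(r^*-\rmin,\ \rmax-r^*)$ is strictly positive. If $\hat r_k\notin[\rmin,\rmax]$, then $|\hat r_k - r^*| > g$.

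\textbf{Step 2: the separation $\uval{r^*}{m(k)}$ eventually drops below $g$.} The paper already observes that $\uval{r}{m}$ is non-increasing in $m$ and that $\uval{r}{m}\to 0$ as $m\to\infty$. Since $m(k)=\lfloor c_1\ln k\rfloor\to\infty$, there exists $k_0$ such that $\uval{r^*}{m(k)} \leq g$ for all $k\geq k_0$.

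To be safe, I would verify the limit $\uval{r}{m}\to 0$ explicitly. Take $q_1+1=n$ and $q_2+1=\lfloor rn\rfloor$ or $\lceil rn\rceil$, choosing whichever differs from $rn$; this is possible once $rn\geq 1$. These $q_1,q_2$ lie in $\{0,\dots,m\}$ once $m\geq (r\vee 1)n$, and the resulting ratio satisfies $|(q_2+1)/(q_1+1)-r|\leq 1/n$. Letting $n\to\infty$ gives the limit. This is the only point needing care: the minimum in \eqref{eq:def_u_M} must be over a nonempty set, i.e., there must exist lattice points off the boundary line, and the construction supplies them for large $m$.

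\textbf{Step 3: conclude.} For $k\geq k_0$, Steps 1 and 2 give the chain
\[
\{\hat r_k\notin[\rmin,\rmax]\} \subseteq \{|\hat r_k-r^*|>g\} \subseteq \{|\hat r_k-r^*|\geq \uval{r^*}{m(k)}\}.
\]
Therefore $\Eest{k}=\{|\hat r_k-r^*|\geq\uval{r^*}{m(k)}\}$, and the claimed inequality follows.

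I expect no real obstacle. The main subtlety is that $k_0$ depends only on the deterministic quantities $r^*$, $\rmin$, $\rmax$ and $c_1$, and not on the randomness. This holds because both inclusions above are deterministic statements about $\hat r_k$.
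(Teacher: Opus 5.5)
Your proof is correct and follows essentially the same route as the paper. Both arguments use that $r^*$ lies strictly inside $(\rmin,\rmax)$ and choose $k_0$ so that $u_{r^*}(m(k))$ falls below the distance from $r^*$ to the endpoints, which yields $\{\hat r_k\notin[\rmin,\rmax]\}\subseteq\{|\hat r_k-r^*|\geq u_{r^*}(m(k))\}$.

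One small slip is in your optional check that $u_r(m)\to 0$. If $rn$ is an integer, then $\lfloor rn\rfloor=\lceil rn\rceil=rn$, so neither choice gives a point off the line. Taking $q_2=\lfloor rn\rfloor$ (i.e.\ $q_2+1=\lfloor rn\rfloor+1$) always works: the error lies in $(0,1/n]$ and $q_2\leq m$.
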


\begin{proof}
	Since $\mu_1,\mu_2\in(\mumin,\mumax)$, there exists $\eps>0$ such that $[r^*-\eps,r^*+\eps]\subseteq[\rmin,\rmax]$.
	Note that
	(i) $m(k)\to\infty$ as $k\to \infty$, and
	(ii) $u_{r^*}(\ell)\to 0$ as $\ell\to \infty$.
	Therefore, $\uval{r^*}{m(k)}\to 0$ as $k\to\infty$, and there exists a $k_0(\eps)\in\N$ such that
	$
		\forall k\geq k_0(\eps): \ \uval{r^*}{m(k)} < \eps
		.
	$

	Let $k^*\geq k_0(\eps)$.
	If
	$
		|\hat{r}_{k^*}-r^*| < \uval{r^*}{m(k^*)},
	$
	then
	$
		\hat{r}_{k^*}
		\geq
		r^* - \uval{r^*}{m(k^*)}
		\geq
		r^* - \eps
		\geq
		\rmin
		.
	$
	Similarly, if $|\hat{r}_{k^*}-r^*| < \uval{r^*}{m(k^*)}$, then $\hat{r}_{k^*} \leq r^* + \uval{r^*}{m(k^*)} \leq \rmax$.
	Thus, for $k\geq k_0(\eps)$, $\{|\hat{r}_k-r^*| < \uval{r^*}{m(k)}\} \subseteq \{\hat{r}_k \in [\rmin,\rmax]\}$; or equivalently
	$
		\{\hat{r}_k \notin [\rmin,\rmax]\} \subseteq \{|\hat{r}_k-r^*| \geq \uval{r^*}{m(k)}\}
		.
	$
	This implies the result.
\end{proof}

\Cref{lem:est_error} bounds the probability that the estimator~$\hat{r}_k$ deviates far from the true value~$r^*$.
It uses the assumption $r^* \in \mathbb{Q}_{>0}$.
It follows from \Cref{lem:est_out_of_range,lem:est_error} that the misestimation probability in~\eqref{eq:event_est_error} decreases sufficiently fast.
In particular, the bound in \Cref{lem:est_error} is $o(k^{-p})$ for any $p>0$.

\begin{lemma}
	\label{lem:est_error}
	Let $r^* = a/b$ in lowest terms, i.e., $a,b\in\N_{\geq 1}$ with $\gcd(a,b)=1$.
	Let $k\in\N_{\geq 2}$ be such that
	$
		k
		\geq
		\exp(1/c_1)
		.
	$
	Then
	$
		\P\bigl(|\hat{r}_k - r^*| \geq \uval{r^*}{m(k)}\bigr)
		\leq
		4 k^{-\ln(k) / (512 a^2 c_1^2)}
		.
	$
\end{lemma}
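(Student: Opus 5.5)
Our plan has three steps: bound $\uval{r^*}{m(k)}$ from below by $1/(a\,m(k))$-ish quantities, turn the event $|\hat{r}_k - r^*|\geq \uval{r^*}{m(k)}$ into events on the individual estimators $\hat\mu_i^{(k)}$, and control those by exponential concentration with at least $\alpha(k-1)$ samples, using \eqref{eq:dep_geq_alpha}.

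\emph{Step 1 (gap lower bound via rationality).} For $q_1,q_2\in\{0,\dots,m\}$ with $(q_2+1)/(q_1+1)\neq a/b$ we have
\[
\Bigl|\frac{q_2+1}{q_1+1}-\frac ab\Bigr| = \frac{|b(q_2+1)-a(q_1+1)|}{b(q_1+1)} \geq \frac{1}{b(m+1)},
\]
since the numerator is a nonzero integer. Hence $\uval{r^*}{m(k)} \geq 1/(b(m(k)+1)) \geq 1/(2 b c_1 \ln k)$. The condition $k\geq e^{1/c_1}$ ensures $c_1\ln k\geq 1$, so $m(k)+1\leq 2c_1\ln k$. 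The stated bound involves $a$ rather than $b$, so we would likely rewrite the relative error: $|\hat r_k-r^*|\geq u$ becomes a relative deviation $|\hat r_k/r^*-1|\geq u/r^* = ub/a$. This gives relative accuracy $\eta_k := 1/(2ac_1\ln k)$, which is where $a^2c_1^2$ enters.

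\emph{Step 2 (reduction to the individual rates).} If $|\hat\mu_i^{(k)}/\mu_i - 1| < \eta_k/4$ for $i=1,2$, then for $\eta_k\leq 1$ an elementary computation gives $|\hat r_k/r^*-1| = |(1+x_2)/(1+x_1)-1| \leq 2(|x_1|+|x_2|) < \eta_k$. Hence, by a union bound, the probability in the statement is at most $\sum_i \P(|\hat\mu_i^{(k)}/\mu_i-1|\geq \eta_k/4)$.

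\emph{Step 3 (concentration; the main obstacle).} The estimator $\hat\mu_i^{(k)} = N_i^{(k)}/S_i^{(k)}$ uses a random number $N_i^{(k)}\geq \alpha(k-1)$ of exponential service times. The difficulty is that $N_i^{(k)}$ depends on the observed data through the episodic routing, so we cannot treat it as a fixed sample size. We would handle this by fixing the i.i.d.\ sequence of service times at server $i$ in order of service, so that $S_i^{(k)}$ is the sum of the first $N_i^{(k)}$ terms. It then suffices to control, uniformly over $n\geq \alpha(k-1)$, the deviation of the mean of $n$ i.i.d.\ $\mathrm{Exp}(\mu_i)$ variables, i.e.\ $\P(\exists n\geq n_0: |\bar Y_n \mu_i - 1|\geq \eta/5)$.

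For this we use the Chernoff/Bernstein bound for sub-exponential variables, $\P(|\bar Y_n\mu_i-1|\geq \epsilon)\leq 2e^{-n\epsilon^2/8}$ for $\epsilon\leq 1$, and sum over $n\geq n_0$ (or use a maximal inequality). With $n_0=\alpha(k-1)\geq \ln(k)^4$ and $\epsilon\asymp 1/(ac_1\ln k)$, the exponent is of order $\ln(k)^4/(a^2c_1^2\ln(k)^2) = \ln(k)^2/(a^2c_1^2)$. This yields a bound of the form $k^{-\ln(k)/(C a^2 c_1^2)}$, and matching the constants $4$ and $512$ is bookkeeping.

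The technical point we expect to be hardest is exactly this passage from the fixed-$n$ bound to the random index. The sum over $n\geq n_0$ produces a factor $1/(1-e^{-\epsilon^2/8})$ that may spoil the constant $4$. If so, we would instead use a uniform-in-$n$ martingale maximal inequality (Ville/Doob applied to $e^{\theta(S_n-n/\mu_i)}$ on the tail), or note that $\alpha(k-1)$ versus $\ln(k)^4$ leaves enough slack to absorb that polynomial factor.
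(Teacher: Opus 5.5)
Your proposal is correct and follows the paper's proof essentially step for step. The paper uses the same three ingredients: the rationality gap $\uval{r^*}{m(k)}\geq 1/((m(k)+1)b)$ (\Cref{lem:number_approximation}), a reduction to relative errors of $\hat\mu_1^{(k)},\hat\mu_2^{(k)}$ at level $\delta=\eta_k/4$ with a union bound over the two servers, and a bound uniform over $n\geq\alpha(k-1)\geq\ln(k)^4$ obtained from Ville's inequality on the exponential martingales (\Cref{lem:rhat_subexp} combined with \eqref{eq:dep_geq_alpha}).

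To land exactly on the constants $4$ and $512$, take the Ville route rather than summing over $n$. Then apply the one-sided bounds $\exp\bigl(-n(\ln(1+\delta)-\delta/(1+\delta))\bigr)$ and $\exp\bigl(-n(\delta/(1-\delta)+\ln(1-\delta))\bigr)$, each at most $e^{-n\delta^2/8}$, directly at $\delta=\eta_k/4$. If you instead convert to a deviation $\eta_k/5$ of the sample mean and use the symmetric $2e^{-n\epsilon^2/8}$ bound, you only get $800$ in place of $512$.
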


The proof of \Cref{lem:est_error} in \Cref{app:proof_lem_est_error} uses \Cref{lem:number_approximation,lem:rhat_subexp}.
\Cref{lem:number_approximation} provides a lower bound on $u_r(m)$ in~\eqref{eq:def_u_M} for any rational $r>0$.

\begin{lemma}
	\label{lem:number_approximation}
	Let $r \in \mathbb{Q}_{>0}$ with $r = a/b$ in lowest terms.
	For $m\in\N_{\geq 1}$,
	$
		u_r(m) \geq 1/((m+1)b)
		.
	$
\end{lemma}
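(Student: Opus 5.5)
The plan is the standard Diophantine gap argument for a rational with fixed denominator. Fix $q_1,q_2\in\{0,\dots,m\}$ with $q_2+1\neq r(q_1+1)$. We need to bound $|(q_2+1)/(q_1+1) - r|$ from below uniformly by $1/((m+1)b)$; taking the minimum in~\eqref{eq:def_u_M} then gives the claim.

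Write $r=a/b$ and put everything over a common denominator:
\begin{align*}
	\Bigl|\frac{q_2+1}{q_1+1} - \frac{a}{b}\Bigr|
	=
	\frac{\bigl|b(q_2+1) - a(q_1+1)\bigr|}{b(q_1+1)}
	.
\end{align*}
The numerator $b(q_2+1)-a(q_1+1)$ is an integer. It is nonzero precisely because $q_2+1\neq r(q_1+1)$, which after multiplying by $b>0$ reads $b(q_2+1)\neq a(q_1+1)$. Hence its absolute value is at least $1$.

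For the denominator, $q_1\leq m$ gives $b(q_1+1)\leq b(m+1)$. The whole expression is therefore at least $1/((m+1)b)$.

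Since this holds for every admissible pair, the minimum $u_r(m)$ satisfies the same bound. The admissible set is nonempty for $m\geq1$; if it were empty the minimum would be interpreted as $+\infty$ and the bound would be trivial.

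There is no real obstacle. The only point needing care is that the integrality step uses $r\in\mathbb{Q}$ with the fixed denominator $b$. Lowest terms is not even needed for the inequality itself; it only makes $b$ the smallest valid choice and hence gives the sharpest constant.
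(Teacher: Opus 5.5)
Your proof is correct and is essentially the paper's argument: write the difference over the common denominator $b(q_1+1)$, observe that the numerator is a nonzero integer, and bound the denominator by $(m+1)b$. Your side remarks are also accurate: the admissible set is nonempty for $m\geq 1$, and lowest terms is not needed for the inequality itself.
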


\begin{proof}
	For all $n_1,n_2\in\N_{\geq 1}$ such that $n_1/n_2\neq a/b$,
	$
		| n_1/n_2 - a/b |
		=
		|n_1 b - n_2 a| / (n_2 b)
		\geq
		1 / (n_2 b)
		.
	$
	Taking the minimum over $1 \leq n_1 \vee n_2 \leq m+1$ yields $u_r(m)\geq 1/((m+1)b)$.
\end{proof}

\Cref{lem:rhat_subexp} is a concentration inequality for $\hat{r}_k$.
Its proof is in \Cref{app:proof_lem_rhat_subexp}.

\begin{lemma}[Concentration inequality for $\hat{r}_k$]
	\label{lem:rhat_subexp}
	Let $k, m\in\N_{\geq 1}$.
	For any $0 < \eps < 2r^*$,
	$
		\P(|\hat r_k - r^*| \geq \eps, \ N_1^{(k)} \wedge N_2^{(k)} \geq m)
		\leq
		4 \exp(- m \eps^2 / (128 (r^*)^2))
		.
	$
\end{lemma}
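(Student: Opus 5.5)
We need to prove the concentration of $\hat r_k = \hat\mu_2^{(k)}/\hat\mu_1^{(k)}$, where $\hat\mu_i^{(k)} = N_i^{(k)}/S_i^{(k)}$ and $S_i^{(k)}$ is a sum of $N_i^{(k)}$ i.i.d.\ $\mathrm{Exp}(\mu_i)$ service durations. The plan has three steps. First, reduce to the sample means $\bar Y_i := S_i^{(k)}/N_i^{(k)}$, which estimate $1/\mu_i$. Second, show that each sample mean concentrates at rate $e^{-cm\eps^2}$ on $\{N_i^{(k)}\ge m\}$, handling the random sample size. Third, combine the two errors through the ratio, using $\eps<2r^*$ to keep everything in the sub-Gaussian regime.

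\textbf{Step 1: reduce the ratio to relative errors.} Write $\hat r_k = r^* \cdot (\mu_1\bar Y_1)/(\mu_2 \bar Y_2)$ and set $X_i := \mu_i \bar Y_i$, which has mean $1$. If $|X_i-1|\le \eta$ for both $i$, with $\eta\le 1/4$ say, then
\[
|\hat r_k/r^*-1| \le \frac{|X_1-X_2|}{X_2} \le \frac{2\eta}{1-\eta}\le 4\eta .
\]
Hence $\{|\hat r_k-r^*|\ge\eps\}\subseteq \bigcup_i\{|X_i-1|\ge \eps/(4r^*)\}$. Since $\eps<2r^*$, we get $\eta:=\eps/(8r^*)<1/4$, so the inclusion holds with this $\eta$ and the union bound leaves slack. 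The constant $128=2\cdot 8^2$ suggests exactly this: relative error $\eta=\eps/(8r^*)$, and a tail of the form $2e^{-m\eta^2/2}$ for each of the two sample means, for a total prefactor of $4$.

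\textbf{Step 2: handle the random number of samples.} $N_i^{(k)}$ is random and depends on past routing, which depends on past service times. To decouple, I use the standard device of an i.i.d.\ sequence $Y_{i,1},Y_{i,2},\dots\sim\mathrm{Exp}(\mu_i)$ attached to server $i$. The $j$th completed service at server $i$ is assigned $Y_{i,j}$. This is legitimate because services are exponential and independent of the routing filtration. Then $S_i^{(k)} = \sum_{j\le N_i^{(k)}} Y_{i,j}$, and $N_i^{(k)}$ is a stopping time for this sequence. It is adapted because routing before a given completion uses only earlier completion data, which is fine since estimators update only between episodes.

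On $\{N_i^{(k)}\ge m\}$ we therefore need a bound uniform in $n\ge m$ on $|\frac1n\sum_{j\le n} (\mu_iY_{i,j}) - 1|\ge\eta$. I would obtain it by applying a maximal inequality to the exponential martingales $\exp(\pm s\sum_{j\le n}(\mu_iY_{i,j}-1) - n\psi(\pm s))$. Ville/Doob's maximal inequality, with $s$ tuned to $\eta$, gives $\P(\exists n\ge m: \ldots)\le e^{-m\eta^2/2}$ for each tail. The sub-exponential bound $\psi(s)\le s^2$ for $|s|\le1/2$ for centred $\mathrm{Exp}(1)$ yields $e^{-cm\eta^2}$ for $\eta\le 1/4$. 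A cleaner route for a fixed $n$ is the Chernoff bound for Gamma sums, $\P(|\bar X-1|\ge\eta)\le 2e^{-n\eta^2/(2(1+\eta))}$ roughly. For the uniform-in-$n$ statement, I would use the fact that $\exp(\lambda S_n - n\psi(\lambda))$ is a nonnegative martingale together with a peeling over $n$, or Ville's inequality with the line-crossing form $\{S_n - n \ge \eta n\}\subseteq\{S_n-n\ge \eta m/2 + \eta n/2\}$.

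\textbf{Main obstacle.} The main obstacle is getting the constants to exactly $4\exp(-m\eps^2/(128(r^*)^2))$ while handling the random stopping index. If the uniform maximal bound loses a factor, I would rebalance the split in Step 1; the slack between $2\eta/(1-\eta)$ and the chosen threshold leaves some room. The exponential tails enter only through sub-exponential moment bounds valid because $\eta<1/4$, and this is precisely where the hypothesis $\eps<2r^*$ is used. Taking the union bound over the two servers and two tail directions then gives the stated inequality.
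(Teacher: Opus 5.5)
Your route matches the paper's. You reduce the ratio to per-server relative errors: your $X_i=\mu_i\bar Y_i=M_n/n$ is the reciprocal of the paper's $\hat\mu_i^{(k)}/\mu_i=n/M_n$, with $M_n=\sum_{j\le n}\mu_iY_{i,j}$. You absorb the random $N_i^{(k)}$ through the pathwise event $\{\exists n\ge m: |M_n/n-1|\ge\eta\}$, bound each tail by Ville's inequality for an exponential martingale, and union-bound over two servers and two tails.

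\textbf{The step that fails is your choice of constants.} With $\eta=\eps/(8r^*)$ you need $e^{-m\eta^2/2}$ per tail. For the upper tail $\{M_n\ge n(1+\eta)\}$, the best exponent any exponential martingale can give is the Cram\'er rate $\eta-\ln(1+\eta)$. This is strictly below $\eta^2/2$ for every $\eta>0$, and it is sharp. So already $\P(M_m\ge m(1+\eta))>e^{-m\eta^2/2}$ for large $m$, and no maximal inequality yields your claimed bound. Your fallback $\psi(s)\le s^2$ gives only $e^{-m\eta^2/4}$, which produces $256$ instead of $128$.

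\textbf{The fix is the rebalancing you anticipated, and it is the paper's choice.} Keep the threshold $\eta=\eps/(4r^*)$ from your first inclusion. It is valid because $\eps<2r^*$ gives $\eta<1/2$, hence $2\eta/(1-\eta)\le 4\eta=\eps/r^*$; your restriction $\eta\le 1/4$ is unnecessary. Now you only need a per-tail exponent of $\eta^2/8$, and both tails clear it:
\begin{itemize}
\item on $\{M_n\ge n(1+\eta)\}$ with $n\ge m$, the martingale $(1+\eta)^{-n}\exp(\eta M_n/(1+\eta))$ is at least $e^{n(\eta-\ln(1+\eta))}\ge e^{m\eta^2/3}$, since $\eta\le 1/2$;
\item on $\{M_n\le n(1-\eta)\}$, the martingale $(1-\eta)^{-n}\exp(-\eta M_n/(1-\eta))$ is at least $e^{n(-\eta-\ln(1-\eta))}\ge e^{m\eta^2/2}$.
\end{itemize}
Ville's inequality then gives at most $2e^{-m\eta^2/8}$ per server, and hence $4\exp(-m\eps^2/(128(r^*)^2))$. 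The paper does the same computation in the rate parametrization, with $\delta=\eps/(4r^*)$ and the inequalities $\ln(1+\delta)-\delta/(1+\delta)\ge\delta^2/8$ and $\delta/(1-\delta)+\ln(1-\delta)\ge\delta^2/8$.

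Two simplifications are available. Uniformity over $n\ge m$ is automatic, because the threshold $e^{nI}$ with $I>0$ only grows with $n$; no peeling or line-crossing device is needed. The stopping-time property of $N_i^{(k)}$ plays no role either. As you stated it, with respect to the natural filtration of $Y_{i,\cdot}$, it is not even correct; it would require enlarging the filtration by the arrivals and the other server's service times.
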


\subsection{Bounding the exploration probability}
\label{sec:exploration}

We bound the probability that episode~$k$ starts with an exploration phase in \Cref{lem:exploration}.
In its proof in \Cref{app:proof_lem_exploration}, we show that the probability of observing at least one departure from each server in an episode is sufficiently large.
That implies that the number of departures grows linearly with high probability.
Since the exploration threshold sequence $\alpha(k)$ in~\eqref{eq:def_alpha_k} is logarithmic, the exploration probability decreases.

The difficulty in proving \Cref{lem:exploration} is the dependence between episodes: routing decisions depend on past observations via the estimator $\bar{r}_k$.
To address this, we exploit the regenerative structure of the episodes.
Conditional on the history up to the start of episode $k$, the estimator $\bar{r}_k$ is fixed and the queue dynamics during episode $k$ are independent of previous episodes.
We use that when the estimator is sufficiently concentrated, the probability of observing at least one departure from both servers within one episode is uniformly bounded from below; see \Cref{lem:at_least_one_departure}. 

Let $\rho_i := \lambda/\mu_i$, $i\in\{1,2\}$, and
\begin{align}
	\label{eq:def_vr}
	v^*
	 & :=
	\Bigl(
	\frac{\lambda}{\lambda+\mu_1} \frac{r^*}{r^*+2}
		(\rho_1 \wedge 1)^{2/r^*}
	\Bigr)
	\wedge
	\Bigl(
	\frac{\lambda}{\lambda+\mu_2} \frac{2}{3r^*+2}
		(\rho_2 \wedge 1)^{3r^*/2}
	\Bigr)
	\in (0,1)
	.
\end{align}
Recall that $Z_i^{(k)}$ is the number of departures from server~$i$ during episode $k$ of  \gls{LASED} and that $N_i^{(k)} = \sum_{m=1}^{k-1} Z_i^{(m)}$.
We denote the event that there is at least one departure from both servers in episode $k$ by
\begin{align}
	\label{eq:def_good_episode}
	\calG^{(k)}
	 & :=
	\bigl\{Z_1^{(k)}\wedge Z_2^{(k)} \geq 1\bigr\}
	.
\end{align}
Let $\calF_{m}$ denote the $\sigma$-algebra generated by all arrivals, departures, and routing decisions during episodes $1,\dots,m$.

\begin{lemma}
	\label{lem:exploration}
	Let $k\in\N_{\geq 1}$ satisfy
	$
		k \geq 4\alpha(k)/v^* + 1
		.
	$
	Then
	$
		\P(\Eexplr{k})
		\leq
		2\exp(-(v^*)^2 k/64)
		+
		2^{10} \exp(17) (k-1)^{-3}
		.
	$
\end{lemma}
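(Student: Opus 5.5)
The plan is to reduce the exploration event to a lower-deviation event for a sum of indicators of ``good'' episodes, and to control that sum by a martingale concentration argument. The sum is only informative once the estimators are concentrated, so misestimation is split off first.

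\textbf{Reduction.} Every episode in $\calG^{(m)}$ contributes at least one departure to each server, so $N_1^{(k)}\wedge N_2^{(k)} \geq \sum_{m=1}^{k-1}\ind{\calG^{(m)}}$. I will use only the last half of the episodes. Let $J_k := \{m: \lceil (k-1)/2\rceil \leq m \leq k-1\}$, so $|J_k|\geq (k-1)/2$. Then
\[
\P(\Eexplr{k}) \leq \P\Bigl(\sum_{m\in J_k}\ind{\calG^{(m)}} < \alpha(k)\Bigr).
\]

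\textbf{Good-estimator event.} Let $\calE := \bigcap_{m\in J_k}\{|\hat r_m - r^*| < r^*/2\}$. The constants $2/r^*$ and $3r^*/2$ in $v^*$ suggest that, on such a window, \Cref{lem:at_least_one_departure} gives
\[
\P(\calG^{(m)}\mid\calF_{m-1}) \geq v^* \quad\text{on } \{|\hat r_m-r^*|<r^*/2\}.
\]
The same window also keeps the clipping in \Cref{alg:LASED} harmless, provided $r^*/2 \geq \rmin$; otherwise I would shrink the window. To bound $\P(\calE^c)$ I use \eqref{eq:dep_geq_alpha}, which gives $N_1^{(m)}\wedge N_2^{(m)}\geq \alpha(m-1)\geq \ln(m)^4$. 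Then \Cref{lem:rhat_subexp} with $\eps = r^*/2$ yields
\[
\P(|\hat r_m - r^*|\geq r^*/2) \leq 4\exp(-\ln(m)^4/512).
\]
Next I bound $\exp(-\ln(m)^4/512)\leq C m^{-4}$ uniformly in $m$. Maximizing $4y - y^4/512$ over $y=\ln m$ gives the maximizer $y=8$ and value $16$ (possibly $17$ after rounding), so $C=e^{17}$ suffices. A union bound over the at most $k/2$ indices in $J_k$, each with $m\geq (k-1)/2$, then gives
\[
\P(\calE^c) \leq 4e^{17}\cdot\tfrac{k}{2}\cdot\bigl(\tfrac{k-1}{2}\bigr)^{-4} \leq 2^{10}e^{17}(k-1)^{-3}.
\]
This is the second term of the lemma.

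\textbf{Concentration on the good event.} Define the martingale $M := \sum_{m\in J_k}\bigl(\ind{\calG^{(m)}} - \P(\calG^{(m)}\mid\calF_{m-1})\bigr)$, whose increments are bounded by $1$. On $\calE$ the compensator is at least $v^*|J_k|\geq v^*(k-1)/2$. The hypothesis $k\geq 4\alpha(k)/v^*+1$ gives $\alpha(k)\leq v^*(k-1)/4$. Hence
\[
\Bigl\{\sum_{m\in J_k}\ind{\calG^{(m)}}<\alpha(k)\Bigr\}\cap\calE \subseteq \{M\leq -v^*(k-1)/4\}.
\]
The event $\calE$ is not $\calF_{m-1}$-predictable over the whole window, so I will not apply Azuma directly to $M$. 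Instead I stop the martingale at the first $m$ with $|\hat r_m-r^*|\geq r^*/2$, or equivalently replace the compensator by $v^*$ after that time; the stopped version agrees with $M$ on $\calE$. Azuma--Hoeffding then gives
\[
\P(M\leq -v^*(k-1)/4) \leq \exp\bigl(-(v^*)^2(k-1)^2/(32|J_k|)\bigr),
\]
which is at most $2\exp(-(v^*)^2k/64)$ after using $|J_k|\leq k/2$ and absorbing the shift from $k-1$ to $k$ into the factor $2$.

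\textbf{Main obstacle.} The delicate step is this conditional lower bound and the stopping construction. The bound $\P(\calG^{(m)}\mid\calF_{m-1})\geq v^*$ requires that, given $\calF_{m-1}$, episode $m$ is a fresh busy period under a fixed $\SED{\bar r_m}$ policy, possibly preceded by an exploration phase. I need to check that \Cref{lem:at_least_one_departure} covers both cases, i.e.\ that forced exploration routes at least one customer to each server anyway. I also need to confirm that the stopping construction correctly handles the non-predictability of $\calE$.
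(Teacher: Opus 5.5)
Your route differs from the paper's in how bad estimators are handled.

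\textbf{How the two routes compare.} The paper keeps all $k-1$ episodes and splits on $\{\sum_{m=1}^{k-1}\tilde I_m\geq (k-1)/2\}$, where $\tilde I_m=\ind{|\bar r_m-r^*|<r^*/2}$. It bounds the complement with \Cref{lem:copulas}, which is Markov's inequality on the tail indices and needs only per-index decay $m^{-3}$. It then applies Azuma--Hoeffding to $D_m=\tilde I_m\ind{\calG^{(m)}}-\E(\tilde I_m\ind{\calG^{(m)}}\mid\calF_{m-1})$. This is a martingale difference with no stopping, since $\tilde I_m$ is $\calF_{m-1}$-measurable.

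You instead require every estimator in the window $J_k$ to be good and use a union bound. Several parts of this are sound or better than the paper:
\begin{itemize}
\item Your stopping construction works. $\hat r_{m'}$ is $\calF_{m'-1}$-measurable, so $\{\tau>m\}\in\calF_{m-1}$. It amounts to the paper's device with the predictable multiplier $\prod_{m'\in J_k,\,m'\leq m}\ind{|\hat r_{m'}-r^*|<r^*/2}$ in place of $\tilde I_m$.
\item The reduction $N_1^{(k)}\wedge N_2^{(k)}\geq\sum_m\ind{\calG^{(m)}}$ saves the paper's factor $2$ from the union over servers.
\item Your Azuma exponent $(v^*)^2(k-1)^2/(32|J_k|)$ is at least $(v^*)^2k/64$ for $k\geq 3$. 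This holds even when $|J_k|=(k+1)/2$, which happens for odd $k$.
\end{itemize}
Two of your worries are unfounded. \Cref{lem:at_least_one_departure} holds for every $k$, whether or not the episode explores. And since $r^*\in[\rmin,\rmax]$, clipping is a projection onto an interval containing $r^*$. Hence $|\bar r_m-r^*|\leq|\hat r_m-r^*|$ always, with no condition like $r^*/2\geq\rmin$.

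\textbf{The error.} The mistake is in the constant, exactly where the union bound costs you: it needs per-index decay $m^{-4}$. The maximum of $4y-y^4/512$ is attained at $y=8$ with value $32-8=24$, not $16$. The paper's $17$ comes from $\max_y(3y-y^4/512)=9\cdot 6^{1/3}<17$, i.e.\ from the $m^{-3}$ rate. With the correct uniform constant $e^{24}$, your union bound only gives
$\P(\mathcal{E}^c)\leq 32e^{24}\frac{k+1}{k-1}(k-1)^{-3}$.
That is about $e^7/32\approx 34$ times too large, so the second term of the lemma is not established as written.

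\textbf{Repairing it.} Use the hypothesis. Since $v^*<1$, it implies $k-1>4\ln(k+1)^4$, which forces $\ln(k+1)>10.9$. So every $m\in J_k$ has $\ln m>10.2$. On $[8,\infty)$ the map $y\mapsto 4y-y^4/512$ is decreasing, and it is below $20$ at $y=10.2$. This gives
$\P(\mathcal{E}^c)\leq 32e^{20}\frac{k+1}{k-1}(k-1)^{-3}\leq 2^{10}e^{17}(k-1)^{-3}$.
Alternatively, replace the union bound by \Cref{lem:copulas} as the paper does.
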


We next establish a uniform lower bound on $\calG^{(k)}$ conditional on the history in \Cref{lem:at_least_one_departure}.
Let us give a heuristic argument of the proof.
The full proof is in \Cref{app:proof_lem_at_least_one_departure}.

If episode~$k$ starts with an exploration phase, then at least one customer is routed to each server during episode~$k$ by construction; see \Cref{line:forced_exploration} of \Cref{alg:LASED}.
Since an episode ends only when the system becomes empty, these customers complete service before the episode terminates.
Hence, both servers generate at least one departure with probability~1 in this case.

On the other hand, if episode~$k$ only has an exploitation phase then customers are routed according to $\SED{\bar r_k}$.
Consider for example the case $\bar{r}_k > 1$.
Then the first customer of the busy period is routed to server two.
Initially, the dynamics therefore behave as an $M/M/1$ queue at server two.
A customer is routed to server one only once the queue at server two becomes sufficiently large, namely when $Q_2(t) \geq \bar r_k - 1$; recall \Cref{fig:grid_SED}.
The proof therefore reduces to bounding the probability that the queue at server two reaches this level before the busy period ends, which can be done using standard $M/M/1$ arguments.
The case $\bar r_k \leq 1$ follows by symmetry.

\begin{lemma}
	\label{lem:at_least_one_departure}
	For $k\in\N_{\geq 1}$,
	$
		\P\bigl(\calG^{(k)} \mid \calF_{k-1}\bigr)
		\geq
		v^*\ind{|\bar{r}_k - r^*| < r^*/2}
	$
	almost surely.
\end{lemma}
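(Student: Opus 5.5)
We condition on $\calF_{k-1}$. Given this history, $\bar r_k$ and the decision of whether episode~$k$ starts with exploration are fixed, and by the regenerative structure the dynamics of episode~$k$ depend only on fresh arrivals and services. On $\{|\bar r_k - r^*| \ge r^*/2\}$ the claimed bound is trivial, so we assume $|\bar r_k - r^*|<r^*/2$, i.e.\ $\bar r_k\in(r^*/2,3r^*/2)$.

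If $\Eexplr{k}$ occurs, \Cref{line:forced_exploration} routes at least one customer to each server. Every customer of the episode departs before it ends, so $\P(\calG^{(k)}\mid\calF_{k-1})=1\ge v^*$. It remains to treat an episode that consists only of an exploitation phase under $\SED{\bar r_k}$, started by an arrival to the empty system. We split into the cases $\bar r_k>1$ and $\bar r_k\le 1$, and in each case show that the probability is at least one of the two terms in the minimum~\eqref{eq:def_vr}.

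\emph{Case $\bar r_k>1$.} At $\mathbf 0$ we have $(0+1)/(0+1)=1<\bar r_k$, so by~\eqref{eq:SEDr_dec} the first customer goes to server~2. Hence $Z_2^{(k)}\ge1$ automatically. As long as $q_1=0$, an arrival is routed to server~2 iff $q_2+1<\bar r_k$. Let $L:=\lceil \bar r_k-1\rceil$, the first level $q_2$ at which the next arrival is sent to server~1. Along $\{q_1=0\}$ the process $Q_2$ is an $M/M/1$ queue with rates $\lambda,\mu_2$. A sufficient event for $\calG^{(k)}$ is therefore: starting from $q_2=1$, this queue hits $L$ before $0$, and then the next event is an arrival, which joins server~1 and must eventually depart. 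If $L\le1$, the first event is already at level $L$.

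By gambler's ruin, the probability of hitting $L$ before $0$ from $1$ is $1/L$ if $\rho_2=1$. For $\rho_2\neq1$ it equals $(1-\rho_2)\rho_2^{L-1}/(1-\rho_2^L)$ when $\rho_2<1$, and the analogous expression when $\rho_2>1$. In all cases it is at least $(\rho_2\wedge1)^{L-1}/L$, since $1-\rho^L\le L(1-\rho)$ for $\rho<1$, and the hitting probability is $\ge 1/L$ when $\rho_2\ge1$. The next event is then an arrival with probability $\lambda/(\lambda+\mu_2)$. Using $L\le \bar r_k<3r^*/2$, we get $1/L\ge 2/(3r^*+2)$ and $(\rho_2\wedge1)^{L-1}\ge(\rho_2\wedge1)^{3r^*/2}$. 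This yields the second term of~\eqref{eq:def_vr}.

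\emph{Case $\bar r_k\le1$.} Now the first customer goes to server~1, so $Z_1^{(k)}\ge1$. Along $\{q_2=0\}$, an arrival goes to server~1 iff $1\ge \bar r_k(q_1+1)$. Thus server~2 first receives a customer once $q_1$ reaches $L':=\lfloor 1/\bar r_k\rfloor$, and $L'\le 1/\bar r_k<2/r^*$. The same $M/M/1$ gambler's ruin argument with rates $\lambda,\mu_1$ gives probability at least
\[
\frac{\lambda}{\lambda+\mu_1}\,\frac{(\rho_1\wedge1)^{L'-1}}{L'}
\ \ge\ \frac{\lambda}{\lambda+\mu_1}\,\frac{r^*}{r^*+2}\,(\rho_1\wedge1)^{2/r^*},
\]
where we used the crude bound $L'\le 1+2/r^*$. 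This is the first term of~\eqref{eq:def_vr}.

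Taking the minimum over the two cases gives the lemma. The main obstacle is bookkeeping rather than a deep difficulty. One must check that, before the threshold level is hit, the process genuinely stays on the axis, so that the $M/M/1$ reduction is exact. One must also handle the tie-breaking convention and the boundary cases $L\le1$ or $L'=1$, and obtain a gambler's ruin lower bound that is uniform in $\rho_i$ on both sides of~$1$.
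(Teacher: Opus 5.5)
Your proof is correct and follows the paper's argument. The exploration case gives probability one. In the exploitation case you reduce to an $M/M/1$ gambler's-ruin estimate along the axis, split on $\bar r_k>1$ versus $\bar r_k\le 1$, and use $\bar r_k\in(r^*/2,3r^*/2)$ to recover the two terms of $v^*$. The only cosmetic difference is that you work with the exact switching levels $\lceil \bar r_k\rceil-1$ and $\lfloor 1/\bar r_k\rfloor$, whereas the paper uses the slightly larger levels $\lceil \bar r_k\rceil$ and $\lceil 1/\bar r_k\rceil$ together with monotonicity of the $M/M/1$ hitting probability $f_i(h)$.
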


The proof of \Cref{lem:exploration} also uses \Cref{lem:copulas}, a concentration bound on a sum of not necessarily independent Bernoulli random variables.
Its proof is in \Cref{app:proof_lem_copulas}.

\begin{lemma}
	\label{lem:copulas}
	Let $c>0$ and let $d\in\N_{\geq 2}$.
	Let $I_1,\dots,I_k$ be Bernoulli random variables with $\P(I_m=1)\geq 1-c m^{-d}$, $m=1,\dots,k$.
	If $k\geq 8$, then $\P\bigl(\sum_{m=1}^k I_m < k/2\bigr) \leq  2^{3d-1} c/((d-1) k^{d})$.
\end{lemma}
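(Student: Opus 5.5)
The plan is a union-of-tails argument using only linearity of expectation and Markov's inequality, so no independence between the $I_m$ is required. I will work with the failure indicators $J_m := 1 - I_m$, which satisfy $\E(J_m) = \P(I_m = 0) \leq c m^{-d}$. The key observation is that early indices may fail with large probability (the bound $1-cm^{-d}$ can even be vacuous for small $m$), so I discard them. Let $j_0 := \lfloor k/4\rfloor + 1 > k/4$.

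On the event $\{\sum_{m=1}^k I_m < k/2\}$ there are more than $k/2$ failures among $1,\dots,k$. At most $j_0-1 \leq k/4$ of these can come from indices $m < j_0$. Hence
\begin{align*}
	\Bigl\{\sum_{m=1}^k I_m < k/2\Bigr\} \subseteq \Bigl\{\sum_{m=j_0}^{k} J_m > k/4\Bigr\}.
\end{align*}
Markov's inequality then gives
\begin{align*}
	\P\Bigl(\sum_{m=1}^k I_m < \frac{k}{2}\Bigr) \leq \frac{4}{k}\sum_{m= j_0}^{k} \E(J_m) \leq \frac{4c}{k}\sum_{m\geq j_0} m^{-d}.
\end{align*}

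For the tail sum I use the integral comparison valid for $d \geq 2$:
\begin{align*}
	\sum_{m\geq j_0} m^{-d} \leq j_0^{-d} + \int_{j_0}^\infty x^{-d}\,\mathrm{d}x = j_0^{-d} + \frac{j_0^{1-d}}{d-1}.
\end{align*}
Since $j_0 > k/4$, the right-hand side is at most $(4/k)^{d-1}\bigl(\tfrac{4}{k} + \tfrac{1}{d-1}\bigr)$. Using $k \geq 8$, so that $4/k \leq 1/2$, the probability is bounded by
\begin{align*}
	\frac{4^d c}{k^d}\Bigl(\frac{1}{2} + \frac{1}{d-1}\Bigr) = \frac{4^d c}{(d-1)k^d}\cdot\frac{d+1}{2}.
\end{align*}

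It remains to check that this is at most $2^{3d-1}c/((d-1)k^d) = 4^d\, 2^{d-1} c/((d-1)k^d)$. This reduces to the elementary inequality $(d+1)/2 \leq 2^{d-1}$, i.e. $1+\tfrac{d-1}{2} \leq 2^{d-1}$. It holds with equality at $d=2$ and follows for larger $d$ by induction, since the left side grows by $1/2$ per step while the right side doubles.

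The only delicate step is choosing the cutoff at $k/4$. It has to be large enough that the tail sum is of order $k^{1-d}$, and small enough that the discarded early indices cannot by themselves account for $k/2$ failures. The choice $k/4$ balances these and matches the constant $2^{3d-1}$. I do not expect any genuine obstacle beyond this constant bookkeeping.
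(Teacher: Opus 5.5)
Your proof is correct and follows the paper's argument: pass to the failure indicators, discard the first $\lfloor k/4\rfloor$ indices, apply Markov's inequality at level $k/4$, and bound the tail sum by an integral. The differences are only in bookkeeping: you peel off $j_0^{-d}$ and use $j_0>k/4$, while the paper integrates from $\lfloor k/4\rfloor\geq k/8$. One harmless slip is that $(d+1)/2\leq 2^{d-1}$ is strict at $d=2$ ($3/2<2$), not an equality.
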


\subsection{Bounding the overflow probability}
\label{sec:overflow}

\Cref{lem:overflow} bounds the overflow probability conditional on the event that episode~$k$ has no exploration phase.
On this event, the episode is a busy period of the $\SED{\bar{r}_k}$ policy, so we can use \Cref{lem:overflow_bp}.
The proof of \Cref{lem:overflow} is in \Cref{app:proof_lem_overflow}.

\begin{lemma}
	\label{lem:overflow}
	Assume~\eqref{eq:stab_cond}.
	There exist $k_0\in \N_{\geq 1}$, $p_*, \xi_*, \theta_* \in (0,\infty)$ such that for $k\geq k_0$,
	$
		\P(\Eovf{k} \mid (\Eexplr{k})^c)
		\leq
		p_*^{-1} \exp(-\theta_* (m(k)-\xi_*))
		.
	$
\end{lemma}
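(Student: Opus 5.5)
On the event $(\Eexplr{k})^c$, episode~$k$ has no exploration phase. It starts with an empty system and consists of one busy period of the $\SED{\bar r_k}$ policy, where $\bar r_k\in[\rmin,\rmax]$ is $\calF_{k-1}$-measurable. The event $(\Eexplr{k})^c=\{N_1^{(k)}\wedge N_2^{(k)}\geq\alpha(k)\}$ is also $\calF_{k-1}$-measurable. Conditional on $\calF_{k-1}$, the arrivals and services in episode~$k$ are fresh Poisson and exponential variables, independent of the past. Hence, on $(\Eexplr{k})^c$, the queue-length process in episode~$k$ has the law of $\QSED{\bar r_k}$ started at $\mathbf 0$ and run until $T_{\mathbf 0}$. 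This gives, almost surely on $(\Eexplr{k})^c$,
\begin{align*}
\P(\Eovf{k}\mid\calF_{k-1})
=
\P_{\mathbf 0}\Bigl(\max_{0\le t\le T_{\mathbf 0}}\bigl(\QSEDsingle{r}{1}(t)\vee\QSEDsingle{r}{2}(t)\bigr)>m(k)\Bigr)\Big|_{r=\bar r_k}.
\end{align*}
We take $k_0$ such that $m(k)\geq 1$ for $k\geq k_0$. \Cref{lem:overflow_bp} then applies with $r=\bar r_k$ and $m=m(k)$, bounding the right-hand side by $p_r^{-1}\exp(-\theta_r(m(k)\sqrt{(r\wedge1)/2}-\sqrt{\xi_r}))$.

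The main step is to make this bound uniform over $r\in[\rmin,\rmax]$, which is compact and bounded away from $0$ and $\infty$. We check each constant in turn.
\begin{itemize}
\item By \eqref{eq:def_a_const}, $a_r\geq a_*:=\tfrac14\sqrt{\rmin\wedge1}\,\deltamin>0$, with $\deltamin>0$ by \eqref{eq:stab_cond}.
\item By \eqref{eq:def_b_const}, $b_r\leq 2(\rmax\vee1)(\mu_1\vee\mu_2+\lambda)$. Hence $\kappa_r$ is bounded above by some $\bar\kappa$.
\item Using $\bar\kappa$ in \eqref{eq:bnd_empty_before_arr} shows that $p_r$ is bounded below by $p_*:=\bigl(\tfrac{\mu_1\wedge\mu_2}{\lambda+\mu_1\wedge\mu_2}\bigr)^{\bar\kappa((1/\rmin)\vee1)}>0$. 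This uses that the base lies in $(0,1)$, so a larger exponent gives a smaller value.
\item Similarly, $\xi_r$ in \eqref{eq:bnd_phi_on_B} is bounded above by an explicit $\bar\xi<\infty$.
\item By \eqref{eq:exp_martingale_theta}, $\theta_r\geq\theta_0:=\min\bigl(\ln 2/\sqrt{2(\rmax\vee1)},\,a_*/(4(\rmax\vee1)(\lambda+\mu_1+\mu_2))\bigr)>0$, and $\sqrt{(r\wedge1)/2}\geq s:=\sqrt{(\rmin\wedge1)/2}$.
\end{itemize}

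For $m(k)s\geq\sqrt{\bar\xi}$, the exponent is nonpositive in the right direction, and we can combine the bounds as
\begin{align*}
\theta_r\bigl(m(k)\sqrt{(r\wedge1)/2}-\sqrt{\xi_r}\bigr)\geq\theta_0\bigl(m(k)s-\sqrt{\bar\xi}\bigr).
\end{align*}
This step requires care because $\theta_r$ multiplies a difference, so a lower bound on $\theta_r$ can only be used once that difference is nonnegative. We therefore enlarge $k_0$ until $m(k)s\geq\sqrt{\bar\xi}$ holds for all $k\geq k_0$, which is possible since $m(k)\to\infty$. Setting $\theta_*:=\theta_0 s$ and $\xi_*:=\sqrt{\bar\xi}/s$ then gives the bound $p_*^{-1}\exp(-\theta_*(m(k)-\xi_*))$ almost surely on $(\Eexplr{k})^c$.

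Finally, we integrate: $\P(\Eovf{k}\cap(\Eexplr{k})^c)=\E\bigl(\ind{(\Eexplr{k})^c}\P(\Eovf{k}\mid\calF_{k-1})\bigr)$. Dividing by $\P((\Eexplr{k})^c)$ yields the stated conditional bound; if this probability is zero the statement is vacuous.

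The main obstacle is justifying the conditional reduction rigorously, namely the strong-Markov/regenerative claim that, given $\calF_{k-1}$, episode~$k$ is a $\SED{\bar r_k}$ busy period. We argue this through the memorylessness of the Poisson arrival stream at the episode start $H_k$, which is a stopping time, together with the fact that $\bar r_k$ is frozen during the episode.
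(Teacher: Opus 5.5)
Your proposal is correct and follows essentially the same route as the paper: reduce to a single $\SED{\bar r_k}$ busy period on $(\Eexplr{k})^c$, apply \Cref{lem:overflow_bp}, make $p_r$, $\theta_r$, $\xi_r$ uniform over $[\rmin,\rmax]$, and enlarge $k_0$ so the bracketed difference is nonnegative before bounding $\theta_r$ from below. The differences are cosmetic: you get the uniform constants from explicit monotone bounds rather than the paper's continuity-and-compactness infima/suprema, and you condition on $\calF_{k-1}$ rather than integrating over the conditional law of $\bar r_k$, which makes the regenerative reduction slightly more transparent.
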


\subsection{Bounding the expected arrivals}
\label{sec:arrivals}

\Cref{lem:bnd_arrivals_episode} bounds the first and second moment of the number of arrivals in one episode of \gls{LASED}.
The proof is in \Cref{app:proof_lem_bnd_arrivals_episode}.
Here, we have to distinguish between the two types of episodes of \gls{LASED}.
Some episodes only have an exploitation phase, and therefore coincide with a single busy period.
For these episodes, we can use the busy period analysis of $\QSED{r}(t)$ in \Cref{lem:E_B_bnd} directly.
The other episodes have additional arrivals due to exploration, and thus require more analysis.
We show that the number of arrivals during exploration is bounded from above by a constant by construction of the algorithm.
Recall $\beta_r(\cdot)$ and $\gamma_r(\cdot)$ as defined in~\eqref{eq:bnd_B_final} and~\eqref{eq:bnd_B_squared_final}, respectively.

\begin{lemma}
	\label{lem:bnd_arrivals_episode}
	Let $0<r<\infty$.
	For $k\in\N_{\geq 1}$,
	\begin{align}
		\E(\Arrepi{k}\mid \bar{r}_k = r)
		 & \leq
		14 + \lambda \beta_r(7,7),
		\label{eq:bnd_E_arr_epi_k}
		\\
		\E(\Arrepi{k}^2\mid \bar{r}_k = r)
		 & \leq
		196 + 30 \lambda \beta_r(7,7)  + 2\lambda^2 \gamma_r(7,7)
		\label{eq:bnd_E2_arr_epi_k}
		.
	\end{align}
\end{lemma}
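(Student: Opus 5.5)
Conditional on $\bar r_k = r$ (and on $\calF_{k-1}$), I will split the episode into its exploration phase (if any) and its exploitation phase. This gives $\Arrepi{k} = A_{\mathrm{ex}} + A_{\mathrm{bp}}$. Here $A_{\mathrm{ex}}$ is the number of customers routed during exploration. $A_{\mathrm{bp}}$ is the number of arrivals during the subsequent $\SED{r}$ dynamics, which run until the system empties, started from the state $\bfQ$ left at the end of exploration.

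\textbf{Step 1: bound on $A_{\mathrm{ex}}$.} The first step is a deterministic bound on $A_{\mathrm{ex}}$. Exploration is triggered only when $N_1^{(k)}\wedge N_2^{(k)}<\alpha(k)$. By \eqref{eq:dep_geq_alpha}, $N_i^{(k)}\geq \alpha(k-1)$. Moreover, $\alpha(k)-\alpha(k-1)$ is small: with $\alpha(k)=\lceil \ln(k+1)^4\rceil$, I expect the increment is at most $6$ for every $k$. So each server receives at most $(\lceil\alpha(k)\rceil - N_i^{(k)})\vee 1 \leq 7$ forced customers, and $A_{\mathrm{ex}}\leq 14$. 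This needs a small calculus check: $\frac{d}{dx}\ln(x+1)^4 = 4\ln(x+1)^3/(x+1)$ is maximised at $x+1=e^3$, with value $4\cdot 27 e^{-3}\approx 5.4$, so the increment of the ceiling is at most $6$. The case $k=1$ is handled directly ($\alpha(1)=1$, $N_i=0$). This explains the constants $7$ and $14$.

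\textbf{Step 2: the starting state.} During exploration customers may also depart, so at its end each queue holds at most the number of customers routed to it, i.e.\ $\bfQ\leq (7,7)$ componentwise. I then bound $A_{\mathrm{bp}}$ via the time $T_{\mathbf 0}$ for $\SED{r}$ started from $\bfQ$. By the strong Markov property and independence of the Poisson arrivals from the past given $\bfQ$, $A_{\mathrm{bp}}$ is the number of Poisson($\lambda$) arrivals in $[0,T_{\mathbf 0}]$. Since $T_{\mathbf 0}$ is a stopping time, Wald's identity gives $\E(A_{\mathrm{bp}})\leq\lambda\E_\bfQ(T_{\mathbf 0})$. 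For the second moment, I use the compensated martingale $(N(t)-\lambda t)^2-\lambda t$ stopped at $T_{\mathbf 0}$, which gives $\E(A_{\mathrm{bp}}^2)\leq \lambda\E(T_{\mathbf 0})+\lambda^2\E(T_{\mathbf 0}^2)\cdot 2$ (roughly, using $(a+b)^2\le 2a^2+2b^2$). Then I invoke \Cref{lem:E_B_bnd}, using that $\beta_r$ and $\gamma_r$ are monotone in $\phi_r(\bfq)$, which is increasing in each coordinate. Hence $\beta_r(\bfQ)\leq\beta_r(7,7)$ and $\gamma_r(\bfQ)\leq\gamma_r(7,7)$. Episodes with no exploration start from $\mathbf 0$, which is covered a fortiori.

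\textbf{Step 3: combining.} For the first moment, $\E(\Arrepi{k}\mid\bar r_k=r)\leq 14+\lambda\beta_r(7,7)$. For the second moment, expand $(A_{\mathrm{ex}}+A_{\mathrm{bp}})^2\leq 196+28A_{\mathrm{bp}}+A_{\mathrm{bp}}^2$. Then
\begin{align*}
\E(A_{\mathrm{bp}}^2)\leq \lambda\beta_r+\lambda^2\E(T_{\mathbf 0}^2)+\ldots\leq 2\lambda\beta_r+2\lambda^2\gamma_r ,
\end{align*}
which yields $196+30\lambda\beta_r(7,7)+2\lambda^2\gamma_r(7,7)$.

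\textbf{Main obstacle.} The delicate point is the conditioning. $\bar r_k$ and the exploration length depend on the history, so I must argue that, given $\calF_{k-1}$ (which determines $\bar r_k$ and the $N_i^{(k)}$), the post-exploration dynamics are exactly $\SED{r}$ started from $\bfQ$. This follows from the strong Markov property at the end of exploration. The bounds hold uniformly in $\calF_{k-1}$ on $\{\bar r_k=r\}$, so I then integrate to condition on $\bar r_k=r$ alone.
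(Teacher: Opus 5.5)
Your proposal is correct and follows essentially the same route as the paper. You bound the forced-exploration arrivals by $14$ using $\alpha(k)-\alpha(k-1)\le 6$, start the $\SED{r}$ exploitation phase from a state dominated by $(7,7)$, and convert arrivals into $\lambda\E(T_{\mathbf 0})$ and $2\lambda\E(T_{\mathbf 0})+2\lambda^2\E(T_{\mathbf 0}^2)$ via the Poisson martingales, before finishing with \Cref{lem:E_B_bnd} and the monotonicity of $\beta_r,\gamma_r$. (Your Step 2 display drops a factor $2$ on $\lambda\E(T_{\mathbf 0})$, but Step 3 uses the correct $2\lambda\beta_r+2\lambda^2\gamma_r$; and bounding $\E_{\bfQ}(T_{\mathbf 0})\le\beta_r(\bfQ)\le\beta_r(7,7)$ directly is, if anything, slightly cleaner than the paper's passage through $\E_{(7,7)}$, which tacitly needs monotonicity of the expected busy period in the initial state.)
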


The proof of \Cref{thm:regret_ub} is completed in \Cref{app:proof_thm_regret_ub} by combining the regret decomposition~\eqref{eq:E_regret_split} with the bounds on the event probabilities and arrival counts.

\section{Related work}%
\label{sec:literature}

\paragraph{Load balancing policies.}
In load balancing systems with identical servers, \gls{JSQ} has been proven to be delay-optimal under Markovian assumptions~\citep{Winston1977,Weber1978}.
However, optimality does not hold for every service time distribution~\citep{Whitt1986}.
In systems with heterogeneous servers, a delay-optimal policy is not known.
In fact, reference~\citep{Banawan1989} uses a semi-\gls{MDP} to prove that a delay-optimal policy based only on the instantaneous queue lengths generally does not exist.

The analysis of routing in heterogeneous systems is generally intractable since there is no symmetry among servers as in homogeneous systems~\citep{Ma2025}.
This means that state space reduction techniques do not readily apply.
Still, a wide range of policies have been studied in this setting, including heuristic extensions of policies that are known to perform well in homogeneous systems.
Some notable examples are
\gls{JSQ}~\citep{Haight1958,Cohen1998},
\gls{SED}~\citep{Banawan1992,Selen2016,Lui1995},
\gls{JIQ}~\citep{Lu2011},
Random Routing~\citep{Adan1991},
and
the Power-of-$d$-Choices algorithm~\citep{vanderBoor2022,HurtadoLange2021,Jaleel2022}.

Load balancing with heterogeneous servers is often studied in asymptotic regimes such as many-server or heavy-traffic limits~\citep{vanderBoor2022,HurtadoLange2021,Eryilmaz2012,Gardner2021,Jaleel2022,Chen2012,Bhambay2022,Liu2025}.
Restricting to an asymptotic regime typically enables a more detailed analysis.
For example, existing work establishes throughput and heavy-traffic optimality for variants of the Power-of-$d$-Choices algorithm~\citep{HurtadoLange2021,Gardner2021}, workload optimality in diffusion limits~\citep{Chen2012}, and delay optimality in fluid limits for speed-aware variants of \gls{JSQ}~\citep{Bhambay2022}.

\paragraph{Analysis of SED.}
The analysis of \gls{SED} in heterogeneous load balancing systems is typically intractable~\citep{Whitt1986}.
The vector of queue lengths at the different servers can be modeled as a non-negative random walk.
Since \gls{SED} is state-dependent, the random walk is inhomogeneous.
Challenges in the analysis of this random walk for as few as two servers are discussed in~\citep[Ch.~10]{Fayolle2017}.
There is no known closed-form solution to the stationary distribution for an arbitrary number of servers.

Yet, approximations and bounds do exist.
For example, reference~\citep{Selen2016} uses the compensation approach to prove that the stationary distribution for two servers can be expressed as a series of product forms with coefficients determined recursively.
Performance metrics such as the mean response time can then be computed.
And reference~\citep{Lui1995} provides upper and lower bounds on the mean response time of \gls{SED}  for two servers.
These results highlight that \gls{SED} is, at present, only partially understood.

\paragraph{Learning in load balancing/skill-based queues.}
In recent years, a variety of algorithms have been proposed for joint learning and optimization in load balancing and skill-based queues with unknown parameters.
These include bandit-inspired algorithms~\citep{Fu2023,Krishnasamy2021,Stahlbuhk2021,Freund2023,vanKempen2026}, policy gradient methods~\citep{Jali2024}, an \gls{UCB} guided MaxWeight algorithm~\citep{Yang2023}, and a learning algorithm based on marginal service rates~\citep{Zhang2025}.

A closely related model is studied in~\citep{Choudhury2021}, where a load balancing system is analyzed with unknown service rates and unobservable queue lengths.
The authors develop a bandit-based exploration algorithm for delay minimization and prove a regret upper bound, but their bound does not give a constant-regret guarantee.
Their analysis is restricted to static weighted random routing policies and regret is measured with respect to the best fixed randomized routing rule.
In contrast, we use the state-dependent \gls{SED} policy as the oracle benchmark.
Since its stationary distribution is unknown, we define regret statewise and analyze it at the sample-path level.

Our algorithm exploits the regenerative structure of the queueing process by using busy periods as episodes.
This way, each episode starts from the same state, and the stochastic evolution is, conditionally on the chosen policy, independent of other episodes.
This approach is inspired by the episodic learning framework for admission control proposed in~\citep{Cohen2024}.
Related regenerative structures in learning and control of queueing systems are also used in~\citep{Krishnasamy2021,Dai2022}.

\section{Numerical experiments}%
\label{sec:num_res}

This section analyzes the properties of \gls{LASED} numerically.
Throughout, $\mumin = 0.01$, $\mumax=100$, and $\alpha(k) = \lceil \ln(k+1)^4\rceil$.
Unless otherwise stated, the results are based on 100 independent replications and all plots show $95\%$-confidence bands.

\subsection{Empirical methods versus learning}
\label{sec:num_res_empirical}
In this section we analyze \gls{ESED}, which uses fixed estimates, to show that even small estimation error of $1\%$ can lead to poor performance of empirical policies. 
A small error means that the slopes of the true and estimated decision lines are close to each other, and hence the disagreement wedge has a small angle; see \Cref{fig:ESED_wedge}.

Under \gls{ESED}, the regret grows linearly in time, and the slope increases as the load becomes higher, as shown in \Cref{fig:regret_ESED}.
The process visits the disagreement region on a regular basis, and more often when the load is higher. 
The regret of \gls{LASED} grows more slowly over time, as shown in \Cref{fig:regret_LASED}. 
For low loads, the regret curve converges to a constant value. 
For high loads, this convergence takes longer. 
Unlike \gls{ESED}, the disagreement region under \gls{LASED} is continuously changing as parameters are updated. 
The decreasing rate of regret accumulation indicates that the disagreement region is visited less frequently over time.

These results show that even a small fixed estimation error can have a substantial impact on performance, while learning can mitigate this effect over time.

\begin{figure}[h]
	\centering
	\begin{subfigure}[t]{0.27\linewidth}
	\begin{tikzpicture}[scale=0.12, line cap=round, line join=round]
		\def\mone{30}
		\def\mtwo{30}
		\def\r{91/75} 
		\def\rn{92/74} 

		\draw  (0,0) node[below left] {\small 0} -- (\mone+0.6,0) node[ right] {\small $q_1$};
		\draw (0,0) -- (0,\mtwo+0.6) node[above] {\small $q_2$};

		\draw (10,0) -- (10,-0.8) node[below] {\small $10$};
		\draw (0,10) -- (-0.8,10) node[left] {\small $10$};
		\draw (20,0) -- (20,-0.8) node[below] {\small $20$};
		\draw (0,20) -- (-0.8,20) node[left] {\small $20$};

		\draw[dashed,thick] (\mone,0) -- (\mone,\mtwo);
		\draw[dashed,thick] (0,\mtwo) -- (\mone,\mtwo);

		\begin{scope}
			\clip (0,0) rectangle (\mone,\mtwo);
			\draw[very thick,black]
			plot[domain=0:\mone] (\x, {\r*(\x+1)-1});
		\end{scope}
		\begin{scope}
			\clip (0,0) rectangle (\mone,\mtwo);
			\draw[very thick,black]
			plot[domain=0:\mone] (\x, {\rn*(\x+1)-1});
		\end{scope}

		\draw[step=1,gray!30] (0,0) grid (\mone,\mtwo);

		\pgfmathsetmacro{\eps}{1e-2}

		\foreach \x in {0,...,\mone}{
				\foreach \y in {0,...,\mtwo}{
						\pgfmathsetmacro{\ylineA}{\r*(\x+1)-1}
						\pgfmathsetmacro{\ylineB}{\rn*(\x+1)-1}

						\pgfmathsetmacro{\ylow}{min(\ylineA,\ylineB)}
						\pgfmathsetmacro{\yhigh}{max(\ylineA,\ylineB)}

						\pgfmathsetmacro{\dA}{abs(\y-\ylineA)}
						\pgfmathsetmacro{\dB}{abs(\y-\ylineB)}

						\ifdim \dA pt < \eps pt
							\node[
								star,
								star points=5,
								draw=black,
								line width=0.8pt,
								fill=white,
								inner sep=1.5pt
							] at (\x,\y) {};
						\else\ifdim \dB pt < \eps pt
								\node[
									star,
									star points=5,
									draw=black,
									line width=0.8pt,
									fill=white,
									inner sep=1.5pt
								] at (\x,\y) {};

							\else\ifdim \y pt > \ylow pt
									\ifdim \y pt < \yhigh pt
										\node[text=darkgreen,fill=darkgreen,scale=0.25] at (\x,\y) {$\square$};

									\else
										\ifdim \y pt < \ylineA pt
											\draw[blue, line width=1pt] (\x,\y) circle (2pt);
										\else
											\fill[red] (\x,\y) circle (3pt);
										\fi
									\fi

								\else
									\ifdim \y pt < \ylineA pt
										\draw[blue, line width=1pt] (\x,\y) circle (2pt);
									\else
										\fill[red] (\x,\y) circle (3pt);
									\fi
								\fi\fi\fi
					}
			}

		\pgfmathsetmacro{\angle}{atan(\r)}
		\node[
			rotate=\angle,
			anchor=west,
			rounded corners=1pt,
			inner sep=2pt
		]  at (12,{\r*12-4}) {\Large True};

		\pgfmathsetmacro{\angle}{atan(\rn)}
		\node[
			rotate=\angle,
			anchor=west,
			rounded corners=1pt,
			inner sep=2pt
		]  at (8,{\rn*8 + 4}) {\Large Estimate};

	\end{tikzpicture}
	\caption{Disagreement wedge between the true and estimated ratios.}%
	\label{fig:ESED_wedge}
	\end{subfigure}
	\begin{subfigure}[t]{0.35\linewidth}
		\includegraphics[width=\linewidth]{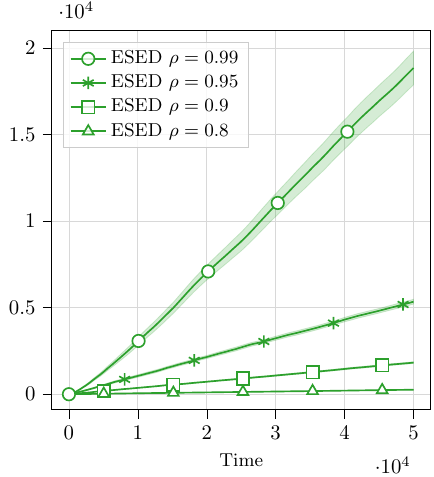}
		\caption{ESED.}
		\label{fig:regret_ESED}
	\end{subfigure}
	\begin{subfigure}[t]{0.35\linewidth}
		\includegraphics[width=\linewidth]{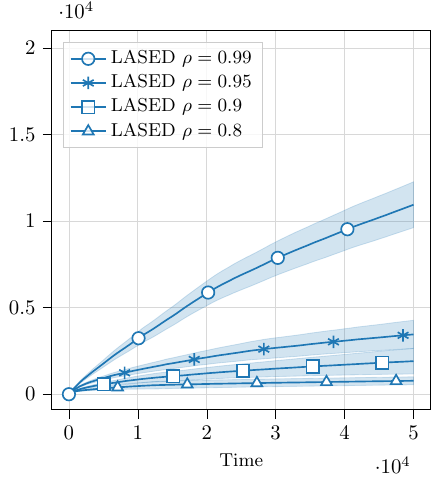} 
		\caption{LASED.}
		\label{fig:regret_LASED}
	\end{subfigure}
	\caption{Disagreement wedge and accumulated regret over time of \gls{ESED} and \gls{LASED}  in a two-server system for different traffic intensities. 
	The true service rates are $\mu_1 = 0.75$ and $\mu_2 = 0.91$, while the empirical estimators are initialized at $\hat{\mu}_1=0.74$ and $\hat{\mu}_2=0.92$. }
	\label{fig:regret_LASED_ESED}
\end{figure}

\subsection{When exploitation suffices}
\label{sec:num_res_exploration}
We next analyze the effect of exploration.
To this end, we compare \gls{LASED} with a Greedy policy that only ever exploits: routing decisions are based on empirical estimators that are updated after each service completion.

\Cref{fig:regret_size_loads} shows the average regret and number of customers in the system for different traffic intensities $\rho := \lambda/(\mu_1+\mu_2)$.
For moderate loads ($\rho \in\{0.3,0.6\}$) the regret of \gls{LASED} converges to a finite limit, see \Cref{fig:regret_loads}.
For $\rho=0.9$, the regret of \gls{LASED} takes longer to converge since the system has long busy periods and large queue lengths which delay learning.

Greedy accumulates less regret than \gls{LASED} and the difference increases with the load. 
Moreover, the number of customers in the system for Greedy is comparable or lower than for \gls{LASED}; see \Cref{fig:size_loads}.
In this model, the learning problem is structurally simple and so Greedy correctly identifies \gls{SED} without explicit exploration. 
Greedy also updates estimators after service completions, while \gls{LASED} only updates between episodes. 
Greedy's estimators therefore converge faster, leading to less regret. 

\begin{figure}[h]
	\centering
	\begin{subfigure}[t]{0.48\linewidth}
		\includegraphics[width=\linewidth]{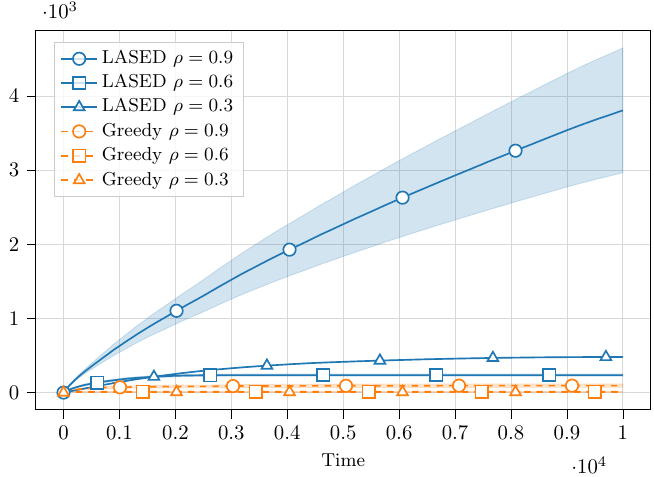} 
		\caption{Average regret.}
		\label{fig:regret_loads}
	\end{subfigure}
	\begin{subfigure}[t]{0.48\linewidth}
		\includegraphics[width=\linewidth]{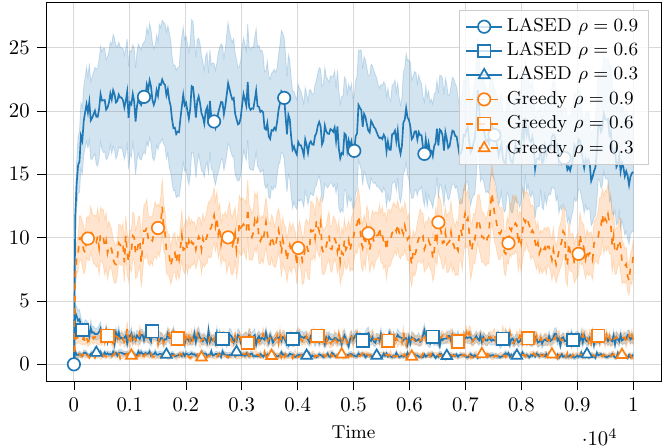}
		\caption{Number of customers.}
		\label{fig:size_loads}
	\end{subfigure}
	\caption{Performance of \gls{LASED} and a Greedy policy in a two-server system for different traffic intensities. The true service rates are $\mu_1 = \mu_2 = 1$, while the empirical estimators are initialized at $\hat{\mu}_1=1$ and $\hat{\mu}_2=10$. }
	\label{fig:regret_size_loads}
\end{figure}

To better understand the convergence of \gls{LASED}, \Cref{fig:episodes_loads} shows the cumulative count of completed episodes over time.
For $\rho\in\{0.3,0.6\}$, the curves become approximately linear over time, indicating that the episodes occur at an approximately constant long-run rate. 
The slope is lower for $\rho=0.3$ than $\rho=0.6$, which is consistent with the lower arrival intensity: fewer arrivals initiate busy periods over time. 

Under high load $\rho=0.9$, however, the curve in \Cref{fig:episodes_loads} remains convex rather than linear over the simulated time horizon. 
Its increasing slope indicates that episodes become shorter over time and that the episode rate has not yet stabilized.
The initially long episodes lead to large regret since estimators are only updated between episodes.

\begin{figure}[h]
	\centering
	\begin{minipage}[t]{0.7\linewidth}
		\centering
		\includegraphics[width=\linewidth]{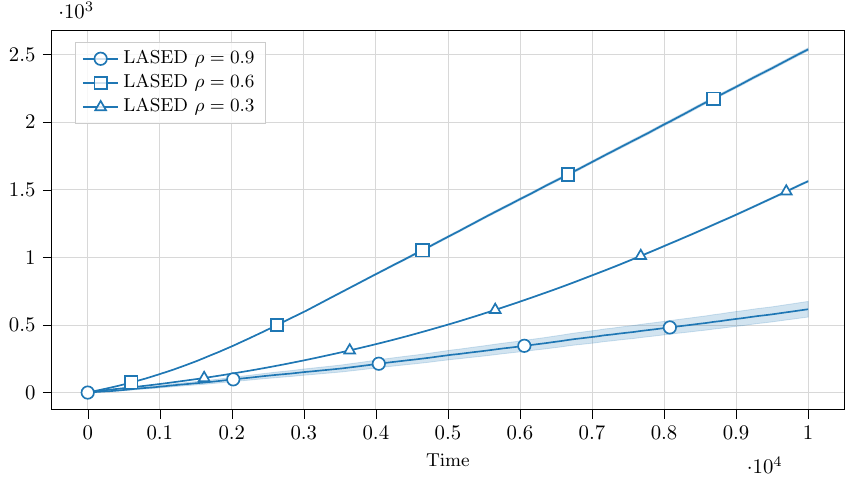}
		\captionof{figure}{Cumulative number of episodes of \gls{LASED} over time for the two-server system for different loads.}
		\label{fig:episodes_loads}
	\end{minipage}
	\hfill
	\begin{minipage}[t]{0.28\linewidth}
		\centering
		\includegraphics[width=\linewidth]{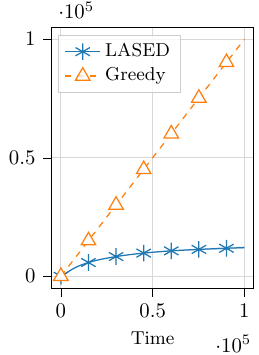}
		\caption{Average regret of \gls{LASED} and Greedy in a two-server system with poor initialization.}
		\label{fig:regret_bad_init}
	\end{minipage}
\end{figure}

\subsection{When exploration is beneficial}
We next illustrate an example where Greedy performs suboptimally due to poor initialization.
We let $\lambda=1$, $\mu_1=5$, and $\mu_2=10$.
The estimators are initialized at $\hat{\mu}_1 = 5$ and $\hat{\mu}_2=0.1$.
This means that the initial estimator $\hat{r}_1 = 0.02$ differs from the true $r^* = 2$ by a factor~$100$.
Due to the bad initialization, the first server is believed to be faster.
Since the load is low, the queue at the first server typically remains short, and Greedy therefore does not use the second server, which is actually faster. 
This leads to linear regret on this time horizon, as shown in \Cref{fig:regret_bad_init}.
On the other hand, \gls{LASED} explores both servers and correctly converges quickly to the \gls{SED} rule.

\subsection{The exploration--exploitation trade-off}
We have seen examples where Greedy collects sufficient samples and additional exploration is unnecessary.
On the other hand, we have seen that in some settings exploration helps to quickly recover the \gls{SED} rule, while Greedy remained stuck in a suboptimal regime. 
This leads to a central question: when is forced exploration beneficial? 

To investigate this, we run the \gls{LASED} and Greedy policies in different settings where we vary the traffic intensity $\rho$, the true ratio $r$, and the initial estimator ratio $\hat{r}_1$. 
Throughout, we take $\mu_1+\mu_2=2$ and $\lambda=(\mu_1+\mu_2)\rho$. 
\Cref{fig:regret_ratio} shows the difference between the average total accumulated regret of \gls{LASED} and Greedy.
A positive number indicates that \gls{LASED} accumulated more regret than Greedy, and a negative number indicates the opposite.
In the first panel, the imbalance between the true service rates is large, while in the second panel the servers are homogeneous.

We observe that \gls{LASED} outperforms Greedy in terms of regret when the servers are homogeneous, the load is low, and the initial estimation error is large. 
Indeed, the cells for $r=1$, $\rho=0.1$, and $\hat{r}_1=0.1,10$ in \Cref{fig:regret_ratio} have negative values.  
In these scenarios, Greedy only routes customers to the server that is believed to be the fastest.
Only when the queue length grows large, which does not happen often in the low-load scenario, Greedy sends a customer to the other server. 
Once a service time sample from that server is obtained, the initial error is corrected and then Greedy quickly recovers the correct balance. 
By contrast, \gls{LASED} enforces exploration and therefore obtains samples from both servers earlier, allowing it to recover the correct \gls{SED} policy more quickly.

Interestingly, the difference in regret does not look monotone in the initial estimation error, especially in the $r=1$ panel. 
Hence, accurate initialization does not uniformly imply the largest advantage for Greedy; its effect also depends on the load and the true service rate ratio.

In the first panel, \gls{LASED} suffers when the load is low.
In this case, the episodes are short since busy periods are brief. 
Since forced exploration is triggered by \gls{LASED} based on the number of samples, it often triggers exploration which leads to regret.
On the other hand, Greedy performs well since one server is truly much faster than the other, hence routing decisions are fairly robust: even with a poor estimate, the Greedy decision often agrees with the \gls{SED} decision. 
We observe that \gls{LASED} does improve as the load increases, since then exploration is triggered less often. 

In the second panel, the servers are homogeneous. 
In this case, \gls{LASED} suffers when the load is high and the initial error is non-zero, since estimators are only updated between episodes (busy periods), and busy periods take longer under high load. 
Greedy, on the other hand, updates after every service completion and can therefore adapt more quickly.

If we obtain samples from both servers continuously, then the empirical estimators will converge and only exploitation is sufficient.
In this case, Greedy will converge to the \gls{SED} policy quickly, with little accumulated regret. 
Exploration is thus beneficial only if Greedy does not obtain sufficient observations, which can happen when Greedy incorrectly starves one server.

\begin{figure}[h]
	\centering
	\includegraphics[width=\linewidth]{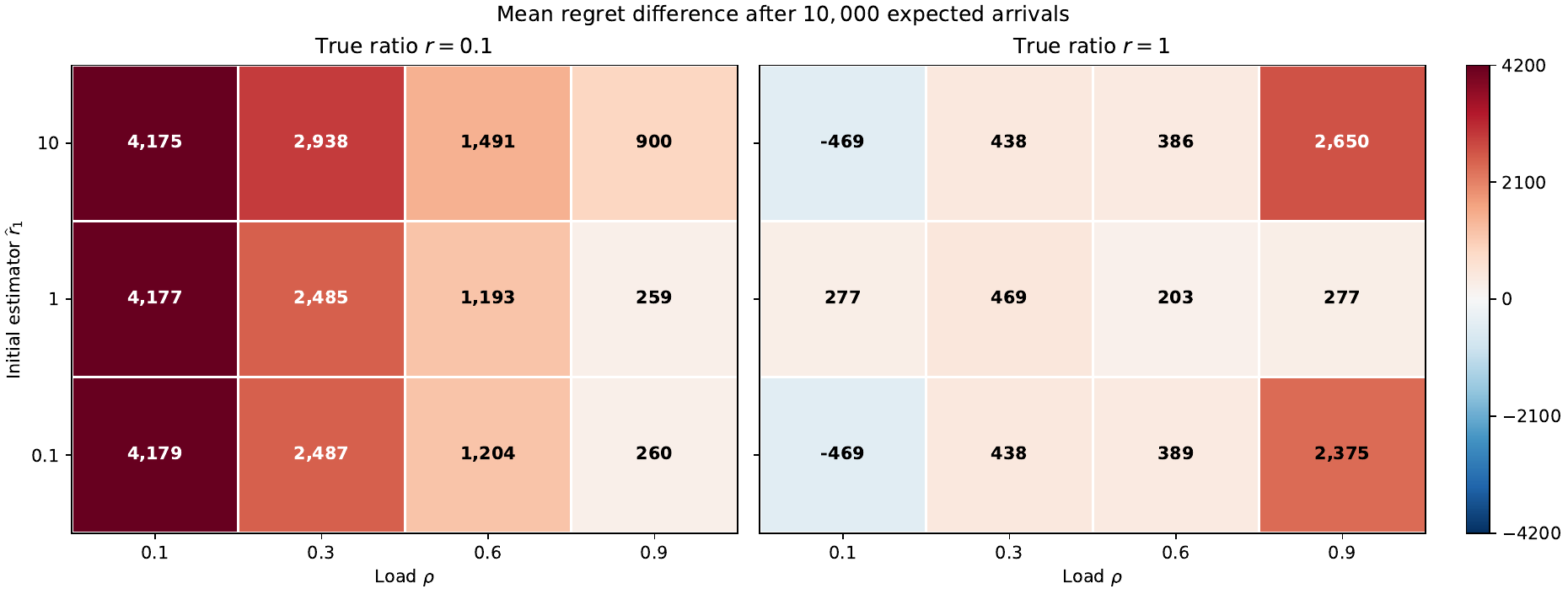}
	\caption{Difference in expected regret between the LASED and Greedy policies. Each cell is based on 50 independent simulations of both policies. Each simulation runs for the duration of $10^4$ expected arrivals, hence the value in each cell is in $[-10^4,10^4]$. }
	\label{fig:regret_ratio}
\end{figure}

\section{Conclusion}\label{sec:conclusion}

Suppose that you want to apply \gls{SED} to a two-server load balancing system with heterogeneous service rates, but unfortunately, the service rates are unknown to you.
The first idea you have is to estimate the service rates offline and then run \gls{SED}.
It turns out that this does not work for all arrival and service rates.
Your second idea is to run \gls{SED} while adjusting your estimates online.
This too, turns out, does not work for all system parameters.
The reason both ideas do not work is because of how \gls{SED} is defined: the slightest error in any one service rate creates a suboptimal decision region.
The relatively large size of this region is what causes linear regret accumulation.

To tackle this problem, we proposed \gls{LASED}.
This algorithm learns the service rates while applying \gls{SED}, and uses forced exploration and episodic updates to ensure finite regret.
For it, we provide a rigorous performance analysis that proves that the algorithm actually achieves asymptotically bounded regret.
The analysis works by decomposing the regret into three contributing factors: exploration errors, estimation errors, and overflow events.

In short, we conclude that in queueing systems with sensitive, state-dependent decision boundaries, standard learning algorithms are not enough.
Approaches that balance exploration and exploitation while routing in queues can stop regret from growing.

\section*{Acknowledgments}

This work is part of \emph{Valuable AI}, a research collaboration between the Eindhoven University of Technology and the Koninklijke KPN N.V.
Parts of this research have been funded by the EAISI's IMPULS program, and by Holland High Tech~\textbar~TKI HTSM via the PPS allowance scheme for public-private partnerships.

\addcontentsline{toc}{section}{References}
\printbibliography

\appendix
\crefalias{section}{appendix}

\section{Remaining proof details}
\subsection{Proof of \Cref{thm:regret_ub}}
\label{app:proof_thm_regret_ub}

\begin{proof}
	Consider~\eqref{eq:E_regret_split}.
	We bound the terms individually.

		{\it Exploration.}
	By \Cref{lem:exploration}, for any $k\in\N_{\geq 1}$ sufficiently large, with $v^*$ in~\eqref{eq:def_vr},
	\begin{align}
		\P(\Eexplr{k})
		 & \leq
		2\exp(-(v^*)^2 k/64) + 2^{10} \exp(17) (k-1)^{-3}
		.
		\label{eq:P_exp_v_star}
	\end{align}
	Since the exponential term in~\eqref{eq:P_exp_v_star} decays faster than the polynomial term, there exists $\tilde{c}_1>0$ such that for $k$ sufficiently large
	\begin{align}
		\P(\Eexplr{k})
		 & \leq
		\tilde{c}_1 (k-1)^{-3}
		.
		\label{eq:P_expl_bnd_proof}
	\end{align}

	{\it Misestimation.}
	By \Cref{lem:est_out_of_range}, 
	\begin{align}
		\label{eq:P_est_error_proof}
		\P(\Eest{k})
		 & \leq
		\P(|\hat{r}_k - r^*| \geq \uval{r^*}{m(k)})
		.
	\end{align}
	We obtain from \Cref{lem:est_error} that there exists $\tilde{c}_2 > 0$ such that for $k\in\N_{\geq 1}$ sufficiently large, 
	\begin{align}
		\P(|\hat{r}_k - r^*| \geq \uval{r^*}{m(k)})
		 & \leq
		\tilde{c}_2 k^{-3}
		.
		\label{eq:P_est_error_bnd_proof}
	\end{align}
	Here, we used that $k^{-\ln(k)} = o(k^{-p})$ for any $p > 0$. 

	{\it Overflow.}
	By the law of total probability for any $k\in\N_{\geq 1}$ sufficiently large
	\begin{align}
		\label{eq:overflow_LTP}
		\P(\Eovf{k})
		 & \leq
		\P(\Eexplr{k}) + \P(\Eovf{k} \mid (\Eexplr{k})^c)
		.
	\end{align}
	By \Cref{lem:overflow}, there exists $\tilde{c}_3>0$ such that
	\begin{align}
		\P(\Eovf{k} \mid (\Eexplr{k})^c)
		 & \leq
		p_*^{-1} \exp(-\theta_* (m(k)-\xi_*))                        \\
		 & \leq
		p_*^{-1} \exp(\theta_* (\xi_*-1)) \exp(-\theta_* c_1 \ln(k)) \\
		 & =
		p_*^{-1} \exp(\theta_* (\xi_*-1)) k^{-\theta_* c_1}          \\
		 & =
		\tilde{c}_3 k^{-\theta_* c_1}
		,
		\label{eq:P_ovf_bnd_proof}
	\end{align}
	where $\theta_* := \inf_{r \in [\rmin, \rmax]} \theta_r \sqrt{(r \wedge 1)/2}$, with $\theta_r$ defined in \eqref{eq:exp_martingale_theta}.
	We have $\theta_* \in (0,\infty)$. 
	Indeed, the interval $[\rmin,\rmax]$ is compact and bounded away from zero, and $a_r$ in~\eqref{eq:def_a_const} is uniformly positive on this interval since $\delta_{\min}>0$ by~\eqref{eq:stab_cond}.

	{\it Arrivals.}
	By construction, the estimator $\bar{r}_k$ in \gls{LASED} is always in $[\rmin,\rmax]$. 
	Moreover, $\beta_r(7,7)$ and $\gamma_r(7,7)$ are uniformly on that interval. 
	Therefore, by \Cref{lem:bnd_arrivals_episode} there exist $\tilde{c}_4 > 0$, depending on the system parameters $\lambda,\mu_1,\mu_2$, and algorithm parameters $\mumin$ and $\mumax$, such that for any episode $k$,
	\begin{align}
		\E(\Arrepi{k}^2) \leq \tilde{c}_4
		.
	\end{align}

	We next combine the bounds above in \eqref{eq:E_regret_split}. 
	There exists $k^* \in \N_{\geq 1}$ such that \eqref{eq:P_expl_bnd_proof}, \eqref{eq:P_est_error_bnd_proof}, and~\eqref{eq:P_ovf_bnd_proof} hold for all $k\geq k^*$.
	For the first $k^*-1$ episodes, we bound all probabilities in~\eqref{eq:E_regret_split} by 1. 
	This gives
	\begin{align}
		\E(R_{r^*}(n))
		 & \leq
		3 k^* \sqrt{\tilde{c}_4}
		+
		\sum_{k=k^*+1}^{n}
		\sqrt{
			3 \tilde{c}_4
			\Bigl(
			2\tilde{c}_1 (k-1)^{-3}
			+
			\tilde{c}_2 k^{-3}
			+
			\tilde{c}_3 k^{-\theta_* c_1}
			\Bigr)
		}
		.
		\label{eq:regret_bnd_sum}
	\end{align}
	The sum in~\eqref{eq:regret_bnd_sum} converges as $n\to\infty$ if we let $c_1 > 0$ such that
	$
		c_1 > 2/\theta_*
		.
	$
	This completes the proof.

\end{proof}

\subsection{Proof of~\Cref{lem:lyap_subgeom_drift}}
\label{app:proof_lem_lyap_subgeom_drift}

\begin{proof}
	Substituting~\eqref{eq:def_lyap_func} into~\eqref{eq:def_drift_operator}, we find
	\begin{align}
		(\calL_r\phi_r)(\bfq)
		 & =
		\lambda \ind{q_2+1 \geq r(q_1 + 1)} r(q_1+1)
		+
		\lambda \ind{q_2+1 < r(q_1 + 1)}  (q_2+1) \\
		 & \phantom{=}
		-
		\mu_1 \ind{q_1 > 0} rq_1
		-
		\mu_2 \ind{q_2 > 0} q_2 \nonumber
		.
	\end{align}
	We next partition the state space into two regions and analyze them separately.

		{\bf Region (I).}
	Consider the region where $q_2+1 \geq r(q_1+1)$.
	Since $r,q_i \geq 0$ and $\ind{q_i > 0}q_i = q_i$, $i\in\{1,2\}$, we have
	\begin{align}
		(\calL_r\phi_r)(\bfq)
		 & =
		r q_1 (\lambda - \mu_1) -  q_2 \mu_2 + r \lambda
		\leq
		r q_1 (\lambda - \mu_1)^+ -  q_2 \mu_2 + r \lambda
		.
	\end{align}
	In this region $rq_1 \leq q_2 + 1-r$.
	Then
	\begin{align}
		\label{eq:bnd_q2_delta1}
		(\calL_r\phi_r)(\bfq)
		 & \leq
		r q_1 (\lambda - \mu_1)^+ -  q_2 \mu_2 + r \lambda
		\leq
		q_2 ((\lambda - \mu_1)^+ - \mu_2)
		+
		(1-r)(\lambda - \mu_1)^+
		+
		r \lambda
		.
	\end{align}
	Using the definition of $\delta_1$ in~\eqref{eq:def_delta1} and $-q_2 \leq -(rq_1+r-1)$ we obtain for this region
	\begin{align}
		\label{eq:ineq_q2_delta1}
		-q_2 \delta_1
		=
		- \frac{q_2 \delta_1}{2} - \frac{q_2 \delta_1}{2}
		\leq
		-\frac{q_2 \delta_1}{2} - \frac{(rq_1 + r-1) \delta_1}{2}
		,
	\end{align}
	where the inequality uses that $\delta_1 > 0$ by~\eqref{eq:stab_cond}.
	Substitution of~\eqref{eq:ineq_q2_delta1} into~\eqref{eq:bnd_q2_delta1} gives
	\begin{align}
		q_2 ((\lambda - \mu_1)^+ - \mu_2)
		+
		(1-r)(\lambda - \mu_1)^+
		+
		r \lambda
		 & \leq
		-(rq_1+q_2) \frac{\delta_1}{2}
		+
		(1-r)((\lambda - \mu_1)^+ + \frac{\delta_1}{2})
		+
		r \lambda \\
		 & =
		-(rq_1+q_2) \frac{\delta_1}{2}
		+
		\frac{1-r}{2}(\mu_2 + (\lambda - \mu_1)^+)
		+
		r \lambda
		.
		\label{eq:bnd_drift_region1}
	\end{align}

	{\bf Region (II).}
	For the other region, we have $q_2 < r(q_1+1)-1$.
	We find similarly
	\begin{align}
		- r q_1 \mu_1 + q_2 (\lambda - \mu_2)^+ + \lambda
		 & \leq
		r q_1 ((\lambda - \mu_2)^+ - \mu_1)
		+
		(r-1) (\lambda - \mu_2)^+
		+
		\lambda \\
		 & \leq
		-(rq_1 + q_2) \frac{\delta_2}{2} + \frac{r-1}{2} (\mu_1 + (\lambda-\mu_2)^+) + \lambda
		.
		\label{eq:bnd_drift_region2}
	\end{align}

	By~\eqref{eq:bnd_drift_region1},~\eqref{eq:bnd_drift_region2} and the definitions of $\deltamin$ in~\eqref{eq:def_deltamin} and $b_r$ in~\eqref{eq:def_b_const},
	\begin{align}
		(\calL_r\phi_r)(\bfq)
		 & \leq
		-(rq_1 + q_2) \frac{\deltamin}{2}
		+
		\max
		\Bigl(
		\tfrac{1-r}{2}(\mu_2 + (\lambda - \mu_1)^+) + r \lambda ,
		\tfrac{r-1}{2} (\mu_1 + (\lambda-\mu_2)^+) + \lambda
		\Bigr)                                                 \\
		 & \leq
		-(rq_1 + q_2) \frac{\deltamin}{2}
		+
		|r-1| ((\mu_1 \vee \mu_2)+\lambda) + \lambda(r \vee 1) \\
		 & \leq
		-(rq_1 + q_2) \frac{\deltamin}{2}
		+
		2(r \vee 1)((\mu_1 \vee \mu_2)+\lambda)                \\
		 & =
		-(rq_1 + q_2) \frac{\deltamin}{2}
		+
		b_r
		.
	\end{align}
	We next write $1/2 = 1/4+1/4$ and use that $(rq_1+q_2)\deltamin/4 \geq b_r$ for $\bfq\notin \scrC_r$  in~\eqref{eq:def_C_set} to obtain
	\begin{align}
		(\calL_r\phi_r)(\bfq)
		 & \leq
		-(rq_1 + q_2) \frac{\deltamin}{4} + b_r \ind{\bfq \in \scrC_r}
		.
		\label{eq:drift_delta_ind}
	\end{align}

	Note that for all $q_1,q_2 \in \N_{\geq 0}$,
	\begin{align}
		rq_1 + q_2
		 & \geq
		\sqrt{(r \wedge 1)} (\sqrt{r} q_1 + q_2)             \\
		 & \stackrel{\textrm{(i)}}{\geq}
		\sqrt{(r \wedge 1)} \sqrt{r q_1^2 + q_2^2}           \\
		 & \stackrel{\textrm{(ii)}}{\geq}
		\sqrt{(r \wedge 1)}
		\sqrt{r \frac{q_1(q_1+1)}{2} + \frac{q_2(q_2+1)}{2}} \\
		 & =
		\sqrt{(r \wedge 1)} \sqrt{\phi_r(\bfq)}
		.
		\label{eq:phi_sum_geq_srtq}
	\end{align}
	Here, (i) follows since $x+y\geq \sqrt{x^2 + y^2}$ for any $x,y\geq 0$, and (ii) follows since $x^2 \geq x(x+1)/2$ for any $x\in\N_{\geq 0}$.
	The result follows from~\eqref{eq:drift_delta_ind} and~\eqref{eq:phi_sum_geq_srtq}.
	This completes the proof.
\end{proof}

\subsection{A version of Dynkin's formula}
\label{app:dynkin}

In preparation of the proof of \Cref{lem:hitting_time_finite_moments}, we present a version of Dynkin's formula.

\begin{lemma}
	\label{lem:dynkin}

	Let $X(t)$ for $t\geq 0$ be a \gls{BD} process with state space $\calS$, infinitesimal generator $\calL$, and transition rates $\kappa(\bfx,\bfy)$ for $\bfx,\bfy\in\calS$.
	Let $T^*$ be a stopping time for $X$ and let $f:\calS\to\R_{\geq 0}$.
	Suppose that there exists $c < \infty$ such that
	$
		\sum_{\bfy \neq \bfx} \kappa(\bfx,\bfy) \leq c
	$
	for all $\bfx\in\calS$, and that
	\begin{align}
		\label{eq:dynkin_integrability_cond}
		\E_{\bfx}\Bigl( \int_0^{t\wedge T^*} \sum_{\bfy\neq X(s)} \kappa(X(s),\bfy) \bigl| f(\bfy)-f(X(s)) \bigr| \rmd s \Bigr) < \infty, \qquad \bfx\in\calS, \ t \geq 0
		.
	\end{align}
	Then
	\begin{align}
		\label{eq:stopped_dynkin_bd_process}
		\E_{\bfx}\bigl(f(X(t\wedge T^*))\bigr)
		 & =
		f(\bfx)
		+
		\E_{\bfx}\Bigl( \int_0^{t\wedge T^*} (\calL f)(X(s)) \rmd s \Bigr), \qquad \bfx\in\calS, \ t \geq 0
		.
	\end{align}
\end{lemma}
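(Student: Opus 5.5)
The plan is to go through the martingale problem for the bounded-rate jump process: first establish Dynkin's formula at deterministic times for truncated versions of $f$, and then pass to the stopped, untruncated version using the integrability condition~\eqref{eq:dynkin_integrability_cond}.

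\emph{Step 1: the compensated process.} Write $X$ as a pure jump process whose jump times come from uniformization. Since $\sum_{\bfy\neq\bfx}\kappa(\bfx,\bfy)\le c$, the process makes only finitely many jumps in $[0,t]$ almost surely; the number of jumps is dominated by a Poisson$(ct)$ variable. Hence we may write pathwise
\begin{align}
	f(X(t\wedge T^*)) - f(\bfx)
	=
	\sum_{s\le t\wedge T^*} \bigl(f(X(s))-f(X(s^-))\bigr),
\end{align}
a finite sum. Let $\eta$ denote the jump counting measure and let $\kappa(X(s^-),\bfy)\,\rmd s$ be its compensator. The sum above is the integral of the predictable integrand $g(s,\bfy)=\ind{s\le T^*}(f(\bfy)-f(X(s^-)))$ against $\eta$. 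Integrating $g$ against the compensator instead gives $\int_0^{t\wedge T^*}(\calL f)(X(s))\,\rmd s$.

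\emph{Step 2: identify the martingale.} The standard fact is that for predictable $g$ with $\E\int\!\int |g|\,\kappa\,\rmd s<\infty$, the compensated integral $\int\!\int g\,(\eta-\kappa\,\rmd s)$ is a true martingale with mean zero. Condition~\eqref{eq:dynkin_integrability_cond} is exactly this integrability requirement for our $g$ on $[0,t]$. To avoid quoting the general theorem, I would prove it directly. Localize with $\tau_N:=\inf\{s: f(X(s))>N\}$. On $[0,\tau_N)$ the function $f$ is bounded along the path, apart from the value at the final jump. So I would also truncate the jump size, for instance by using $f_N=f\wedge N$. For the bounded function $f_N$, Dynkin's formula holds by the classical bounded result (e.g.\ \citep{Dynkin1965}), applied at the bounded stopping time $t\wedge T^*$ by optional stopping. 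This gives
\begin{align}
	\E_\bfx f_N(X(t\wedge T^*)) = f_N(\bfx)+\E_\bfx\int_0^{t\wedge T^*}(\calL f_N)(X(s))\,\rmd s .
\end{align}

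\emph{Step 3: remove the truncation.} Let $N\to\infty$. On the left-hand side, $f_N\uparrow f$ and $f\ge0$, so monotone convergence applies. On the right-hand side, $|(\calL f_N)(\bfz)|\le\sum_{\bfy\ne \bfz}\kappa(\bfz,\bfy)|f(\bfy)-f(\bfz)|$, because $x\mapsto x\wedge N$ is $1$-Lipschitz. Moreover $(\calL f_N)(\bfz)\to(\calL f)(\bfz)$ pointwise; this holds because only finitely many $\bfy$ carry rate mass in a BD process, and in general it follows by dominated convergence over the sum in $\bfy$. The dominating function is integrable by~\eqref{eq:dynkin_integrability_cond}, so dominated convergence gives convergence of the integral term. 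Together these yield~\eqref{eq:stopped_dynkin_bd_process}. As a byproduct, the left-hand side is finite.

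The main obstacle is Step 2, specifically justifying Dynkin's formula for the bounded truncation at a stopping time rather than at a fixed time. I would handle this through the martingale $M_t=f_N(X(t))-f_N(\bfx)-\int_0^t\calL f_N(X(s))\,\rmd s$. This is a martingale for bounded $f_N$ because of the bounded total rate: $\E[f_N(X(t+h))-f_N(X(t))\mid\calF_t]=h\,\calL f_N(X(t))+O(h^2)$ uniformly, using boundedness of $f_N$ and rates at most $c$. Since $M$ is bounded on $[0,t]$, optional stopping applies to it at the bounded time $t\wedge T^*$.
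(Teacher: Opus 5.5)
Your proposal is correct and follows the paper's argument, once you set aside the compensator and localization discussion in Steps 1--2, which you never actually use. The paper also truncates to $f\wedge N$, applies the bounded-function Dynkin formula at the bounded stopping time $t\wedge T^*$, and then takes the limit: monotone convergence on the left, and dominated convergence on the right via the $1$-Lipschitz bound for $x\mapsto x\wedge N$ and condition~\eqref{eq:dynkin_integrability_cond}. The only difference is that you sketch the bounded case yourself, using the martingale $f_N(X(t))-f_N(\bfx)-\int_0^t(\calL f_N)(X(s))\,\rmd s$ and optional stopping, whereas the paper simply cites Kushner's version of Dynkin's formula.
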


\begin{proof}
	Let $f^n(\bfx) := f(\bfx) \wedge n$ for $n\in\N$.
	Since $f^n$ is bounded and the transition rates are uniformly bounded,  $\calL f^n$ is bounded.
	Moreover, $X$ is a strong Markov process since it is a \gls{BD} process.
	We may thus apply Dynkin's formula~\citep[(2.9)]{Kushner1967} on the bounded stopping time $t\wedge T^*$:
	\begin{equation}
		\E_{\bfx}\bigl(f^n(X(t\wedge T^*))\bigr)
		=
		f^n(\bfx)
		+
		\E_{\bfx}\Bigl( \int_0^{t\wedge T^*} (\calL f^n)(X(s)) \rmd s \Bigr), \qquad \bfx\in\calS, \ t \geq 0, \ n\in\N
		.
		\label{eq:dynkin_truncated}
	\end{equation}
	We are now going to study the limit $n\to\infty$ of each term.

	For the left term, note that
	$
		f^n(X(t\wedge T^*)) \uparrow f(X(t\wedge T^*))
	$
	almost surely because $f^n(\bfx) \uparrow f(\bfx)$ for every $\bfx$.
	Because $f^n$ is non-negative, the monotone convergence theorem gives
	$
		\lim_{n\to\infty} \E_{\bfx}\bigl(f^n(X(t\wedge T^*))\bigr)
		= \E_{\bfx}\bigl(f(X(t\wedge T^*))\bigr)
		.
	$

	The middle term simply converges to $f(\bfx)$ as $n \to \infty$.

	To study the limit of the right-most term, we are going to apply the dominated convergence theorem.
	For this, we will show that $\calL f^n$ converges pointwise and that $|\calL f^n|$ is upper bounded by an integrable function.
	To conclude the pointwise convergence, note that we may interchange the limits in the following equation because the rate sum is finite by assumption and boundedness of $f^n$:
	\begin{align}
		\lim_{n\to\infty} (\calL f^n)(\bfx)
		 & =
		\lim_{n\to\infty} \Bigl(\sum_{\bfy\neq \bfx} \kappa(\bfx,\bfy) (f^n(\bfy) - f^n(\bfx))\Bigr)
		=
		\sum_{\bfy\neq \bfx} \Bigl(\lim_{n\to\infty} \kappa(\bfx,\bfy) (f^n(\bfy) - f^n(\bfx))\Bigr)
		=
		(\calL f)(\bfx)
		.
	\end{align}
	To conclude the integrability, note that $|(a\wedge n) - (b\wedge n)| \leq |a-b|$ for any $a$, $b\geq 0$ and all $n\in\N$.
	Using this and the triangle inequality gives
	\begin{align}
		\ind{\{s\leq t\wedge T^*\}} |(\calL f^n)(X(s))|
		\leq
		\ind{\{s\leq t\wedge T^*\}} \sum_{\bfy\neq X(s)} \kappa(X(s),\bfy) |f(\bfy)-f(X(s))|
		,
	\end{align}
	the right-hand side of which is integrable by assumption~\eqref{eq:dynkin_integrability_cond}.
	Thus, the dominated convergence theorem gives
	\begin{align}
		\lim_{n\to\infty}
		\E_{\bfx}\Bigl( \int_0^{t\wedge T^*} (\calL f^n)(X(s)) \rmd s \Bigr)
		=
		\E_{\bfx}\Bigl( \int_0^{t\wedge T^*} (\calL f)(X(s))\rmd s \Bigr)
		.
	\end{align}
	This completed the proof.
\end{proof}

\subsection{Proof of~\Cref{lem:hitting_time_finite_moments}}
\label{app:proof_lem_hitting_time_finite_moments}

\begin{proof}
	Let $r\in\R_{>0}$ and $\bfq\in\N_{\geq 0}^2$.
	If $\bfq\in\scrC_r$, then $\E_{\bfq}(T_{\scrC_r}) = \E_{\bfq}(T_{\scrC_r}^2) = 0$ so the claim holds.
	We next discuss the case $\bfq\notin\scrC_r$.

		{\bf First moment.}
	Let $t\geq 0$.
	Note that $\QSED{r}(\cdot)$ is a \gls{BD} process with uniformly bounded jump rates and that $T_{\scrC_r}$ is a stopping time for $\QSED{r}$.
	Moreover, by the definition of $\phi_r$ in~\eqref{eq:def_lyap_func}, we have
	$
		|\sqrt{\phi_r(\bfq')}-\sqrt{\phi_r(\bfq)}| \leq (1 \vee \sqrt{r})
	$
	for all neighbors $\bfq'\in\N_{\geq0}^2$.
	Hence, the conditions of \Cref{lem:dynkin} are satisfied, and we thus have
	\begin{align}
		\label{eq:dynkin}
		\E_\bfq\Bigl( \sqrt{\phi_r(\QSED{r}(t\wedge T_{\scrC_r}))} \Bigr)
		 & =
		\sqrt{\phi_r(\bfq)} +
		\E_\bfq\Bigl( \int_0^{t\wedge T_{\scrC_r}} (\calL_r \sqrt{\phi_r})(\QSED{r}(s)) \rmd s\Bigr)
		.
	\end{align}
	We next analyze the generator of $\sqrt{\phi_r}$.
	Note that
	\begin{align}
		\label{eq:sqrt_xy_property}
		\sqrt{y} - \sqrt{x} \leq \frac{y-x}{2\sqrt{x}}
		\quad
		\textnormal{for}
		\quad
		x,y > 0,
	\end{align}
	because this is equivalent to ${(\sqrt{x}-\sqrt{y})}^2 \geq 0$ for $x, y > 0$.
	Applying~\eqref{eq:sqrt_xy_property} to $\sqrt{\phi_r}$ (which is allowed because $\phi_r > 0$; see~\eqref{eq:def_lyap_func}), and then using \Cref{lem:lyap_subgeom_drift} gives
	\begin{align}
		(\calL_r \sqrt{\phi_r})(\bfq)
		 & \leq
		\frac{1}{2\sqrt{\phi_r(\bfq)}} (\calL_r \phi_r)(\bfq)
		\leq
		\frac{1}{2\sqrt{\phi_r(\bfq)}}
		(-a_r \sqrt{\phi_r(\bfq)})
		=
		-\frac{a_r}{2}
		\label{eq:sqrt_drift_a}
	\end{align}
	for $\bfq\notin\scrC_r$.
	Applying~\eqref{eq:sqrt_drift_a} in~\eqref{eq:dynkin} gives
	\begin{align}
		\label{eq:bnd_sqrt_drift_split}
		\E_\bfq\Bigl( \sqrt{\phi_r(\QSED{r}(t\wedge T_{\scrC_r}))} \Bigr)
		\leq
		\sqrt{\phi_r(\bfq)}
		-
		\frac {a_r}{2}
		\E_\bfq(t\wedge T_{\scrC_r})
		.
	\end{align}
	Since $\phi_r\geq 0$, the left side in~\eqref{eq:bnd_sqrt_drift_split} is non-negative.
	Letting $t\to\infty$ and invoking the monotone convergence theorem gives
	\begin{align}
		\label{eq:bnd_E_T}
		\E_{\bfq}(T_{\scrC_r}) \leq 2a_r^{-1} \sqrt{\phi_r(\bfq)}
		.
	\end{align}
	This concludes the bound on the first moment of the hitting time.

		{\bf Second moment.}\\
	Let us first rewrite $\E_\bfq(T_{\scrC_r}^2)$.
	By first using the fundamental theorem of calculus, and then Tonelli's theorem~\citep[Thm.~2.37a]{Folland1999} to move the expectation inside (which is allowed because the integrand is non-negative), we find that
	\begin{align}
		\E_\bfq(T_{\scrC_r}^2)
		 &
		=
		2 \E_\bfq\Bigl(\int_0^{T_{\scrC_r}} \bigl(T_{\scrC_r} - s\bigr) \rmd s\Bigr)
		=
		2 \int_0^\infty
		\E_\bfq
		\Bigl( \bigl(T_{\scrC_r} - s\bigr) \ind{s<T_{\scrC_r}} \Bigr)
		\rmd s
		.
		\label{eq:T_squared_integral}
	\end{align}
	Let now $\calF_s = \sigma(\QSED{r}(u): 0 \leq u \leq s)$ denote the natural filtration of $\QSED{r}(s)$ for $s\geq 0$.
	By first applying the law of total expectation with respect to this filtration and leveraging that $\QSED{r}(t)$ satisfies the strong Markov property, and next invoking Fubini's theorem~\citep[Thm~2.37b]{Folland1999}, we obtain that
	\begin{equation}
		\begin{aligned}
			\E_\bfq(T_{\scrC_r}^2)
			=
			2
			\int_0^{\infty}
			\E_\bfq\bigl(\ind{s<T_{\scrC_r}}\E_{\QSED{r}(s)}(T_{\scrC_r}) \bigr)  \rmd s
			 &
			=
			2
			\E_\bfq\Bigl(\int_0^{T_{\scrC_r}} \E_{\QSED{r}(s)}(T_{\scrC_r}) \rmd s \Bigr)
			.
		\end{aligned}
		\label{eq:Fubini2}
	\end{equation}

	Let us next bound $\E_\bfq(T_{\scrC_r}^2)$.
	Together with~\eqref{eq:bnd_E_T}, \eqref{eq:Fubini2} implies that
	\begin{equation}
		\E_\bfq(T_{\scrC_r}^2)
		\leq
		4 a_r^{-1} \E_{\bfq}\Bigl(\int_0^{T_{\scrC_r}} \sqrt{\phi_r(\QSED{r}(s))} \rmd s \Bigr)
		;
		\label{eq:T_squared_split2}
	\end{equation}
	just substitute $\bfq$ for $\QSED{r}(s)$ in~\eqref{eq:bnd_E_T}.
	Let $t\geq 0$.
	Similar to the first moment, we apply \Cref{lem:dynkin}.
	Now, however, the one-step increments of $\phi_r$ are not uniformly bounded.
	Nonetheless: because
	$|\phi_r(\bfq') - \phi_r(\bfq)| \leq (1\vee r)(1+q_1+q_2)$
	and $\QSED{r}$ has unit jump sizes and uniformly bounded jump rates, the map $s\mapsto \E_{\bfq}(Q_1(s) + Q_2(s))$ is bounded for $s\in[0,t]$.
	Hence, condition~\eqref{eq:dynkin_integrability_cond} is satisfied for $\phi_r$.
	We may therefore apply \Cref{lem:dynkin}, giving
	\begin{equation}
		\begin{aligned}
			0
			\leq
			\E_\bfq\bigl(\phi_r(\QSED{r}(t\wedge T_{\scrC_r}))\Bigr)
			 &
			=
			\phi_r(\bfq) +
			\E_\bfq\Bigl( \int_0^{t\wedge T_{\scrC_r}} (\calL_r \phi_r)(\QSED{r}(s)) \rmd s\Bigr)
			\\
			 &
			\leq
			\phi_r(\bfq)
			-
			a_r
			\E_\bfq\Bigl( \int_0^{t\wedge T_{\scrC_r}} \sqrt{\phi_r(\QSED{r}(s))} \rmd s\Bigr)
		\end{aligned}
		\label{eq:int_hit_t_infty}
	\end{equation}
	for $\bfq\notin\scrC_r$.
	Thus, letting $t\to\infty$, one finds the upper bound
	\begin{align}
		\label{eq:dynkin_phi}
		\E_\bfq\Bigl(\int_0^{T_{\scrC_r}} \sqrt{\phi_r(\QSED{r}(s))} \rmd s\Bigr)
		 & \leq
		a_r^{-1} \phi_r(\bfq)
	\end{align}
	for $\bfq\notin\scrC_r$.
	This is exactly what we still had to bound in \eqref{eq:T_squared_split2}, and completes the proof.
\end{proof}

\subsection{Proof of \Cref{lem:martingale}}
\label{app:proof_lem_martingale}

\begin{proof}
	Let $r\in\R_{>0}$, $m\in\N_{\geq 1}$, and $n\in\N_{\geq 0}$.
	In this section we drop the subscript $r$ and write $T = T_{\scrC\cup\scrM}$.

	Using linearity and that $W^T\ind{n\geq T}$ is $\calF^n$-measurable since $T$ is a stopping time, we obtain
	\begin{align}
		\label{eq:W_martingale_split}
		\E(W^{(n+1) \wedge T} \mid \calF^n)
		 & =
		W^T \ind{n\geq T} + \E(W^{n+1} \ind{n < T}  \mid \calF^n)
		.
	\end{align}
	To prove the lemma, it suffices to show that the last term is bounded by $W^n \ind{n < T}$, which we do next.
	To this end, define the stepwise difference
	\begin{align}
		\label{eq:def_step_diff_W}
		\Delta^n := \sqrt{\phi(\QSEDdiscrete{}{n+1})} - \sqrt{\phi(\QSEDdiscrete{}{n})}
		.
	\end{align}
	Then, by definition of $W^n$, since it is $\calF^n$-measurable, we find
	\begin{align}
		\label{eq:W_exp}
		\E(W^{n+1} \ind{n < T} \mid \calF^n)
		 & =
		W^n \E(\exp(\theta \Delta^n) \ind{n < T} \mid \calF^n)
		.
	\end{align}

	To bound the exponential term, we use the following.
	By Taylor's Theorem with Lagrange remainder (see, e.g., \citep[Thm.~5.15]{Rudin1976}) we have that for any $y\in\R$ there exists  $\xi \in ((0 \wedge y),(0\vee y))$ such that
	\begin{align}
		\label{eq:exp_bnd}
		\exp(\theta y)
		 & \leq 1+ \theta y + \frac 12 \theta^2 \exp(\theta \xi) y^2
		\leq 1+ \theta y + \frac 12 \theta^2 \exp(\theta |y|) y^2
		.
	\end{align}
	Hence,  using also linearity and that $\ind{n<T}$ is $\calF^n$-measurable,
	\begin{equation}
		\begin{aligned}
			\E(\exp(\theta \Delta^n) \ind{n < T} \mid \calF^n)
			\leq
			 &
			\ind{n < T}
			+
			\theta \E(\Delta^n \ind{n < T} \mid \calF^n)
			+
			\frac 12 \theta^2
			\E
			\bigl(
			\exp(\theta |\Delta^n|) (\Delta^n)^2 \ind{n < T} \mid \calF^n
			\bigr)
			.
		\end{aligned}
		\label{eq:W_exp_taylor}
	\end{equation}

	We next analyze the last two terms in~\eqref{eq:W_exp_taylor}.
	Using the elementary square root inequality in~\eqref{eq:sqrt_xy_property} and that $\QSEDdiscrete{}{n}$ is $\calF^n$-measurable gives
	\begin{align}
		\E(\Delta^n \ind{n < T} \mid \calF^n)
		 & \leq
		\frac{\E\bigl((\phi(\QSEDdiscrete{}{n+1}) - \phi_r(\QSEDdiscrete{}{n}))\ind{n < T} \mid \calF^n\bigr)}{2\sqrt{\phi_r(\QSEDdiscrete{}{n})}}
		.
		\label{eq:E_Delta_n_smaller_T}
	\end{align}
	The one-step drift of the discretized process is governed by the drift of the continuous time process $\{\QSED{}(t)\}$ scaled by the total jump rate, see~\eqref{eq:Q_discrete_drift_op}.
	On $n < T$, the process did not yet hit $\scrC_r$ and therefore the drift is strictly negative by \Cref{lem:lyap_subgeom_drift}.
	Moreover, the total jump rate in any state is bounded by $\lambda+\mu_1+\mu_2$.
	This gives
	\begin{equation}
		\begin{aligned}
			\E\bigl((\phi(\QSEDdiscrete{}{n+1}) - \phi_r(\QSEDdiscrete{}{n}))\ind{n < T} \mid \calF^n\bigr)
			 &
			=
			\frac{(\calL\phi)(\QSEDdiscrete{}{n})}{\nu(\QSEDdiscrete{}{n})} \ind{n < T}
			\\
			 &
			\leq
			\frac{- a_r \sqrt{\phi(\QSEDdiscrete{}{n})}}{\nu(\QSEDdiscrete{}{n})} \ind{n < T}
			\leq
			\frac{- a_r \sqrt{\phi(\QSEDdiscrete{}{n})}}{\lambda+\mu_1+\mu_2} \ind{n < T}
			.
		\end{aligned}
		\label{eq:Q_n_diff_L}
	\end{equation}
	Here, we used that $a_r >0$ since~\eqref{eq:stab_cond} holds by assumption.
	Combining~\eqref{eq:E_Delta_n_smaller_T} and \eqref{eq:Q_n_diff_L}, the square root cancels and we find
	\begin{align}
		\E(\Delta^n \ind{n < T} \mid \calF^n)
		 & \leq
		\frac{- a_r}{2(\lambda+\mu_1+\mu_2)} \ind{n < T}
		.
		\label{eq:V_negative_drift}
	\end{align}

	We next analyze the last term in~\eqref{eq:W_exp_taylor}.
	The key point is that the increments of $\sqrt{\phi}$ are uniformly bounded.
	In particular, we show that $|\Delta^n| \leq \sqrt{2(r \vee 1)}$ by considering all possible jumps of the \gls{BD} process $\QSEDdiscrete{}{n}$.
	If no jump occurs, then $\Delta^n = 0$ clearly.
	For a birth $(q_1,q_2)\mapsto(q_1+1,q_2)$, the contribution of the second coordinate can only decrease the difference of the square roots.
	Concretely, $\sqrt{a+c}-\sqrt{b+c} \leq \sqrt a - \sqrt b$ for $a \geq b \geq 0,\, c\geq 0$, hence
	\begin{align}
		|\Delta^n|
		 & \leq
		\sqrt{\frac{r}{2}} \Bigl(\sqrt{(q_1+1)(q_1+2)} - \sqrt{q_1(q_1+1)}\Bigr)
		\leq
		\sqrt{\frac{r}{2}} ((q_1+2) - q_1)
		=
		\sqrt{2 r}
		.
		\label{eq:birth_delta_bnd}
	\end{align}
	Thus a birth in the first coordinate satisfies $|\Delta^n|\leq \sqrt{2r}$.
	The same argument applies to a death in the first coordinate.
	Similarly, it can be verified that jumps in the second coordinate satisfy $|\Delta^n|\leq \sqrt{2}$.
	This proves $|\Delta^n| \leq \sqrt{2(r \vee 1)}$.

	We use this to bound the last term in~\eqref{eq:W_exp_taylor},
	\begin{align}
		\exp(\theta |\Delta^n|) (\Delta^n)^2
		 & \leq
		2(r \vee 1) \exp\bigl(\theta \sqrt{2(r \vee 1)}\bigr)
		\stackrel{\eqref{eq:exp_martingale_theta}}{\leq}  4(r \vee 1)
		.
		\label{eq:diff_V_leq_L}
	\end{align}
	By~\eqref{eq:W_exp_taylor},~\eqref{eq:V_negative_drift}, and~\eqref{eq:diff_V_leq_L} we have
	\begin{align}
		\E(W^{n+1} \ind{n < T}  \mid \calF^n)
		 & \leq
		W^{n}
		\Bigl(
		1 - \theta \frac{a_r}{2(\lambda+\mu_1+\mu_2)}
		+
		2\theta^2 (r \vee 1)
		\Bigr)
		\ind{n < T}
		\stackrel{\eqref{eq:exp_martingale_theta}}{\leq}
		W^{n} \ind{n < T}
		.
		\label{eq:W_before_tau}
	\end{align}
	Hence, the proof is completed by~\eqref{eq:W_martingale_split} and~\eqref{eq:W_before_tau}.
\end{proof}

\subsection{Proof of \Cref{lem:hitting_probability}}
\label{app:proof_lem_hitting_probability}

\begin{proof}
	Let $r\in\R_{>0}$, $m\in\N_{\geq 1}$, and $\bfy\in\N_{\geq 0}^2\setminus (\scrC_r\cup\scrM_m)$.
	In this section we drop the subscripts $r$ and $m$, and write $T = T_{\scrC\cup\scrM}$.

	We consider the discrete-time jump chain $\QSEDdiscrete{}{n}$ since
	\begin{align}
		\label{eq:hitting_P_cont_discr}
		\P_{\bfy}(\QSED{}(T) \in \scrM)
		 & =
		\P_\bfy(\QSEDdiscrete{}{T} \in \scrM)
		.
	\end{align}
	Note $\P_\bfy(\QSEDdiscrete{}{T} \in \scrC \cup\scrM) = 1$ by definition of $T$, hence by the law of total probability,
	\begin{align}
		\label{eq:W_T_split}
		\E_\bfy(W^T)
		 & =
		\underbrace{\E_\bfy(W^T \mid \QSEDdiscrete{}{T} \in \scrC)\P_\bfy(\QSEDdiscrete{}{T} \in \scrC)}_{\geq 0} +
		\E_\bfy(W^T \mid \QSEDdiscrete{}{T} \in \scrM)\P_\bfy(\QSEDdiscrete{}{T} \in \scrM)
		.
	\end{align}
	We next bound $\E_\bfy(W^T \mid \QSEDdiscrete{}{T} \in \scrM) $ and $\E_\bfy(W^T)$.

	$\QSEDdiscrete{}{T} \in \scrM$ implies $\max(Q_1^{T},Q_2^{T}) \geq m+1$ by definition of $\scrM$, and therefore $\phi(\QSEDdiscrete{}{T}) \geq (r \wedge 1)m^2/2$ by~\eqref{eq:def_lyap_func}.
	Hence, by~\eqref{eq:def_W_martingale},
	\begin{align}
		\label{eq:W_T_bnd}
		\E_\bfy(W^T \mid \QSEDdiscrete{}{T} \in \scrM)
		 & \geq
		\exp\bigl(\theta m \sqrt{(r \wedge 1)/2}\bigr)
		.
	\end{align}

	To bound $\E_\bfy(W^T)$, we will apply the following version of Doob's \gls{OST} \citep[Thm.~2.28]{vanderHofstad2016}:
	\emph{%
		Let $(V^n)_{n\geq 0}$ be a supermartingale and $S$ a stopping time, both with respect to $(\calF^n)_{n\geq 0}$.
		If $V^{n\wedge S}$ is bounded, then $\E(V^S) \leq \E(V^0)$.
	}%

	We first show that $W^{n\wedge T}$ satisfies the conditions of the \gls{OST}.
	By \Cref{lem:martingale}, $W^{n\wedge T}$ is a supermartingale.
	If $\QSEDdiscrete{}{0} \notin \scrM$, then
	$
		Q_1^{n\wedge T}, Q_2^{n\wedge T} \leq m+1
		,
	$
	since in that case $\QSEDdiscrete{}{n}$ can only enter $\scrM$ via a boundary state $\bfx$ with $\max(x_1,x_2) = m+1$.
	In that case, $\phi(\QSEDdiscrete{}{n\wedge T})\leq (r \vee 1) (m+1)(m+2)$ by definition of $\phi$ in~\eqref{eq:def_lyap_func}, and so $W^{n\wedge T} \leq \exp(\theta \sqrt{(r \vee 1) (m+1)(m+2)})$ is bounded.

	We may now apply the \gls{OST} since~$\bfy\notin\scrM$,
	\begin{align}
		\E_\bfy(W^T)
		 & \stackrel{\eqref{eq:def_W_martingale}}{=}
		\E_\bfy\Bigl(\exp(\theta \sqrt{\phi(\QSEDdiscrete{}{T})})\Bigr)
		\stackrel{\textrm{OST}}{\leq}
		\E_\bfy(\exp(\theta \sqrt{\phi(\QSEDdiscrete{}{0})}))
		=
		\exp(\theta \sqrt{\phi(\bfy)})
		.
		\label{eq:OST_W}
	\end{align}

	By~\eqref{eq:W_T_split}, \eqref{eq:W_T_bnd}, and~\eqref{eq:OST_W},
	\begin{align}
		\label{eq:P_Q_hit_M_exp}
		\P_\bfy(\QSEDdiscrete{}{T} \in \scrM)
		 & \leq
		\exp\bigl(-\theta \bigl(m \sqrt{(r \wedge 1)/2} - \sqrt{\phi(\bfy)}\bigr)\bigr)
		.
	\end{align}
	The proof is concluded from~\eqref{eq:hitting_P_cont_discr} and~\eqref{eq:P_Q_hit_M_exp}.
\end{proof}

\subsection{Proof of \Cref{lem:overflow_bp}}
\label{app:proof_lem_overflow_bp}

\begin{proof}
	In this section we drop the subscripts $r$ and $m$, as well as the superscript $\textrm{SED}$.
	Recall the definitions of $\scrM$ in~\eqref{eq:def_set_M} and $T_{\bf{0}} := T_{\{\mathbf{0}\}}$.
	We have
	\begin{align}
		\P_{\mathbf{0}}
		\bigl(
		\max_{0 \leq t\leq T_{\mathbf{0}}}
		\bigl(
		Q_1(t) \vee Q_2(t)
		\bigr)
		\leq m
		\bigr)
		 & =
		\P_{\mathbf{0}}(\bfQ(T_{\mathbf{0}\cup\scrM}) = \mathbf{0})
		\geq
		\inf_{\bfx \in \scrC} \P_{\bfx}(\bfQ(T_{\mathbf{0}\cup\scrM}) = \mathbf{0})
		,
		\label{eq:large_ql_hit_0M}
	\end{align}
	where the last inequality follows since $\mathbf{0}\in\scrC$; recall~\eqref{eq:def_C_set}.
	We next lower bound the right hand side.

	Let $\bfx \in\scrC$.
	By the law of total probability,
	\begin{align}
		\P_{\bfx}(\bfQ(T_{\mathbf{0}\cup\scrM}) = \mathbf{0})
		 & =
		\underbrace{\P_{\bfx}(\bfQ(T_{\mathbf{0}\cup\scrM}) = \mathbf{0} \mid \bfQ(T_{\mathbf{0}\cup\scrB}) = \mathbf{0})}_{:=\textrm{(I)}}
		\P_{\bfx}(\bfQ(T_{\mathbf{0}\cup\scrB}) = \mathbf{0}) \notag \\
		 & \ +
		\underbrace{\P_{\bfx}(\bfQ(T_{\mathbf{0}\cup\scrM}) = \mathbf{0} \mid \bfQ(T_{\mathbf{0}\cup\scrB}) \in\scrB)}_{:=\textrm{(II)}}
		\P_{\bfx}(\bfQ(T_{\mathbf{0}\cup\scrB}) \in\scrB)
		.
		\label{eq:P_A_hit_0}
	\end{align}
	Note that $\textrm{(I)} = 1$ since the process starting from $\scrC$ cannot reach $\scrM$ without entering $\scrB$; see \Cref{fig:plane_sets}.

	We next bound $\textrm{(II)}$.
	By the law of total probability,
	\begin{align}
		\label{eq:hit_identity_inf_B}
		\textrm{(II)}
		 & \geq
		\inf_{\bfb\in\scrB} \P_{\bfx}(\bfQ(T_{\mathbf{0}\cup\scrM}) = \mathbf{0} \mid \bfQ(T_{\mathbf{0}\cup\scrB}) = \bfb)
		.
	\end{align}
	Once the process hits $\scrB$, the future evolution depends only on the state at the hitting time, and the past may be discarded.
	More precisely, $\bfQ(t)$ is a strong Markov process since it is a \gls{BD} process.
	Hence, by the strong Markov property~\citep[Def.~17.12]{Klenke2020} applied at the stopping time $T_{\mathbf{0}\cup\scrB}$
	\begin{align}
		\P_{\bfx}(\bfQ(T_{\mathbf{0}\cup\scrB} + t)\in \scrA \mid \calF_{T_{\mathbf{0}\cup\scrB}})
		 & =
		\P_{\bfQ(T_{\mathbf{0}\cup\scrB})}(\bfQ(t) \in \scrA),
		\qquad \scrA \subseteq \N_{\geq 0}^2, \ t \geq 0
		,
	\end{align}
	where $\calF_s = \sigma(\bfQ(u): 0 \leq u \leq s)$ denotes the natural filtration of $\QSED{r}(s)$ for $s\geq 0$.
	Letting $\tilde{\bfb} \in \scrB$, we thus obtain
	\begin{align}
		\label{eq:hit_identity_x_b}
		\P_\bfx(\bfQ(T_{\mathbf{0}\cup\scrM}) = \mathbf{0} \mid \bfQ(T_{\mathbf{0}\cup\scrB}) = \tilde{\bfb})
		 & =
		\P_{\tilde{\bfb}}(\bfQ(T_{\mathbf{0}\cup\scrM}) = \mathbf{0})
		.
	\end{align}
	By the law of total probability,
	\begin{align}
		\textrm{RHS \eqref{eq:hit_identity_x_b}}
		 & =
		\underbrace{\P_{\tilde{\bfb}}(\bfQ(T_{\mathbf{0}\cup\scrM}) = \mathbf{0} \mid \bfQ(T_{\scrC\cup\scrM}) \in \scrC)}_{:= \textrm{(III)}}
		\P_{\tilde{\bfb}}(\bfQ(T_{\scrC\cup\scrM}) \in \scrC) \\
		 & \ +
		\underbrace{\P_{\tilde{\bfb}}(\bfQ(T_{\mathbf{0}\cup\scrM}) = \mathbf{0} \mid \bfQ(T_{\scrC\cup\scrM}) \in \scrM)}_{:= \textrm{(IV)}}
		\P_{\tilde{\bfb}}(\bfQ(T_{\scrC\cup\scrM}) \in \scrM)
		.
		\label{eq:P_B_hit_0}
	\end{align}
	Note that $\textrm{(IV)} = 0$. Moreover, by the law of total probability,
	\begin{align}
		\label{eq:hit_identity_inf_C}
		\textrm{(III)}
		 & \geq
		\inf_{\bfy\in\scrC} \P_{\tilde{\bfb}}(\bfQ(T_{\mathbf{0}\cup\scrM}) = \mathbf{0} \mid \bfQ(T_{\scrC\cup\scrM}) = \bfy)
		.
	\end{align}
	Similar to~\eqref{eq:hit_identity_x_b}, we find by the strong Markov property for $\tilde{\bfy}\in\scrC$,
	\begin{align}
		\label{eq:hit_identity_b_y}
		\P_{\tilde{\bfb}}(\bfQ(T_{\mathbf{0}\cup\scrM}) = \mathbf{0} \mid \bfQ(T_{\scrC\cup\scrM}) = \tilde{\bfy})
		 & =
		\P_{\tilde{\bfy}}(\bfQ(T_{\mathbf{0}\cup\scrM}) = \mathbf{0})
		.
	\end{align}
	Thus, \eqref{eq:hit_identity_inf_C} and~\eqref{eq:hit_identity_b_y} give
	$
		\label{eq:hit_identity_inf_y}
		\textrm{(III)}
		\geq
		\inf_{\bfy\in\scrC} \P_\bfy(\bfQ(T_{\mathbf{0}\cup\scrM}) = \mathbf{0})
		,
	$
	which means by \eqref{eq:P_B_hit_0},
	\begin{align}
		\textrm{RHS \eqref{eq:hit_identity_x_b}}
		 & \geq
		\inf_{\bfy\in\scrC} \bigl\{\P_\bfy(\bfQ(T_{\mathbf{0}\cup\scrM}) = \mathbf{0}) \bigr\}
		\P_{\tilde{\bfb}}(\bfQ(T_{\scrC\cup\scrM}) \in \scrC)
		.
		\label{eq:hit_identity_b_hit_0}
	\end{align}

	Combining~\eqref{eq:hit_identity_inf_B}, \eqref{eq:hit_identity_x_b}, and~\eqref{eq:hit_identity_b_hit_0} yields
	\begin{align}
		\textrm{(II)}
		 & \geq
		\inf_{\bfy\in\scrC}
		\bigl\{ \P_\bfy(\bfQ(T_{\mathbf{0}\cup\scrM}) = \mathbf{0}) \bigr\}
		\inf_{\bfb\in\scrB}
		\bigl\{ \P_\bfb(\bfQ(T_{\scrC\cup\scrM})\in\scrC) \bigr\}
		.
		\label{eq:hit_identity_x_OM}
	\end{align}

	Combining~\eqref{eq:P_A_hit_0} and~\eqref{eq:hit_identity_x_OM}, and using that
	$
		\P_{\bfx}(\bfQ(T_{\mathbf{0}\cup\scrB})\in\scrB)
		=
		1-\P_{\bfx}(\bfQ(T_{\mathbf{0}\cup\scrB})=\mathbf{0})
	$
	gives
	\begin{align}
		\P_{\bfx}(\bfQ(T_{\mathbf{0}\cup\scrM}) = \mathbf{0})
		 & \geq
		\P_{\bfx}(\bfQ(T_{\mathbf{0}\cup\scrB}) = \mathbf{0})
		+
		\inf_{\bfy\in\scrC}
		\bigl\{ \P_\bfy(\bfQ(T_{\mathbf{0}\cup\scrM}) = \mathbf{0}) \bigr\} \notag
		\\
		 & \
		\cdot
		\inf_{\bfb\in\scrB}
		\bigl\{ \P_\bfb(\bfQ(T_{\scrC\cup\scrM})\in\scrC) \bigr\}
		(1-\P_{\bfx}(\bfQ(T_{\mathbf{0}\cup\scrB})=\mathbf{0}))
		.
		\label{eq:hitting_fixed_point}
	\end{align}
	The right side is increasing in $\P_{\bfx}(\bfQ(T_{\mathbf{0}\cup\scrB})=\mathbf{0})$, since both infima are in $(0,1)$.
	Taking the infimum over $\bfx\in\scrC$ on both sides in~\eqref{eq:hitting_fixed_point} and rearranging gives

	\begin{align}
		\inf_{\bfx\in\scrC}
		\P_{\bfx}(\bfQ(T_{\mathbf{0}\cup\scrM}) = \mathbf{0})
		 & \geq
		\frac{\inf_{\bfx\in\scrC} \P_{\bfx}(\bfQ(T_{\mathbf{0}\cup\scrB})=\mathbf{0})}{1 - \inf_{\bfb\in\scrB}
			\bigl\{ \P_\bfb(\bfQ(T_{\scrC\cup\scrM})\in\scrC) \bigr\} ( 1  - \inf_{\bfx\in\scrC} \P_{\bfx}(\bfQ(T_{\mathbf{0}\cup\scrB})=\mathbf{0}))}
		.
		\label{eq:hitting_fixed_point_sol2}
	\end{align}
	\eqref{eq:hitting_fixed_point_sol2} is of the form $a / (1-b(1-a))$ which is increasing in $a$ and $b$ for $a,b\in (0,1)$.
	We thus require lower bounds for the two infima.
	Since $\scrB$ does not intersect with $\scrC \cup \scrM$ and $\N_{\geq 0}^2 \setminus\{\scrC\cup\scrM\}$ is finite, we obtain from \Cref{lem:hitting_probability} and~\eqref{eq:bnd_phi_on_B},
	\begin{align}
		\inf_{\bfb\in\scrB}
		\P_\bfb(\bfQ(T_{\scrC\cup\scrM}) \in \scrC)
		 & \geq
		1-\exp\bigl(-\theta \bigl(m \sqrt{(r \wedge 1)/2} - \sqrt{\xi}\bigr)\bigr)
		=:
		1 - \ell
		.
		\label{eq:hit_AM_from_B_bnd}
	\end{align}
	Moreover, as discussed in \Cref{sec:tail_bnd_QL},
	$
		\inf_{\bfx\in\scrC} \P_\bfx(\bfQ(T_{\mathbf{0}\cup\scrB}) = \mathbf{0}) \geq p
		,
	$
	with $p$ in~\eqref{eq:bnd_empty_before_arr}.
	Hence,
	\begin{align}
		\label{eq:hit_0M_fromA_bnd}
		\inf_{\bfx\in\scrC}
		\P_{\bfx}(\bfQ(T_{\mathbf{0}\cup\scrM}) = \mathbf{0})
		 & \geq
		\frac{p}{1-(1-\ell)(1-p)}
		=
		\frac{1}{1+\frac{1-p}{p}\ell}
		\geq
		1-\frac{1-p}{p}\ell
		.
	\end{align}
	Here, we used $1/(1+x) \geq 1- x$ for all $ x \geq 0$.

	The proof is completed by~\eqref{eq:large_ql_hit_0M} and~\eqref{eq:hit_0M_fromA_bnd}.
\end{proof}

\subsection{Proof of \Cref{lem:bnd_hit_0B}}
\label{app:proof_lem_bnd_hit_0B}

\begin{proof}
	Let $r\in\R_{>0}$, $\bfy\in\scrC_r\setminus \{\mathbf{0}\}$, and suppose that $\QSED{r}(0) = \bfy$.
	If, in $[0,\ell_r]$, there are no arrivals \emph{and} all $y_1+y_2$ customers depart, then $T_{\mathbf{0}\cup\scrB_r}\leq \ell_r$.
	Thus,
	\begin{align}
		\label{eq:P_B_leq_t0}
		\P_\bfy(T_{\mathbf{0}\cup\scrB_r} \leq \ell_r)
		 & \geq
		\P_\bfy(M(\ell_r) = 0, \ D(\ell_r) \geq y_1+y_2)
		=
		\P_\bfy(D(\ell_r) \geq y_1+y_2 \mid M(\ell_r) = 0) e^{-\lambda \ell_r}
		,
	\end{align}
	where $M(t)\sim\poi{\lambda t}$ denotes the number of arrivals and $D(t)$ the number of departures in $[0,t]$.
	As long as there are customers in the system, there is at least one active server.
	Hence, the service rate is at least $(\mu_1 \wedge \mu_2)$ until the system empties.
	Therefore,
	\begin{align}
		\P_\bfy(D(\ell_r) \geq y_1+y_2 \mid M(\ell_r) = 0)
		 & \geq
		\P(Y \geq y_1+y_2),
		\qquad
		Y\sim\poi{(\mu_1 \wedge \mu_2)\ell_r}
		.
	\end{align}
	By definition of $\scrC_r$ in~\eqref{eq:def_C_set} and by~\eqref{eq:def_ell_r},
	$
		y_1+y_2
		\leq
		((1/r) \vee 1) \kappa_r
		\leq
		(\mu_1 \wedge \mu_2) \ell_r
		.
	$
	Therefore,
	\begin{align}
		\label{eq:p_bnd_t0}
		\P(Y \geq y_1+y_2)
		 & \geq
		\P(Y \geq (\mu_1 \wedge \mu_2) \ell_r)
		\geq
		\tfrac 12
		,
	\end{align}
	where the last inequality uses $\P(X\geq x) \geq 1/2$ for $X\sim\poi{x}$ and $x\in\N_{\geq 0}$.
	Since $\bfy\in\scrC_r$ was arbitrary, we obtain from~\eqref{eq:P_B_leq_t0} up to~\eqref{eq:p_bnd_t0} that
	\begin{align}
		\label{eq:bnd_B_t0}
		\max_{\bfy\in\scrC_r} \P_\bfy(T_{\mathbf{0}\cup\scrB_r} > \ell_r)
		 & \leq
		1 - \tfrac 12 e^{-\lambda \ell_r}
		.
	\end{align}

	We next show that
	\begin{align}
		\label{eq:bnd_B_n_t0}
		\max_{\bfy\in\scrC_r} \P_\bfy(T_{\mathbf{0}\cup\scrB_r} > n \ell_r)
		 & \leq
		\bigl(1 - \tfrac 12 e^{-\lambda \ell_r}\bigr)^n, \qquad n\in\N_{\geq 1}
		.
	\end{align}
	Let  $n\in\N_{\geq 1}$, $\calA := \{T_{\mathbf{0}\cup\scrB_r} > (n-1)\ell_r\}$, and $\calB := \{ \QSED{r}((n-1) \ell_r) \in\scrC_r\setminus\{\mathbf{0}\}\}$.
	We have
	\begin{align}
		\P_\bfy(T_{\mathbf{0}\cup\scrB_r} > n \ell_r)
		 & =
		\P_\bfy(T_{\mathbf{0}\cup\scrB_r} > n \ell_r \mid \calA)
		\P_\bfy(\calA)
		.
		\label{eq:P_exit_C_n}
	\end{align}
	Note
	\begin{align}
		\P_\bfy(T_{\mathbf{0}\cup\scrB_r} > n \ell_r \bigm| \calA)
		=
		\P_\bfy(T_{\mathbf{0}\cup\scrB_r} > n \ell_r
		\bigm|
		\calA, \,
		\calB)
		,
		\label{eq:recursion_start}
	\end{align}
	since $\bfy \in\scrC_r$ and $\scrB_r$ is the boundary of $\scrC_r$.
	By the law of total probability and the definition of $\calB$,
	\begin{align}
		 & \P_\bfy(T_{\mathbf{0}\cup\scrB_r} > n \ell_r
		\bigm|
		\calA, \,
		\calB) \notag                                 \\
		 & \ =
		\sum_{\bfx \in \scrC_r\setminus\{\mathbf{0}\}}
		\P_\bfy(T_{\mathbf{0}\cup\scrB_r} > n \ell_r
		\bigm|
		\calA, \,
		\calB, \,
		\QSED{r}((n-1)\ell_r) =\bfx )
		\P_\bfy(\QSED{r}((n-1)\ell_r) =\bfx
		\bigm|
		\calA, \,
		\calB)
		.
		\label{eq:recursion_mid1}
	\end{align}
	By the Markov property,
	\begin{align}
		\P_\bfy(T_{\mathbf{0}\cup\scrB_r} > n \ell_r
		\bigm|
		\calA, \,
		\calB, \,
		\QSED{r}((n-1)\ell_r) =\bfx )
		 & =
		\P_\bfx(T_{\mathbf{0}\cup\scrB_r} > \ell_r)
		.
		\label{eq:recursion_mid2}
	\end{align}
	Hence, by~\eqref{eq:recursion_mid1} and~\eqref{eq:recursion_mid2},
	\begin{align}
		\P_\bfy(T_{\mathbf{0}\cup\scrB_r} > n \ell_r
		\bigm|
		\calA, \,
		\calB)
		 & =
		\sum_{\bfx \in \scrC_r\setminus\{\mathbf{0}\}}
		\P_\bfx(T_{\mathbf{0}\cup\scrB_r} > \ell_r)
		\P_\bfy(\QSED{r}((n-1)\ell_r) =\bfx
		\bigm|
		\calA, \,
		\calB)  \\
		 & \leq
		\max_{\bfz\in\scrC_r} \P_\bfz(T_{\mathbf{0}\cup\scrB_r} > \ell_r)
		\sum_{\bfx \in \scrC_r\setminus\{\mathbf{0}\}}  \P_\bfy(\QSED{r}((n-1)\ell_r) =\bfx
		\bigm|
		\calA, \,
		\calB)  \\
		 & =
		\max_{\bfz\in\scrC_r} \P_\bfz(T_{\mathbf{0}\cup\scrB_r} > \ell_r)
		.
		\label{eq:recursion_n}
	\end{align}
	By~\eqref{eq:recursion_start} and \eqref{eq:recursion_n},
	\begin{align}
		\P_\bfy(T_{\mathbf{0}\cup\scrB_r} > n \ell_r
		\mid
		\calA)
		 & \leq
		\max_{\bfz\in\scrC_r} \P_\bfz(T_{\mathbf{0}\cup\scrB_r} > \ell_r)
		.
		\label{eq:recursion_exit_C}
	\end{align}
	Iterating~\eqref{eq:P_exit_C_n} and using~\eqref{eq:bnd_B_t0} and~\eqref{eq:recursion_exit_C}  gives~\eqref{eq:bnd_B_n_t0}.

	For the first moment, we use that $\E(X) = \int_0^\infty \P(X>t)\rmd t$ for any $X\geq 0$~\citep[Eq.~(21.9)]{Billingsley1995},
	\begin{align}
		\max_{\bfy \in \scrC_r} \E_\bfy(T_{\mathbf{0}\cup\scrB_r})
		 & =
		\max_{\bfy\in\scrC_r} \int_0^\infty \P_\bfy(T_{\mathbf{0}\cup\scrB_r} > t)\rmd t
		\leq
		\int_0^\infty \max_{\bfy\in\scrC_r} \P_\bfy(T_{\mathbf{0}\cup\scrB_r} > t)\rmd t
		.
	\end{align}
	If $t\in[n\ell_r,(n+1)\ell_r]$, then $\P(T_{\mathbf{0}\cup\scrB_r} > t) \leq \P(T_{\mathbf{0}\cup\scrB_r} > n\ell_r)$, hence
	\begin{align}
		\int_0^\infty \max_{\bfy\in\scrC_r} \P_\bfy(T_{\mathbf{0}\cup\scrB_r} > t)\rmd t
		 & \leq
		\sum_{n=0}^\infty
		\int_{n\ell_r}^{(n+1)\ell_r}
		\max_{\bfy\in\scrC_r} \P_\bfy(T_{\mathbf{0}\cup\scrB_r} > n\ell_r)\rmd t
		\leq
		\ell_r \sum_{n=0}^\infty \max_{\bfy \in\scrC_r} \P_\bfy(T_{\mathbf{0}\cup\scrB_r} > n\ell_r)
		.
		\label{eq:tailbound_M_C}
	\end{align}
	Using~\eqref{eq:bnd_B_n_t0} and that $\sum_{n=0}^\infty (1-x)^n = 1/x$ for $x\in (0,1)$, we obtain
	\begin{align}
		\label{eq:M_bnd_t0_p}
		\max_{\bfy \in \scrC_r} \E_\bfy(T_{\mathbf{0}\cup\scrB_r})
		 & \leq
		\ell_r \sum_{n=0}^\infty
		\Bigl(1 - \frac 12 e^{-\lambda \ell_r} \Bigr)^n
		=
		2 \ell_r e^{\lambda \ell_r}
		.
	\end{align}

	For the second moment, we use that $\E(X^2) = 2 \int_0^\infty t \P(X>t)\rmd t$ for any $X\geq 0$, which can be verified by applying~\citep[Eq.~(21.9)]{Billingsley1995} to $X^2$.
	This gives
	\begin{align}
		\max_{\bfy \in \scrC_r} \E_\bfy(T_{\mathbf{0}\cup\scrB_r}^2)
		 & \leq
		2 \int_0^\infty t \max_{\bfy\in\scrC_r} \P_\bfy(T_{\mathbf{0}\cup\scrB_r} > t)\rmd t                  \\
		 & \leq
		2\ell_r^2 \sum_{n=0}^\infty  (n+1)\max_{\bfy \in\scrC_r} \P_\bfy(T_{\mathbf{0}\cup\scrB_r} > n\ell_r) \\
		 & \leq
		2\ell_r^2 \sum_{n=0}^\infty (n+1) \bigl(1 - \tfrac 12 e^{-\lambda \ell_r}\bigr)^n                   \\
		 & \leq
		8 \ell_r^2 e^{2\lambda \ell_r}
		.
	\end{align}
	Here, the last inequality uses $\sum_{n=0}^\infty (n+1)(1-x)^n = 1/x^2$ for $x\in (0,1)$.
	This concludes the proof.
\end{proof}

\subsection{Proof of \Cref{lem:E_B_bnd}}
\label{app:proof_lem_E_B_bnd}

\begin{proof}
	Let $\bfq\in\N_{\geq 0}^2$.
	We have $T_{\mathbf{0}} \geq T_{\scrC_r}$ since $\mathbf{0} \in \scrC_r$; recall~\eqref{eq:def_C_set}.
	Therefore, we may write $\E_\bfq(T_{\mathbf{0}}) = \E_\bfq(T_{\scrC_r}) + \E_\bfq(T_{\mathbf{0}} - T_{\scrC_r})$.
	By the law of total expectation,
	\begin{align}
		\label{eq:strong_markov_hitting1}
		\E_{\bfq}(T_{\mathbf{0}}-T_{\scrC_r})
		 & =
		\sum_{\bfz \in\scrC_r}
		\E_{\bfq}(T_{\mathbf{0}}-T_{\scrC_r} \mid \QSED{r}(T_{\scrC_r}) = \bfz)
		\P_{\bfq}(\QSED{r}(T_{\scrC_r}) = \bfz)
		.
	\end{align}
	Applying the strong Markov property~\citep[Def.~17.12]{Klenke2020} at stopping time $T_{\scrC_r}$ gives $\E_{\bfq}(T_{\mathbf{0}}-T_{\scrC_r} \mid \QSED{r}(T_{\scrC_r}) = \bfz)  = \E_{\bfz}(T_{\mathbf{0}})$.
	Hence,
	\begin{align}
		\label{eq:strong_markov_hitting2}
		\E_{\bfq}(T_{\mathbf{0}}-T_{\scrC_r})
		 & =
		\sum_{\bfz \in\scrC_r}
		\E_{\bfz}(T_{\mathbf{0}})
		\P_{\bfq}(\QSED{r}(T_{\scrC_r}) = \bfz)
		\leq
		\max_{\bfy \in\scrC_r} \E_{\bfy}(T_{\mathbf{0}}) \sum_{\bfz \in\scrC_r} \P_{\bfq}(\QSED{r}(T_{\scrC_r}) = \bfz)
		=
		\max_{\bfy \in\scrC_r} \E_{\bfy}(T_{\mathbf{0}})
		.
	\end{align}

	By~\eqref{eq:strong_markov_hitting2} and \Cref{lem:hitting_time_finite_moments},
	\begin{align}
		\label{eq:bnd_B_split}
		\E_\bfq(T_{\mathbf{0}})
		 & \leq
		\E_\bfq(T_{\scrC_r}) + \max_{\bfy \in \scrC_r} \E_\bfy(T_{\mathbf{0}})
		\leq
		2a_r^{-1} \sqrt{\phi_r(\bfq)} + \max_{\bfy \in \scrC_r} \E_\bfy(T_{\mathbf{0}})
		.
	\end{align}

	For the second moment, we use
	$(x+y)^2 \leq 2x^2 + 2y^2$ for any $x,y\geq 0$ and \Cref{lem:hitting_time_finite_moments} to obtain
	\begin{align}
		\E_\bfq(T_{\mathbf{0}}^2)
		 & \leq
		2 \E_\bfq(T_{\scrC_r}^2) + 2\E_\bfq((T_{\mathbf{0}} - T_{\scrC_r})^2)
		\leq
		8 a_r^{-2} \phi_r(\bfq) + 2\E_\bfq((T_{\mathbf{0}} - T_{\scrC_r})^2)
		.
	\end{align}
	Following similar steps as in~\eqref{eq:strong_markov_hitting1} and~\eqref{eq:strong_markov_hitting2} with, by the strong Markov property, $\E_{\bfq}((T_{\mathbf{0}}-T_{\scrC_r})^2 \mid \QSED{r}(T_{\scrC_r}) = \bfz)  = \E_{\bfz}(T_{\mathbf{0}}^2)$ for any $\bfz \in\scrC_r$, we obtain
	\begin{align}
		\label{eq:bnd_B_split_squared}
		\E_\bfq(T_{\mathbf{0}}^2)
		 & \leq
		8 a_r^{-2} \phi_r(\bfq) + 2\E_\bfq((T_{\mathbf{0}} - T_{\scrC_r})^2)
		\leq
		8 a_r^{-2} \phi_r(\bfq) + 2 \max_{\bfy \in \scrC_r} \E_\bfy(T_{\mathbf{0}}^2)
		.
	\end{align}

	We next bound $\max_{\bfy \in \scrC_r} \E_\bfy(T_{\mathbf{0}})$ and $\max_{\bfy \in \scrC_r} \E_\bfy(T_{\mathbf{0}}^2)$.
	Let $\bfy\in\scrC_r$.
	Consider an experiment where $\QSED{r}(t)$ starts in $\bfy$ at time $0$.
	If $T_{\mathbf{0}} < T_{\scrB_r}$ then the experiment is considered a success, and a failure otherwise.
	If a failure occurs, then $\QSED{r}(T_{\mathbf{0}\cup\scrB_r})\in\scrB_r$.
	In that case we wait for the process to hit $\scrC_r$ again, and then another attempt starts.
	Let $N$ denote the first success, let $L_i$ denote the duration of attempt~$i$, and let $L_i'$ denote the duration to return to $\scrC_r$ after a failed attempt, with $L_N' := 0$.
	Then
	\begin{align}
		\label{eq:T0_attempt_decomp}
		T_{\mathbf{0}} & =\sum_{i=1}^{N}(L_i+L_i')
		.
	\end{align}

	We first bound $\E_\bfy(T_{\mathbf{0}})$.
	We can apply the monotone convergence theorem since the summands are non-negative,
	\begin{align}
		\label{eq:E_sum_attempts}
		\E_\bfy\Bigl(\sum_{i=1}^{N} L_i \Bigr)
		 & =
		\E_\bfy\Bigl(\sum_{i=1}^\infty \ind{i \leq N} L_i \Bigr)
		=
		\sum_{i=1}^\infty \E_\bfy\Bigl(\ind{i \leq N} L_i \Bigr)
		.
	\end{align}
	Let $\sigma_i$ denote the starting time of the $i$th attempt and let $\calF_s$ denote the natural filtration of $\QSED{r}(s)$, $s\geq 0$.
	By the strong Markov property,
	\begin{align}
		\label{eq:attempt_length_bnd_F}
		\ind{i \leq N} \E_{\bfy}(L_i \mid \calF_{\sigma_i})
		 & =
		\ind{i \leq N} \E_{\QSED{r}(\sigma_i)}(T_{\mathbf{0}\cup\scrB_r})
		\leq
		\ind{i \leq N} \max_{\bfz \in \scrC_r} \E_{\bfz}(T_{\mathbf{0}\cup\scrB_r})
		,
	\end{align}
	where the last inequality follows since $\QSED{r}(\sigma_i) \in\scrC_r$.
	Using the law of total expectation and that $\ind{i \leq N}$ is $\calF_{\sigma_i}$-measurable yields
	\begin{align}
		\label{eq:bnd_E_attempt_i_leq_N}
		\E_\bfy\bigl(\ind{i \leq N} L_i \bigr)
		 & =
		\E_\bfy\bigl(\E_\bfy\bigl(\ind{i \leq N} L_i \mid \calF_{\sigma_i} \bigr) \bigr)
		=
		\E_\bfy\bigl(\ind{i \leq N} \E_\bfy\bigl(L_i \mid \calF_{\sigma_i}\bigr) \bigr)
		\stackrel{\eqref{eq:attempt_length_bnd_F}}{\leq}
		\P_\bfy(i \leq N)
		\max_{\bfz \in \scrC_r} \E_{\bfz}(T_{\mathbf{0}\cup\scrB_r})
		.
	\end{align}
	Repeating \eqref{eq:attempt_length_bnd_F} and \eqref{eq:bnd_E_attempt_i_leq_N} with $L_i'$ replacing $L_i$, we find
	\begin{align}
		\label{eq:E_sum_return_length}
		\E_\bfy\bigl(\ind{i \leq N} L_i' \bigr)
		 & \leq
		\P_\bfy(i \leq N)
		\max_{\bfx \in \scrB_r} \E_{\bfx}(T_{\scrC_r})
		.
	\end{align}
	By \Cref{lem:hitting_time_finite_moments,lem:bnd_hit_0B},
	\begin{align}
		\label{eq:bnds_hitting_attempts}
		\max_{\bfz \in \scrC_r} \E_{\bfz}(T_{\mathbf{0}\cup\scrB_r})
		 & \leq
		2 \ell_r e^{\lambda \ell_r}, \qquad
		\max_{\bfx \in \scrB_r} \E_{\bfx}(T_{\scrC_r})
		\leq
		2a_r^{-1}  \max_{\bfx \in \scrB_r} \sqrt{\phi_r(\bfx)}
		\stackrel{\eqref{eq:bnd_phi_on_B}}{\leq}
		2a_r^{-1} \sqrt{\xi_r}
		.
	\end{align}
	Since each attempt starts from a state in $\scrC_r$, its success probability is at least $p_r>0$ in \eqref{eq:bnd_empty_before_arr}, independent of previous attempts.
	We therefore have $\P_\bfy(i \leq N) \leq (1-p_r)^{i-1}$.
	Therefore, by \eqref{eq:T0_attempt_decomp}--\eqref{eq:bnds_hitting_attempts},
	\begin{align}
		\label{eq:bnd_E_T0}
		\E_{\bfy}(T_{\mathbf{0}})
		 & \leq
		\Bigl(\sum_{i=1}^\infty \P_\bfy(i \leq N) \Bigr) (2 \ell_r e^{\lambda \ell_r} + 2a_r^{-1} \sqrt{\xi_r})
		=
		p_r^{-1} (2 \ell_r e^{\lambda \ell_r} + 2a_r^{-1} \sqrt{\xi_r})
		.
	\end{align}
	Since the right side in \eqref{eq:bnd_E_T0} is independent of $\bfy$, we may take the maximum over all $\bfy\in\scrC_r$.
	Bound~\eqref{eq:bnd_B_final} follows from~\eqref{eq:bnd_B_split} and~\eqref{eq:bnd_E_T0}.

	We next bound $\E_\bfy(T_{\mathbf{0}}^2)$.
	By \eqref{eq:T0_attempt_decomp} and the monotone convergence theorem,
	\begin{align}
		\sqrt{\E_{\bfy}(T_{\mathbf{0}}^2) }
		 & =
		\sqrt{\E_\bfy\Bigl( \Bigl( \sum_{i=1}^\infty \ind{i\leq N}(L_i+L_i') \Bigr)^2 \Bigr)}
		=
		\lim_{n\to\infty}
		\sqrt{\E_\bfy\Bigl( \Bigl( \sum_{i=1}^n \ind{i\leq N}(L_i+L_i') \Bigr)^2 \Bigr)}
		.
	\end{align}
	We use the Minkowski inequality
	$
		\sqrt{\E\Bigl(\bigl(\sum_{i=1}^n a_i\bigr)^2 \Bigr)}
		\leq
		\sum_{i=1}^n \sqrt{\E(a_i^2)}
	$
	for $a_i \geq 0$
	to obtain
	\begin{align}
		\sqrt{\E_\bfy\Bigl( \Bigl( \sum_{i=1}^n \ind{i\leq N}(L_i+L_i') \Bigr)^2 \Bigr)}
		 & \leq
		\sum_{i=1}^n \sqrt{\E_\bfy(\ind{i\leq N}(L_i+L_i')^2)}
		.
	\end{align}
	Again applying the Minkowski inequality gives
	\begin{align}
		\sqrt{\E_\bfy(\ind{i\leq N} (L_i + L_i')^2)}
		 & \leq
		\sqrt{\E_\bfy(\ind{i\leq N} L_i^2)} + \sqrt{\E_\bfy(\ind{i\leq N} (L_i')^2)}
		.
	\end{align}
	Hence,
	\begin{align}
		\sqrt{\E_{\bfy}(T_{\mathbf{0}}^2)}
		 & \leq
		\sum_{i=1}^\infty
		\sqrt{\E_\bfy(\ind{i\leq N} L_i^2)} + \sqrt{\E_\bfy(\ind{i\leq N} (L_i')^2)}
		.
	\end{align}

	Following the same steps as in \eqref{eq:attempt_length_bnd_F}--\eqref{eq:bnd_E_attempt_i_leq_N} with $L_i^2$ replacing $L_i$ and using \Cref{lem:bnd_hit_0B} gives
	\begin{align}
		\label{eq:bnd_E_attempt_sq}
		\E_\bfy\bigl(\ind{i\leq N} L_i^2\bigr)
		 & \leq
		\P_\bfy(i\leq N) \max_{\bfz\in\scrC_r}\E_\bfz(T_{\mathbf{0}\cup\scrB_r}^2)
		\leq
		\P_\bfy(i\leq N)\, 8\ell_r^2 e^{2\lambda\ell_r}
		,
	\end{align}
	and analogously, using \Cref{lem:hitting_time_finite_moments},
	\begin{align}
		\label{eq:bnd_E_return_sq}
		\E_\bfy\bigl(\ind{i\leq N} (L_i')^2\bigr)
		 & \leq
		\P_\bfy(i\leq N) \max_{\bfx\in\scrB_r}\E_\bfx(T_{\scrC_r}^2)
		\leq
		\P_\bfy(i\leq N) 4 a_r^{-2} \max_{\bfx\in\scrB_r} \phi_r(\bfx)
		\stackrel{\eqref{eq:bnd_phi_on_B}}{\leq}
		\P_\bfy(i\leq N) 4a_r^{-2}\xi_r
		.
	\end{align}

	Then, using $\P_\bfy(i\leq N) \leq (1-p_r)^{i-1}$,
	\begin{align}
		\label{eq:E_T2_sqrt}
		\sqrt{\E_{\bfy}(T_{\mathbf{0}}^2)}
		 & \leq
		\sum_{i=1}^\infty \sqrt{(1-p_r)^{i-1}}  \, \Bigl( \sqrt{8\ell_r^2 e^{2\lambda\ell_r}} + \sqrt{4a_r^{-2}\xi_r}  \Bigr)
		=
		\frac{1}{1-\sqrt{1-p_r}} \,  \Bigl( \sqrt{8\ell_r^2 e^{2\lambda\ell_r}} + \sqrt{4a_r^{-2}\xi_r}  \Bigr)
		.
	\end{align}
	Now
	\begin{align}
		\frac{1}{1-\sqrt{1-p_r}}
		 & =
		\frac{1+\sqrt{1-p_r}}{(1-\sqrt{1-p_r})(1+\sqrt{1-p_r})}
		=
		\frac{1+\sqrt{1-p_r}}{p_r}
		\leq
		2 p_r^{-1}
		.
		\label{eq:p_algebra}
	\end{align}
	Squaring \eqref{eq:E_T2_sqrt}, using \eqref{eq:p_algebra} and that $(x+y)^2 \leq 2x^2 + 2y^2$ for all $x,y$ gives
	\begin{align}
		\label{eq:bnd_E_T0_squared}
		\E_\bfy(T_{\mathbf{0}}^2)
		 & \leq
		8 p_r^{-2} (8 \ell_r^2 e^{2\lambda \ell_r} + 4a_r^{-2} \xi_r)
		.
	\end{align}
	Since the right side in \eqref{eq:bnd_E_T0_squared} is independent of $\bfy$, we may take the maximum over all $\bfy\in\scrC_r$.
	Bound~\eqref{eq:bnd_B_squared_final} follows from~\eqref{eq:bnd_B_split_squared} and~\eqref{eq:bnd_E_T0_squared}.
	This completes the proof.

\end{proof}

\subsection{Proof of~\Cref{lem:est_error}}
\label{app:proof_lem_est_error}

\begin{proof}
	Let $r^* =  a/b$ in lowest terms and let $k\in\N_{\geq 1}$ satisfy $k\geq \exp(1/c_1)$.
	By \Cref{lem:number_approximation} and the definition of $m(k)$ in~\eqref{eq:def_M_k},
	\begin{align}
		\P\bigl(|\hat r_k - r^*| \geq \uval{r^*}{m(k)}\bigr)
		 & \leq
		\P\Bigl(|\hat r_k - r^*| \geq \frac{1}{(m(k)+1)b}\Bigr)
		\leq
		\P\bigl(|\hat r_k - r^*| \geq \zeta_k^{-1}\bigr),
		\qquad
		\zeta_k := (c_1 \ln(k)+1)b
		.
		\label{eq:P_r_gamma}
	\end{align}
	Note that $\ln(k) > 0$  since $k\geq 2$, so $0 < \zeta_k^{-1} < 1/b < 2a/b = 2r^*$.
	By~\eqref{eq:dep_geq_alpha}, we have $N_1^{(k)}\wedge N_2^{(k)}\geq \alpha(k-1)$ almost surely.
	Hence, by \Cref{lem:rhat_subexp},
	\begin{align}
		\P\bigl(|\hat r_k - r^*| \geq \zeta_k^{-1}\bigr)
		 & =
		\P\bigl(
		|\hat r_k-r^*| \geq \zeta_k^{-1}, \ N_1^{(k)}\wedge N_2^{(k)} \geq \alpha(k-1)
		\bigr)
		\leq
		4\exp\left(
		-\frac{\alpha(k-1)}
			{128 (r^*)^2\zeta_k^2}
		\right)
		.
	\end{align}
	Note that $\alpha(k-1) = \lceil \ln(k)^4\rceil \geq \ln(k)^4$ and that $r^* = a/b$.
	Moreover,  $(x+1)^2 \leq 4x^2$ for $x\geq 1$, and $c_1\ln(k) \geq 1$ by assumption.
	This concludes the proof,
	\begin{align}
		\P\bigl(|\hat r_k - r^*| \geq \zeta_k^{-1}\bigr)
		 & \leq
		4\exp\left(
		-\frac{\ln(k)^4}
			{128 a^2(c_1\ln(k)+1)^2}
		\right)
		\leq
		4\exp\left(
		-\frac{\ln(k)^2}
			{512 a^2 c_1^2 }
		\right)
		=
		4 k^{-\ln(k) / (512 a^2 c_1^2)}
		.
	\end{align}
\end{proof}

\subsection{Proof of~\Cref{lem:rhat_subexp}}
\label{app:proof_lem_rhat_subexp}

\begin{proof}
	We first derive concentration inequalities for $\hat{\mu}_i^{(k)}$.
	Let $k,m\in\N_{\geq 1}$ and  $i\in\{1,2\}$.
	Let $X_{ij}$, $j\in\N_{\geq 1}$ be i.i.d.\ random variables with $X_{i1} \sim \textrm{Exp}(\mu_i)$.
	We have by the definitions of $\hat{\mu}_i^{(k)}$, $S_i^{(k)}$, and $N_i^{(k)}$,
	\begin{align}
		\hat{\mu}_i^{(k)}
		 & =
		\frac{N_i^{(k)}}{\sum_{j=1}^{N_i^{(k)}} X_{ij}}
		.
	\end{align}
	Let $0 < \delta < 1$ and $M_n := \sum_{j=1}^n \mu_i X_{ij}$.
	Note that
	\begin{align}
		\bigl\{|\hat{\mu}_i^{(k)} / \mu_i - 1| \geq \delta \bigr\}
		\cap
		\{ N_i^{(k)} \geq m\}
		\subseteq
		\bigl\{\exists n \geq m: |n / M_n - 1| \geq \delta \bigr\}
		.
		\label{eq:dep_samples_inclusion}
	\end{align}

	We next bound the probabilities of $M_n\leq n/(1+\delta)$ and $M_n\geq n/(1-\delta)$ for some $n\geq m$ separately.
	These bounds follow by constructing a martingale and applying Ville's inequality.
	In particular, we use the following as stated in \cite[Sec.~12.5, Ex.~(15)]{Grimmett2001}:
	{\it
	Let $\xi_1,\xi_2,\dots,$ i.i.d.\ and let $\theta \neq 0$ such that $ \E(\exp(\theta \xi_1)) < \infty$.
	Then,  $(\E(\exp(\theta \xi_1)))^{-n} \exp(\theta \sum_{j=1}^n \xi_j)$ is a martingale.
	}

	Since
	\begin{align}
		\E(\exp(-\delta \mu_i X_{i1}))
		= \int_0^\infty \exp(-\delta \mu_i t) \mu_i \exp(-\mu_i t) \rmd t
		= \frac{1}{1+\delta} < \infty
		,
	\end{align}
	it follows that
	$(\E(\exp(-\delta \mu_i X_{i1})))^{-n} \exp(-\delta M_n) =
		(1+\delta)^n \exp(-\delta M_n)$ is a martingale.
	Since this martingale is also non-negative, Ville's inequality \cite[Thm.~1.1]{Koolen2026} gives
	\begin{align}
		\P\Bigl(\exists n\geq 0:  (1+\delta)^n \exp(-\delta M_n) \geq c \Bigr)
		 & \leq
		\frac 1c
		,
		\qquad c>0
		.
		\label{eq:ville}
	\end{align}

	Suppose there exists $n_0 \geq m$ with $M_{n_0} \leq n_0/(1+\delta)$, then
	\begin{align}
		(1+\delta)^{n_0} \exp(-\delta M_{n_0})
		 & \geq
		\exp\bigl(n_0(\log(1+\delta) - \tfrac{\delta}{1+\delta})\bigr)
		\geq
		\exp\bigl(m(\log(1+\delta) - \tfrac{\delta}{1+\delta})\bigr)
		,
		\label{eq:Sn_subset_martingale}
	\end{align}
	where the last inequality follows since $\log(1+\delta) - \delta/(1+\delta) \geq 0$ and $n_0 \geq m$.
	Therefore,
	\begin{align}
		\Bigl\{\exists n\geq m: M_n \leq \frac{n}{1+\delta}\Bigr\}
		\subseteq
		\Bigl\{\exists n\geq 0: (1+\delta)^n \exp(-\delta M_n) \geq \exp\bigl(m(\log(1+\delta) - \tfrac{\delta}{1+\delta})\bigr) \Bigr\}
		.
		\label{eq:Sn_subset}
	\end{align}
	Combining~\eqref{eq:ville} and \eqref{eq:Sn_subset} and subsequently that $\log(1+\delta) - \delta/(1+\delta) \geq \delta^2/8$  gives
	\begin{align}
		\P\Bigl(\exists n\geq m: M_n \leq \frac{n}{1+\delta}\Bigr)
		\leq
		\exp\bigl(- m(\log(1+\delta) - \tfrac{\delta}{1+\delta})\bigr)
		\leq
		\exp(-\tfrac{m\delta^2}{8})
		.
	\end{align}

	Using similar steps, but now with the martingale $(1-\delta)^n\exp(\delta M_n)$, one finds that
	\begin{align}
		\P\Bigl(\exists n\geq m: M_n \geq \frac{n}{1-\delta}\Bigr)
		\leq
		\exp\bigl(- m( \tfrac{\delta}{1-\delta} + \log(1-\delta))\bigr)
		\leq
		\exp(-\tfrac{m\delta^2}{8})
		.
	\end{align}
	Combining these bounds with \eqref{eq:dep_samples_inclusion} gives
	\begin{align}
		\P(|\hat{\mu}_i^{(k)} / \mu_i - 1| \geq \delta, \, N_i^{(k)} \geq m)
		 & \leq
		2\exp(-\tfrac{m\delta^2}{8}),
		\qquad  0 < \delta < 1, \ i = 1,2
		.
		\label{eq:muhat_stopped_concentration}
	\end{align}

	Let now $\eps < 2r^*$ and take
	$
		\delta:= \eps / (4r^*) < 1/2
		.
	$
	Since $\hat{r}_k = \hat{\mu}_2^{(k)} / \hat{\mu}_1^{(k)}$ and $r^* = \mu_2/\mu_1$, we have that if $|\hat\mu_i^{(k)} / \mu_i -1|<\delta$ for $i=1,2$, then
	\begin{align}
		\left|\frac{\hat r_k}{r^*}-1\right|
		<
		\frac{2\delta}{1-\delta}
		<
		\frac{\eps}{r^*}.
	\end{align}
	Hence,
	\begin{align}
		\{|\hat r_k-r^*|\geq\eps\}
		\subseteq
		\bigcup_{i=1}^2
		\left\{
		\left|\frac{\hat\mu_i^{(k)}}{\mu_i}-1\right|
		\geq\delta
		\right\}
		.
		\label{eq:r_mu_inclusion}
	\end{align}
	Combining \eqref{eq:muhat_stopped_concentration} and \eqref{eq:r_mu_inclusion} concludes the proof,
	\begin{align}
		\P(|\hat r_k - r^*| \geq \eps, \ N_1^{(k)} \wedge N_2^{(k)} \geq m)
		\leq
		4 \exp\Bigl(- \frac{m \eps^2 }{128 (r^*)^2}\Bigr)
		.
	\end{align}

\end{proof}

\subsection{Proof of \Cref{lem:exploration}}
\label{app:proof_lem_exploration}

\begin{proof}
	By~\eqref{eq:event_exploration},
	\begin{align}
		\label{eq:exploration_split}
		\P(\Eexplr{k})
		\leq
		\P\bigl(N_1^{(k)} \wedge N_2^{(k)} < \alpha(k)\bigr)
		\leq
		\P\bigl(N_1^{(k)} < \alpha(k)\bigr) + \P(N_2^{(k)} < \alpha(k))
		.
	\end{align}
	Let $k\in\N_{\geq 1}$ satisfy $\alpha(k) \leq (k-1) v^*/4$.
	We have, since $Z_i^{(m)} \geq 0$,
	\begin{align}
		N_i^{(k)}
		 & =
		\sum_{m=1}^{k-1} Z_i^{(m)}
		=
		\sum_{m=1}^{k-1} Z_i^{(m)}\bigl(\ind{\calG^{(m)}} + \ind{(\calG^{(m)})^c} \bigr)
		\stackrel{\textrm{\eqref{eq:def_good_episode}}}{\geq}
		\sum_{m=1}^{k-1} \ind{\calG^{(m)}}
		.
	\end{align}
	Therefore,
	\begin{align}
		\label{eq:relation_N_G}
		\P\bigl(N_i^{(k)} < \alpha(k)\bigr)
		 & \leq
		\P\Bigl( \sum_{m=1}^{k-1}  \ind{\calG^{(m)}} < \alpha(k)\Bigr)
		.
	\end{align}

	For $m\in\N_{\geq 1}$, let
	\begin{align}
		\tilde{I}_m := \ind{|\bar{r}_m-r^*|< r^*/2}
		.
	\end{align}
	Then,
	\begin{align}
		\label{eq:good_events_split}
		\P\Bigl(\sum_{m=1}^{k-1} \ind{\calG^{(m)}} < \alpha(k)\Bigr)
		      & \leq
		\P\Bigl(\sum_{m=1}^{k-1} \ind{\calG^{(m)}} < \alpha(k), \, \sum_{m=1}^{k-1} \tilde{I}_m \geq \tfrac{k-1}{2} \Bigr)
		+
		\P\Bigl(\sum_{m=1}^{k-1} \tilde{I}_m < \tfrac{k-1}{2} \Bigr)
		.
	\end{align}

	We first bound the last term in~\eqref{eq:good_events_split}.
	We have $N_1^{(m)} \wedge N_2^{(m)} \geq \alpha(m-1) \geq \ln(m)^4$ almost surely by~\eqref{eq:dep_geq_alpha} and~\eqref{eq:def_alpha_k}.
	Hence, by \Cref{lem:rhat_subexp},
	\begin{align}
		\P(|\hat{r}_m - r^*| \geq r^*/2)
		 & =
		\P\left(
		|\hat r_m-r^*|\geq r^*/2, \
		N_1^{(m)} \wedge N_2^{(m)}
		\geq
		\alpha(m-1)
		\right) \\
		 & \leq
		4\exp\left(
		-\frac{
				\ln(m)^4 (r^*/2)^2
			}{
				128(r^*)^2
			}
		\right)
		=
		4\exp\left(
		-\frac{\ln(m)^4}{512}
		\right)
		\quad
		\textnormal{for}
		\quad
		m\geq 1
		.
		\label{eq:P_Hm_large}
	\end{align}
	Note that $4\exp(-\ln(m)^4 / 512)  = 4m^{-3} \exp\bigl(3\ln(m) - \ln(m)^4/512\bigr)$.
	Moreover, the function $x\mapsto 3x - x^4/512$ attains a global maximum at $x=4\cdot 6^{1/3}$, with value $9 \cdot 6^{1/3} < 17$.
	Therefore,
	\begin{align}
		\P(|\hat{r}_m - r^*| \geq r^*/2)
		 &
		\leq
		4\exp(17) m^{-3}
		\quad
		\textnormal{for}
		\quad
		m\geq 1
		.
	\end{align}
	By assumption $r^*\in[\rmin,\rmax]$ and so $|\bar{r}_m-r^*| \leq |\hat{r}_m-r^*|$; recall \Cref{line:clip} in \Cref{alg:LASED}.
	Therefore,
	\begin{align}
		\label{eq:bnd_P_L_m_1}
		\P(\tilde{I}_m=0) \leq \P(|\hat{r}_m - r^*| \geq r^*/2) & \leq 4\exp(17) m^{-3}
		.
	\end{align}
	It can be verified that $\alpha(k) \leq (k-1) v^*/4$ implies that $k\geq 9$.
	Hence, given that~\eqref{eq:bnd_P_L_m_1} holds for all $m\geq 1$, we can apply \Cref{lem:copulas} with $c=4\exp(17)$ and $d=3$ to obtain
	\begin{align}
		\label{eq:P_not_conc}
		\P\Bigl(\sum_{m=1}^{k-1} \tilde{I}_m < \tfrac{k-1}{2}\Bigr)
		\leq
		2^{9} \exp(17) (k-1)^{-3}
		.
	\end{align}

	We next bound the other term in \eqref{eq:good_events_split}.
	Consider the martingale difference sequence $D_m := \tilde{I}_m \ind{\calG^{(m)}} - \E(\tilde{I}_m \ind{\calG^{(m)}} \mid \calF_{m-1})$.
	On the event $\{\sum_{m=1}^{k-1} \ind{\calG^{(m)}} < \alpha(k), \, \sum_{m=1}^{k-1} \tilde{I}_m \geq (k-1)/2\}$, we have, using that $\tilde{I}_m$ is $\calF_{m-1}$-measurable and \Cref{lem:at_least_one_departure},
	\begin{align}
		\sum_{m=1}^{k-1} D_m 
		&\leq
		\alpha(k) - \sum_{m=1}^{k-1} \tilde{I}_m  \E(\ind{\calG^{(m)}} \mid \calF_{m-1})
		\leq  
		\alpha(k) -  v^* \tfrac{k-1}{2} 
		\leq 
		- v^* \tfrac{k-1}{4}
		,
	\end{align}
	where the last inequality follows since $\alpha(k)\leq (k-1)v^*/4$ by assumption.
	We may apply the Azuma--Hoeffding inequality \citep[Thm.~1]{Azuma1967} since $|D_m|\leq 1$,
	\begin{align}
		\P\Bigl(\sum_{m=1}^{k-1} \ind{\calG^{(m)}} < \alpha(k), \, \sum_{m=1}^{k-1} \tilde{I}_m \geq \tfrac{k-1}{2}\Bigr) 
		&\leq 
		\P\Bigl(\sum_{m=1}^{k-1} D_m \leq - v^* \tfrac{k-1}{4} \Bigr) \\
		&\leq 
		\exp\Bigl(-\frac{(v^* \tfrac{k-1}{4})^2}{2(k-1)}\Bigr) \\
		&= 
		\exp(-\tfrac{1}{32}  (v^*)^2 (k-1)) \\
		&\leq 
		\exp(-\tfrac{1}{64} (v^*)^2 k)
		.
		\label{eq:P_good_suf_concen}
	\end{align}
	Here, the last inequality follows since $k-1\geq k/2$ for $k\geq 2$. 
	
	The proof is concluded by combining~\eqref{eq:exploration_split},~\eqref{eq:relation_N_G},~\eqref{eq:good_events_split},~\eqref{eq:P_not_conc}, and~\eqref{eq:P_good_suf_concen}.
\end{proof}

\subsection{Proof of \Cref{lem:at_least_one_departure}}
\label{app:proof_lem_at_least_one_departure}

\begin{proof}
	Let $f_i(h)$ denote the probability that the queue length of an $M/M/1$ queue with arrival rate $\lambda$ and service rate $\mu_i$ reaches $h\in\N_{\geq 1}$ during one busy period, $i\in\{1,2\}$.
	By~\citep[Eq.\ (2.50)]{Cohen1982},
	\begin{align}
		\label{eq:def_Fih}
		f_i(h)
		 & =
		\frac{\rho_i^{h-1}}{\sum_{m=0}^{h-1} \rho_i^m}
		.
	\end{align}

	Let $k\in\N_{\geq 1}$ and denote  $\calA_k = \{|\bar{r}_k - r^*| < r^*/2\}$ and  $\calD_k = \{N_1^{(k)}\wedge N_2^{(k)} \geq \alpha(k)\}$.
	Both $\bar{r}_k$ and $\calD_k$ are $\calF_{k-1}$-measurable.
	Moreover, conditional on $\calF_{k-1}$, the arrival and potential service processes during episode~$k$ are independent of the history.
	On $\calD_k^c$, $Z_1^{(k)} \wedge Z_2^{(k)}\geq 1$ by \Cref{line:forced_exploration} of \Cref{alg:LASED}, hence 
	\begin{align}
		\P\bigl(\calG^{(k)} \mid \calF_{k-1}\bigr) = 1 \qquad \textrm{on} \ \calD_k^c
		.
		\label{eq:P_G_forced_conditional}
	\end{align}
	Suppose now that $\calD_k$ occurs. 
	Then \gls{LASED} routes customers according to \SED{\bar{r}_k} during episode $k$.
	Moreover, there is no exploration phase and so episode~$k$ is one busy period.

	Suppose that $\calD_k\cap\{\bar{r}_k>1\}$. 
	The first arrival in episode~$k$ is then routed to server two, so $Z_2^{(k)}\geq 1$.
	Thus, by definition of $\calG^{(k)}$ in~\eqref{eq:def_good_episode},
	\begin{align}
		\label{eq:P_G_geq_Z1}
		\P(\calG^{(k)} \mid \calF_{k-1})
		 & =
		\P(Z_1^{(k)} \geq 1 \mid \calF_{k-1}) \qquad \textrm{on} \ \calD_k\cap\{\bar{r}_k>1\}
		.
	\end{align}
	A sufficient event for $Z_1^{(k)} \geq 1$ is that the queue length at server two reaches $\bar{r}_k - 1$ and there is an arrival.
	Until such a time, queue~2 operates as an $M/M/1$ system.
	The probability that the second queue exceeds $\bar{r}_k-1$ during episode $k$ thus equals $f_2(\lceil\bar{r}_k\rceil)$.
	If the system is in state $(Q_1,Q_2)(t) = (0,j)$ with $j > 0$, then the probability that the next event is an arrival (before a departure) is $\lambda/(\lambda+\mu_2)$ by the memoryless property of the exponential distribution~\citep{Grimmett2001}.
	We thus find using the law of total probability,
	\begin{align}
		\label{eq:P_G_geq}
		\P(Z_1^{(k)} \geq 1 \mid \calF_{k-1})
		\geq
		\frac{\lambda}{\lambda+\mu_2} f_2(\lceil\bar{r}_k\rceil) \qquad \textrm{on} \ \calD_k\cap\{\bar{r}_k>1\}
		.
	\end{align}

	Similarly, on $\calD_k\cap\{\bar{r}_k\leq1\}$, there is a departure from server~1 with probability~1, and
	\begin{align}
		\label{eq:P_G_leq}
		\P(\calG^{(k)} \mid \calF_{k-1})
		 & =
		\P(Z_2^{(k)}\geq 1 \mid \calF_{k-1})
		\geq
		\frac{\lambda}{\lambda+\mu_1} f_1(\lceil1/\bar{r}_k \rceil) \qquad \textrm{on} \ \calD_k\cap\{\bar{r}_k\leq1\}
		.
	\end{align}

	If $\calA_k$ occurs, then $r^*/2 < \bar{r}_k <  3r^*/2$.
	Moreover, $f_i(h)$ in~\eqref{eq:def_Fih} is non-increasing in $h$.
	Therefore, by~\eqref{eq:P_G_geq} and~\eqref{eq:P_G_leq},
	\begin{align}
		\P\bigl(\calG^{(k)} \bigm| \calF_{k-1}\bigr)
		 & \geq
		\Bigl(
		\frac{\lambda}{\lambda+\mu_1} f_1(\lceil 2/r^* \rceil)
		\Bigr)
		\wedge
		\Bigl(
		\frac{\lambda}{\lambda+\mu_2} f_2(\lceil 3r^*/2 \rceil)
		\Bigr), 
		\qquad \textrm{on} \ \calA_k\cap\calD_k
		.
		\label{eq:P_good_bnd}
	\end{align}
	Note that
	\begin{align}
		f_i(h)
		 & \geq
		\frac{\rho_i^{h-1}}{h(\rho_i^{h-1} \vee 1)}
		=
		\frac{1}{h} (\rho_i\wedge 1)^{h-1}
		,
		\qquad
		h\in\N_{\geq 1}
		.
		\label{eq:f_bnd}
	\end{align}
	Recall that $v^*$ is defined in~\eqref{eq:def_vr}.
	We obtain from~\eqref{eq:P_good_bnd} and~\eqref{eq:f_bnd}, using $1/\lceil x \rceil \geq 1/(x+1)$ and $(\rho_i\wedge 1)^{\lceil x\rceil - 1} \geq (\rho_i \wedge 1)^x$,
	that
	$
		\P\bigl(\calG^{(k)} \bigm| \calF_{k-1}\bigr)
		\geq v^*
	$
	on $\calA_k\cap\calD_k$. 
	Together with \eqref{eq:P_G_forced_conditional} and $v^* \leq 1$, 
	\begin{align}
		\P\bigl(\calG^{(k)} \bigm| \calF_{k-1}\bigr)
		&\geq 
		v^* \ind{\calA_k \cap \calD_k}
		+ \ind{\calD_k^c}
		\geq 
		v^* \ind{\calA_k}
		.
	\end{align}
	This concludes the proof.
\end{proof}

\subsection{Proof of \Cref{lem:copulas}}
\label{app:proof_lem_copulas}

\begin{proof}
	Let $\bar I_m = 1-I_m$, then
	\begin{align}
		\label{eq:P_sum_I_not}
		\P\Bigl(\sum_{m=1}^k I_m < k/2\Bigr)
		 & =
		\P\Bigl(\sum_{m=1}^k \bar I_m > k/2\Bigr)
		.
	\end{align}
	Since $\bar I_m\in\{0,1\}$ for $m=1,\dots,k$, $\sum_{m=1}^{\lfloor k/4 \rfloor} \bar I_m \leq \lfloor k/4\rfloor \leq k/4$.
	Therefore,
	\begin{align}
		\label{eq:P_H_k4}
		\P\Bigl(\sum_{m=1}^k \bar I_m > k/2\Bigr)
		 & \leq
		\P\Bigl(\sum_{m=\lfloor k/4\rfloor+1}^k \bar I_m > k/4 \Bigr)
		.
	\end{align}
	By Markov's inequality,
	\begin{align}
		\label{eq:H_k4_markov}
		\P\Bigl(\sum_{m=\lfloor k/4\rfloor+1}^k \bar I_m > k/4\Bigr)
		 & \leq
		\frac{\E\Bigl( \sum_{m=\lfloor k/4\rfloor+1}^k \bar I_m\Bigr)}{k/4}
		.
	\end{align}
	We next use $\E(\bar I_m) = \P(\bar I_m = 1) \leq cm^{-d}$, linearity of expectation, and the bound
	$\sum_{m=\ell+1}^\infty m^{-d} \leq \int_\ell^\infty x^{-d} \rmd x$ for $\ell\in\N_{\geq 1}$,
	to obtain
	\begin{align}
		\E\Bigl(\sum_{m=\lfloor k/4\rfloor+1}^k \bar I_m \Bigr)
		 & \leq
		\sum_{m=\lfloor k/4\rfloor+1}^k cm^{-d}
		\leq
		\int_{\lfloor k/4\rfloor}^\infty cx^{-d} \rmd x
		=
		\frac{c}{(d-1) \lfloor k/4\rfloor^{d-1}}
		.
	\end{align}
	By (i) $\lfloor k/4\rfloor \geq k/4 - 1 = (k-4)/4$, and (ii) $k-4 \geq k/2$ for $k \geq 8$,
	\begin{align}
		\E\Bigl(\sum_{m=\lfloor k/4\rfloor+1}^k \bar I_m \Bigr)
		 & \stackrel{\textrm{(i)}}{\leq}
		\frac{4^{d-1} c}{(d-1) (k-4)^{d-1}}
		\stackrel{\textrm{(ii)}}{\leq}
		\frac{4^{d-1}2^{d-1} c}{(d-1) k^{d-1}}
		=
		\frac{2^{3d-3} c}{(d-1) k^{d-1}}
		\label{eq:H_k4_integral}
		.
	\end{align}
	The proof is concluded from~\eqref{eq:P_sum_I_not}--\eqref{eq:H_k4_integral}.
\end{proof}

\subsection{Proof of \Cref{lem:overflow}}
\label{app:proof_lem_overflow}

\begin{proof}
	Let $F_k(x) := \P(\bar{r}_k \leq x \mid (\Eexplr{k})^c)$.
	We have $\bar{r}_k \in [\rmin,\rmax]$ with probability~1 by \Cref{line:clip} in \Cref{alg:LASED}.
	Therefore,
	\begin{align}
		\P(\Eovf{k} \mid (\Eexplr{k})^c)
		 & =
		\int_{[\rmin,\rmax]}
		\P\bigl(\Eovf{k} \mid (\Eexplr{k})^c, \, \bar{r}_k = r\bigr) \,
		\rmd F_k(r)
		.
		\label{eq:Eovf_int_r_value}
	\end{align}
	Conditional on $(\Eexplr{k})^c$, episode~$k$ of \gls{LASED} is a busy period under the $\SED{\bar{r}_k}$ policy; recall \Cref{alg:LASED}.
	Using the definition of $\Eovf{k}$ in \eqref{eq:event_overflow} and  \Cref{lem:overflow_bp},
	\begin{align}
		\P\bigl(\Eovf{k} \mid (\Eexplr{k})^c, \, \bar{r}_k = r\bigr)
		 & =
		\P_{\mathbf{0}}
		\Bigl(
		\max_{0 \leq t\leq T_{\mathbf{0}}}
		\bigl(\QSEDsingle{r}{1}(t) \vee \QSEDsingle{r}{2}(t)\bigr)
		> m(k)
		\Bigr)  \\
		 & \leq
		p_{r}^{-1} \exp\bigl(-\theta_{r} \bigl(m(k) \sqrt{(r \wedge 1)/2} - \sqrt{\xi_{r}}\bigr)\bigr)
		.
		\label{eq:Eovf_BP_bnd}
	\end{align}
	Here, $\theta_{r} $, $p_{r}$, and $\xi_{r}$ are defined in~\eqref{eq:exp_martingale_theta}, \eqref{eq:bnd_empty_before_arr}, and~\eqref{eq:bnd_phi_on_B}, respectively.
	Define
	\begin{align}
		\label{eq:def_inf_sup_p_xi_theta}
		\theta_* := \inf_{r \in [\rmin, \rmax]} \theta_r \sqrt{(r \wedge 1)/2},
		\qquad
		p_* & := \inf_{r \in [\rmin, \rmax]} p_r,
		\qquad
		\xi_* := \sup_{r \in [\rmin, \rmax]} \sqrt{\frac{\xi_r}{(r\wedge 1)/2}}
		.
	\end{align}
	Each of $\theta_r$, $p_r$, and $\xi_r$ is strictly positive and finite on $[\rmin, \rmax]$:
	$\theta_r$ is the minimum of two positive continuous functions of $r$ (using that $a_r > 0$ by~\eqref{eq:stab_cond}),
	$p_r$ is a positive power of a number in $(0,1)$,
	and $\xi_r$ is a sum of positive terms.
	All three are continuous in $r$, so the infima and suprema over the compact interval $[\rmin, \rmax]$ are attained, giving $p_*, \theta_*, \xi_* \in (0,\infty)$.

	Let $k_0$ be such that $m(k_0) > \xi_*$, which exists since $m(k)$ in \eqref{eq:def_M_k} is non-decreasing in~$k$ and $m(k) \to\infty$ as $k\to\infty$.
	By the preceding assertion that $p_* \in (0,\infty)$, and since $p_* = \inf_{r \in [\rmin,\rmax]} p_r$, we have $p_r \ge p_* > 0$ and hence $p_r^{-1} \le p_*^{-1}$ for every $r \in [\rmin,\rmax]$.
	The proof is concluded from \eqref{eq:Eovf_int_r_value}, \eqref{eq:Eovf_BP_bnd}, and \eqref{eq:def_inf_sup_p_xi_theta},
	\begin{align}
		\P(\Eovf{k} \mid (\Eexplr{k})^c)
		 & \leq
		p_*^{-1} \exp(-\theta_* (m(k)-\xi_*))
		\int_{[\rmin,\rmax]}
		\rmd F_k(r)
		=
		p_*^{-1} \exp(-\theta_* (m(k)-\xi_*))
		, \quad
		\forall \, k \geq k_0
		.
	\end{align}

\end{proof}

\subsection{Proof of \Cref{lem:bnd_arrivals_episode}}
\label{app:proof_lem_bnd_arrivals_episode}

\begin{proof}
	{\bf First moment.}
	Let $r\in\R_{>0}$ and $k\in\N_{\geq 1}$.
	We condition on $\calD_k = \{N_1^{(k)} \wedge N_2^{(k)} \geq \alpha(k)\}$.
	Using the law of total probability,
	\begin{align}
		\label{eq:bnd_EA_phases}
		\E(\Arrepi{k}\mid \bar{r}_k = r)
		 & \leq
		\max
		\bigl(
		\E(\Arrepi{k}\mid \bar{r}_k = r, \calD_k), \
		\E(\Arrepi{k}\mid \bar{r}_k = r, \calD_k^c)
		\bigr)
		.
	\end{align}

	On $\calD_k$, episode~$k$ has no exploration phase; see \Cref{line:check_dep} in \Cref{alg:LASED}.
	In this case, episode~$k$ is exactly one busy period where customers are routed according to \SED{\bar{r}_k}; see \Crefrange{line:begin_exploitation}{line:end_exploitation}.
	Thus,
	\begin{align}
		\label{eq:A_epi_SED}
		\E(\Arrepi{k}\mid \bar{r}_k = r, \calD_k) = \E_{(0,0)}(\ArrSED{r})
		,
	\end{align}
	where $\ArrSED{r}$ denotes the number of arrivals in one busy period of the $\QSED{r}(\cdot)$ process as introduced in \Cref{sec:SED_policy}, and $\E_\bfa$ denotes expectation conditioned on $\QSED{r}(0) = \bfa$.

	Suppose that $\calD_k^c$ occurs.
	The number of customers routed to server~$i$ during exploration, see \Cref{line:forced_exploration} in \Cref{alg:LASED}, is then
	\begin{align}
		\bigl(\lceil \alpha(k)\rceil - N_i^{(k)}\bigr) \vee 1
		 & \stackrel{\eqref{eq:dep_geq_alpha}}{\leq}
		\bigl(\lceil \alpha(k)\rceil - \alpha(k-1)\ind{k\geq 2}\bigr) \vee  1 \\
		 & \leq
		1 + \alpha(k) - \alpha(k-1) \ind{k\geq 2}                             \\
		 & \stackrel{\eqref{eq:def_alpha_k}}{=}
		1 + \lceil \ln(k+1)^4 \rceil - \lceil \ln(k)^4 \rceil \ind{k\geq 2}   \\
		 & \stackrel{\textrm{(i)}}{\leq}
		7
		.
	\end{align}
	Here, $\textrm{(i)}$ follows since $\log(m+1)^4 - \log(m)^4 \leq 6$ for any $m\geq 1$.
	Since there are two servers, the number of arrivals during exploration is bounded by $14$.
	As soon as there have been $\bigl(\lceil \alpha(k)\rceil - N_i^{(k)}\bigr) \vee 1$ arrivals, the exploration phase ends.
	The exploitation phase of episode~$k$ then starts with at most $7$ customers in each queue, and ends when the system is empty.
	We therefore have
	\begin{align}
		\label{eq:A_epi_SED_bnd}
		\E(\Arrepi{k}\mid \bar{r}_k = r, \calD_k^c)
		 & \leq
		14 + \E_{(7,7)}(\ArrSED{r})
		.
	\end{align}

	Substituting~\eqref{eq:A_epi_SED} and~\eqref{eq:A_epi_SED_bnd} into~\eqref{eq:bnd_EA_phases},
	\begin{align}
		\E(\Arrepi{k}\mid \bar{r}_k = r)
		 & \leq
		\max\bigl(\E_{(0,0)}(\ArrSED{r}), \ 14 + \E_{(7,7)}(\ArrSED{r})\bigr)
		.
		\label{eq:E_A_epi_split}
	\end{align}

	Recall that $T_{\mathbf{0}}$ denotes the hitting time of $\mathbf{0}$ of $\QSED{r}$.
	Let $M(t)$ denote the arrival counting process with rate $\lambda$, then $\E_{\bfx}(\ArrSED{r}) = \E_{\bfx}(M(T_{\mathbf{0}}))$ for $\bfx\in\N_{\geq 0}^2$.
	Since $M(t)-\lambda t$ is a martingale \citep[Prob.~3.4]{Karatzas1998} and $T_{\mathbf{0}} \wedge n$ is a bounded stopping time, we have by the \gls{OST} \citep[Thm.~3.22]{Karatzas1998}
	$
		\E_{\bfx}\bigl(M(T_{\mathbf{0}} \wedge n)\bigr)
		=
		\lambda \E_{\bfx}\bigl(T_{\mathbf{0}} \wedge n \bigr)
	$
	for all  $n\in\N$ and $\bfx\in\N_{\geq 0}^2$.
	Letting $n\to\infty$ and using the monotone convergence theorem gives
	\begin{align}
		\E_{\bfx}(\ArrSED{r}) & =\E_{\bfx}(M(T_{\mathbf{0}})) = \lambda\E_{\bfx}(T_{\mathbf{0}}),
		\qquad \bfx \in\N_{\geq 0}^2
		.
		\label{eq:poi_E_A_B}
	\end{align}
	Applying~\eqref{eq:poi_E_A_B} in~\eqref{eq:E_A_epi_split} gives
	\begin{align}
		\E(\Arrepi{k}\mid \bar{r}_k = r)
		 & \leq
		\max\bigl(\lambda \E_{(0,0)}(T_{\mathbf{0}}), 14 + \lambda \E_{(7,7)}(T_{\mathbf{0}})\bigr)
		.
		\label{eq:E_A_epi_split_Br}
	\end{align}
	Note that $\beta_r$ in~\eqref{eq:bnd_B_final} is non-decreasing in $\bfq$ by monotonicity of $\phi_r$  and since $a_r>0$ by assumption~\eqref{eq:stab_cond}.
	Hence, applying~\Cref{lem:E_B_bnd} concludes the proof  of~\eqref{eq:bnd_E_arr_epi_k},
	\begin{align}
		\E(\Arrepi{k}\mid \bar{r}_k = r)
		 & \leq
		\max\bigl(\lambda \beta_r(0,0), \ 14 + \lambda \beta_r(7,7)\bigr)
		\leq
		14 + \lambda \beta_r(7,7)
		.
	\end{align}

	{\bf Second moment.}
	Similar to before,
	\begin{align}
		\E\bigl((\Arrepi{k})^2 \mid \bar{r}_k = r, \calD_k\bigr)      & =\E_{(0,0)}\bigl((\ArrSED{r})^2\bigr),
		\label{eq:A2_epi_D}
		\\
		\E\bigl((\Arrepi{k})^2 \mid \bar{r}_k = r, (\calD_k)^c \bigr) & \leq \E_{(7,7)}\bigl((14+\ArrSED{r})^2\bigr)  \notag \\
		                                                              & =
		196 + 28 \E_{(7,7)}(\ArrSED{r}) + \E_{(7,7)}\bigl((\ArrSED{r})^2\bigr)
		\label{eq:A2_epi_Dc}
		.
	\end{align}
	Recall that $M(t)$ is the arrival counting process and that $\E_{\bfx}((\ArrSED{r})^2) = \E_{\bfx}(M(T_{\mathbf{0}})^2)$.
	Since $M(t)$ is a Poisson process with rate $\lambda$, $(M(t)-\lambda t)^2 - \lambda t$ is a martingale (see, e.g., Definition~1.5.3 and Example~1.5.4 in \citep{Karatzas1998}).
	By the \gls{OST} \citep[Thm.~3.22]{Karatzas1998},
	\begin{align}
		\E_{\bfx}\bigl(\bigl(M(T_{\mathbf{0}} \wedge n)-\lambda (T_{\mathbf{0}} \wedge n)\bigr)^2\bigr) =  \lambda \E_{\bfx} (T_{\mathbf{0}} \wedge n)
		,
		\qquad n\in\N, \ \bfx \in\N_{\geq 0}^2
		.
	\end{align}
	Hence, using also that $(a+b)^2 \leq 2a^2 + 2b^2$,
	\begin{align}
		\E_{\bfx}\bigl(M(T_{\mathbf{0}} \wedge n)^2\bigr)
		 & \leq
		2\E_{\bfx}\bigl(\bigl(M(T_{\mathbf{0}} \wedge n)-\lambda (T_{\mathbf{0}} \wedge n)\bigr)^2\bigr)
		+
		2 \lambda^2 \E_{\bfx}\bigl((T_{\mathbf{0}} \wedge n)^2\bigr) \\
		 & =
		2 \lambda \E_{\bfx} (T_{\mathbf{0}} \wedge n)
		+
		2 \lambda^2 \E_{\bfx}\bigl((T_{\mathbf{0}} \wedge n)^2\bigr)
		,
		\qquad n\in\N, \ \bfx \in\N_{\geq 0}^2
		.
	\end{align}
	Letting $n\to\infty$ and using the monotone convergence theorem, we find
	\begin{align}
		\E_{\bfx}((\ArrSED{r})^2)
		 & =
		\E_{\bfx}(M(T_{\mathbf{0}})^2)
		\leq
		2 \lambda \E_{\bfx} (T_{\mathbf{0}})
		+
		2 \lambda^2 \E_{\bfx}\bigl(T_{\mathbf{0}}^2\bigr)
		.
		\label{eq:poi_marting_sq}
	\end{align}
	Using~\eqref{eq:poi_E_A_B} and~\eqref{eq:poi_marting_sq} in~\eqref{eq:A2_epi_D} and~\eqref{eq:A2_epi_Dc} gives
	\begin{align}
		\E\bigl((\Arrepi{k})^2 \mid \bar{r}_k = r, \calD_k\bigr)
		 & \leq
		2 \lambda \E_{(0,0)}\bigl(T_{\mathbf{0}}\bigr) + 2 \lambda^2 \E_{(0,0)}\bigl(T_{\mathbf{0}}^2\bigr), \\
		\E\bigl((\Arrepi{k})^2 \mid \bar{r}_k = r, (\calD_k)^c \bigr)
		 & \leq
		196 + 30 \lambda \E_{(7,7)}(T_{\mathbf{0}})  + 2 \lambda^2 \E_{(7,7)}\bigl(T_{\mathbf{0}}^2\bigr)
		.
	\end{align}
	Applying \Cref{lem:E_B_bnd} and using that $\gamma_r(\bfq)$ in~\eqref{eq:bnd_B_squared_final} is non-decreasing in $q_1$ and $q_2$ concludes the proof of~\eqref{eq:bnd_E2_arr_epi_k},
	\begin{align}
		\E\bigl((\Arrepi{k})^2 \mid \bar{r}_k = r\bigr)
		 & \leq
		\max
		\bigl(
		\E\bigl((\Arrepi{k})^2 \mid \bar{r}_k = r, \calD_k\bigr), \E\bigl((\Arrepi{k})^2 \mid \bar{r}_k = r, \calD_k^c \bigr)
		\bigr)  \\
		 & \leq
		196 + 30 \lambda \beta_r(7,7)  + 2\lambda^2 \gamma_r(7,7)
		.
	\end{align}

\end{proof}

\end{document}